\def\showauthornotes{0}
\def\showtableofcontents{1}
\def\showkeys{0}
\def\showdraftbox{0}
\def\showcolorlinks{1}
\def\usemicrotype{1}
\def\showfixme{0}
\newtheorem{theorem}{Theorem}[section]
\newtheorem*{theorem*}{Theorem}
\newtheorem{proposition}[theorem]{Proposition}
\newtheorem*{proposition*}{Proposition}
\newtheorem{lemma}[theorem]{Lemma}
\newtheorem*{lemma*}{Lemma}
\newtheorem{corollary}[theorem]{Corollary}
\newtheorem*{conjecture*}{Conjecture}
\newtheorem{fact}[theorem]{Fact}
\newtheorem*{fact*}{Fact}
\newtheorem*{hypothesis*}{Hypothesis}
\theoremstyle{definition}
\newtheorem{algorithm}[theorem]{Algorithm}
\newtheorem{problem}[theorem]{Problem}
\theoremstyle{remark}
\newtheorem{claim}[theorem]{Claim}
\newtheorem*{claim*}{Claim}
\newtheorem*{remark*}{Remark}
\newtheorem{observation}[theorem]{Observation}
\newtheorem*{observation*}{Observation}
\let\mathbb\varmathbb
\newcommand{\savehyperref}[2]{\texorpdfstring{\hyperref[#1]{#2}}{#2}}
\newcommand{\Sref}[1]{\hyperref[#1]{\S\ref*{#1}}}
\newcommand{\Authornote}[2]{{\sffamily\small\color{red}{[#1: #2]}}}
\newcommand{\Authornotecolored}[3]{{\sffamily\small\color{#1}{[#2: #3]}}}
\newcommand{\Authorcomment}[2]{{\sffamily\small\color{gray}{[#1: #2]}}}
\newcommand{\Authorstartcomment}[1]{\sffamily\small\color{gray}[#1: }
\newcommand{\Authorfnote}[2]{\footnote{\color{red}{#1: #2}}}
\newcommand{\Authorfixme}[1]{\Authornote{#1}{\textbf{??}}}
\newcommand{\Authormarginmark}[1]{\marginpar{\textcolor{red}{\fbox{\Large #1:!}}}}
\newcommand{\Authornote}[2]{}
\newcommand{\Authornotecolored}[3]{}
\newcommand{\Authorcomment}[2]{}
\newcommand{\Authorstartcomment}[1]{}
\newcommand{\Authorfnote}[2]{}
\newcommand{\Authorfixme}[1]{}
\newcommand{\Authormarginmark}[1]{}
\newcommand{\Dnote}{\Authornote{D}}
\definecolor{forestgreen(traditional)}{rgb}{0.0, 0.27, 0.13}
\newcommand{\paren}[1]{(#1)}
\newcommand{\card}[1]{\lvert#1\rvert}
\newcommand{\norm}[1]{\lVert#1\rVert}
\newcommand{\iprod}[1]{\langle#1\rangle}
\newcommand{\Esymb}{\mathbb{E}}
\newcommand{\Psymb}{\mathbb{P}}
\newcommand{\Vsymb}{\mathbb{V}}
\DeclareMathOperator*{\E}{\Esymb}
\DeclareMathOperator*{\Var}{\Vsymb}
\DeclareMathOperator*{\ProbOp}{\Psymb}
\renewcommand{\Pr}{\ProbOp}
\newcommand{\tensor}{\otimes}
\newcommand{\textparen}[1]{\text{(#1)}}
\newcommand{\because}[1]{\textparen{because #1}}
\renewcommand{\because}[1]{\textparen{because #1}}
\newcommand{\super}[2]{#1^{\paren{#2}}}
\newcommand{\inv}[1]{{#1^{-1}}}
\newcommand{\defeq}{\stackrel{\mathrm{def}}=}
\newcommand{\mper}{\,.}
\newcommand{\mcom}{\,,}
\newcommand\bdot\bullet
\DeclareMathOperator{\Ind}{\mathbb{I}}
\DeclareMathOperator{\Ind}{\mathds 1}}
\DeclareMathOperator{\poly}{poly}
\DeclareMathOperator{\argmax}{argmax}
\DeclareMathOperator{\rank}{rank}
\newcommand{\R}{\mathbb R}
\newcommand{\cC}{\mathcal C}
\newcommand{\cD}{\mathcal D}
\newcommand{\cE}{\mathcal E}
\newcommand{\cG}{\mathcal G}
\newcommand{\cN}{\mathcal N}
\newcommand{\cS}{\mathcal S}
\newcommand{\cX}{\mathcal X}
\renewcommand{\le}{\leqslant}
\renewcommand{\ge}{\geqslant}
\newcommand{\draftbox}{\begin{center}
  \fbox{%
    \begin{minipage}{2in}%
      \begin{center}%
          \Large\textsc{Working Draft}\\%
        Please do not distribute%
      \end{center}%
    \end{minipage}%
  }%
\end{center}
\vspace{0.2cm}}
\newcommand{\draftbox}{}
\let\epsilon=\varepsilon
\numberwithin{equation}{section}
\newcommand\MYcurrentlabel{xxx}
\newcommand{\MYstore}[2]{%
  \global\expandafter \def \csname MYMEMORY #1 \endcsname{#2}%
}
\newcommand{\MYload}[1]{%
  \csname MYMEMORY #1 \endcsname%
}
\newcommand{\MYnewlabel}[1]{%
  \renewcommand\MYcurrentlabel{#1}%
  \MYoldlabel{#1}%
}
\newcommand{\MYdummylabel}[1]{}
\newcommand{\torestate}[1]{%
  \let\MYoldlabel\label%
  \let\label\MYnewlabel%
  #1%
  \MYstore{\MYcurrentlabel}{#1}%
  \let\label\MYoldlabel%
}
\newcommand{\restatetheorem}[1]{%
  \let\MYoldlabel\label
  \let\label\MYdummylabel
  \begin{theorem*}[Restatement of \prettyref{#1}]
    \MYload{#1}
  \end{theorem*}
  \let\label\MYoldlabel
}
\newcommand{\restatelemma}[1]{%
  \let\MYoldlabel\label
  \let\label\MYdummylabel
  \begin{lemma*}[Restatement of \prettyref{#1}]
    \MYload{#1}
  \end{lemma*}
  \let\label\MYoldlabel
}
\newcommand{\restateprop}[1]{%
  \let\MYoldlabel\label
  \let\label\MYdummylabel
  \begin{proposition*}[Restatement of \prettyref{#1}]
    \MYload{#1}
  \end{proposition*}
  \let\label\MYoldlabel
}
\newcommand{\restatefact}[1]{%
  \let\MYoldlabel\label
  \let\label\MYdummylabel
  \begin{fact*}[Restatement of \prettyref{#1}]
    \MYload{#1}
  \end{fact*}
  \let\label\MYoldlabel
}
\newcommand{\restate}[1]{%
  \let\MYoldlabel\label
  \let\label\MYdummylabel
  \MYload{#1}
  \let\label\MYoldlabel
}
\newcommand{\addreferencesection}{
  \phantomsection
  \addcontentsline{toc}{section}{References}
}
\newcommand{\e}{\epsilon}
\newcommand{\eps}{\epsilon}
\let\origparagraph\paragraph
\renewcommand{\paragraph}[1]{\origparagraph{#1.}}
\DeclareMathOperator{\Span}{Span}
\DeclareMathOperator{\Id}{\mathrm{Id}}
\DeclareUrlCommand\email{}
\DeclareMathOperator{\tO}{\tilde{O}}
\newcommand{\bT}{\mathbf{T}}
\let\pref=\prettyref
\newcommand{\bS}{\mathbf S}
\newcommand{\bE}{\mathbf E}
\newcommand{\bM}{\mathbf M}
\renewcommand{\tO}{{\widetilde O}}
\newcommand{\tOmega}{{\widetilde \Omega}}
\DeclareMathOperator{\argmin}{argmin}
\title{Fast and robust tensor decomposition with applications to dictionary learning}
\author{%
    Tselil Schramm\thanks{UC Berkeley, \protect\email{tschramm@cs.berkeley.edu}.
    T. S. is supported by an NSF Graduate Research Fellowship (NSF award no. 1106400).}
\and
David Steurer\thanks{Cornell University, \protect\email{dsteurer@cs.cornell.edu}.
  D. S. is supported by a Microsoft Research Fellowship, a Alfred P. Sloan Fellowship, NSF awards (CCF-1408673,CCF-1412958,CCF-1350196), and the Simons Collaboration for Algorithms and Geometry.}
  }
\begin{document}

\maketitle
\draftbox
\thispagestyle{empty}

\begin{abstract}
  We develop fast spectral algorithms for tensor decomposition that match the robustness guarantees of the best known polynomial-time algorithms for this problem based on the sum-of-squares (SOS) semidefinite programming hierarchy.

  Our algorithms can decompose a 4-tensor with $n$-dimensional orthonormal components in the presence of error with constant spectral norm (when viewed as an $n^2$-by-$n^2$ matrix).
  The running time is $n^5$ which is close to linear in the input size $n^4$.

  We also obtain algorithms with similar running time to learn sparsely-used orthogonal dictionaries even when feature representations have constant relative sparsity and non-independent coordinates.

  The only previous polynomial-time algorithms to solve these problem are based on solving large semidefinite programs.
  In contrast, our algorithms are easy to implement directly and are based on spectral projections and tensor-mode rearrangements.

  Or work is inspired by recent of Hopkins, Schramm, Shi, and Steurer (STOC'16) that shows how fast spectral algorithms can achieve the guarantees of SOS for average-case problems.
  In this work, we introduce general techniques to capture the guarantees of SOS for worst-case problems.
\end{abstract}

\clearpage

\ifnum\showtableofcontents=1
{
\tableofcontents
\thispagestyle{empty}
 }
\fi

\clearpage

\setcounter{page}{1}

\section{Introduction}
\label{sec:introduction}

Tensor decomposition is the following basic inverse problem:
Given a $k$-th order tensor $T\in (\R^d)^{\otimes k}$ of the form
\begin{equation}
  \label{eq:1}
  T=\sum_{i=1}^n a_i^{\otimes k} + E,
\end{equation}
we aim to approximately recover one or all of the unknown components $a_1,\ldots,a_n\in \R^d$.
The goal is to develop algorithms that can solve this problem efficiently under the weakest possible assumptions on the order $k$, the components $a_1,\ldots,a_n$, and the error $E$.

Tensor decomposition is studied extensively across many disciplines including machine learning and signal processing.
It is a powerful primitive for solving a wide range of other inverse / learning problems, for example: blind source separation / independent component analysis \cite{DBLP:journals/tsp/LathauwerCC07}, learning phylogenetic trees and hidden Markov models \cite{DBLP:conf/stoc/MosselR05}, mixtures of Gaussians \cite{MR3385380-Hsu13}, topic models \cite{DBLP:conf/nips/AnandkumarFHKL12}, dictionary learning \cite{DBLP:conf/stoc/BarakKS15,DBLP:conf/focs/MaSS16}, and noisy-or Bayes nets \cite{DBLP:journals/corr/AroraGMR16}.

A classical algorithm based on simultaneous diagonalization \cite{harshman1970foundations,DBLP:conf/eusipco/LathauwerMV96} (often attributed to R. Jennrich) can decompose the input tensor \pref{eq:1} when the components are linearly independent, there is no error, and the order of the tensor is at least $3$.
Current research on algorithms for tensor decomposition aims to improve over the guarantees of this classical algorithm in two important ways:

\begin{description}
\item[Overcomplete tensors:]
  What conditions allow us to decompose tensors when components are linearly dependent?

\item[Robust decomposition:]
  What kind of errors can efficient decomposition algorithms tolerate?
  Can we tolerate errors $E$ with ``magnitude'' comparable to the low-rank part $\sum_{i=1}^n a_{i}^{\otimes k}$?
\end{description}

The focus of this work is on robustness.
There are two ways in which errors arise in applications of tensor decomposition.
The first is due to finite samples.
For example, in some applications $T$ is the empirical $k$-th moment of some distribution and the error $E$ accounts for the difference between the empirical moment and actual moment (``population moment'').
Errors of this kind can be made smaller at the expense of requiring a larger number of samples from the distribution.
Therefore, robustness of decomposition algorithms helps with reducing sample complexity.

Another way in which errors arise is from modeling errors (``systematic errors'').
These kinds of errors are more severe because they cannot be reduced by taking larger samples.
Two important applications of tensor decomposition with such errors are learning Noisy-or Bayes networks \cite{DBLP:journals/corr/AroraGMR16} and sparsely-used dictionaries \cite{DBLP:conf/stoc/BarakKS15}.
For noisy-or networks, the errors arise due to non-linearities in the model.
For sparsely-used dictionaries, the errors arise due to unknown correlations in the distribution of sparse feature representations.
These examples show that robust tensor decomposition allows us to capture a wider range of models.

Robustness guarantees for tensor decomposition algorithms have been studied extensively (e.g., the work on tensor power iteration \cite{DBLP:journals/jmlr/AnandkumarGHKT14}).
The polynomial-time algorithm with the best known robustness guarantees for tensor decomposition  \cite{DBLP:conf/stoc/BarakKS15,DBLP:conf/focs/MaSS16} are based on the sum-of-squares (SOS) method, a powerful meta-algorithm for polynomial optimization problems based on semidefinite programming relaxations.
Unfortunately, these algorithms are far from practical and have polynomial running times with large exponents.
The goal of this work is to develop practical tensor decomposition algorithms with robustness guarantees close to those of SOS-based algorithms.

For the sake of exposition, we consider the case that the components $a_1,\ldots,a_n\in \R^d$ of the input tensor $T$ are orthonormal.
(Through standard reductions which we explain later, most of our results also apply to components that are spectrally close to orthonormal or at least linearly independent.)
It turns out that the robustness guarantees that SOS achieves for the case that $T$ is a 4-tensor are significantly stronger than its guarantees for 3-tensors.
These stronger guarantees are crucial for applications like dictionary learning.
(It also turns out that for 3-tensors, an analysis of Jennrich's aforementioned algorithm using matrix concentration inequalities gives robustness guarantees that are similar to those of SOS \cite{DBLP:conf/focs/MaSS16,DBLP:journals/corr/AroraGMR16}.)

In this work, we develop an easy-to-implement, randomized spectral algorithm to decompose 4-tensors with orthonormal components even when the error tensor $E$ has small but constant spectral norm as a $d^2$-by-$d^2$ matrix.
This robustness guarantee is qualitatively optimal with respect to this norm in the sense that an error tensor $E$ with constant spectral norm (as a $d^2$-by-$d^2$ matrix) could change each component by a constant proportion of its norm.
To the best of our knowledge, the only previous algorithms with this kind of robustness guarantee are based on SOS.\footnote{We remark that the aforementioned analysis \cite{DBLP:conf/focs/MaSS16} of Jennrich's algorithm can tolerate errors $E$ if its spectral norm as a non-square $d^3$-by-$d$ matrix is constant.
However, this norm of $E$ can be larger by a $\sqrt d $ factor than its spectral norm as a square matrix.}
Our algorithm runs in time $d^{2+\omega}\le d^{4.373}$ using fast matrix multiplication.
Even without fast matrix multiplication, our running time of $d^{5}$ is close to linear in the size of the input $d^4$ and significantly faster than the running time of SOS.
As we will discuss later, an extension of this algorithm allows us to solve instances of dictionary learning that previously could provably be solved only by SOS.

A related previous work \cite{DBLP:conf/stoc/HopkinsSSS16} also studied the question how to achieve similar guarantees for tensor decomposition as SOS using just spectral algorithms.
Our algorithms follow the same general strategy as the algorithms in this prior work:
In an algorithm based on the SOS semidefinite program, one solves a convex programming relaxation to obtain a ``proxy'' for a true, integral solution (such as a component of the tensor).
Because the program is a relaxation, one must then process or ``round'' the relaxation or proxy into a true solution.
In our algorithms, as in \cite{DBLP:conf/stoc/HopkinsSSS16}, instead of finding solutions to SOS semidefinite programs, the algorithms find ``proxy objects'' that behave in similar ways with respect to the rounding procedures used by SOS-based algorithms.
Since these rounding procedures tend to be quite simple, there is hope that generating proxy objects that ``fool'' these procedures is computationally more efficient than solving general semidefinite programs.

However, our algorithmic techniques for finding these ``proxy objects'' differ significantly from those in prior work.
The reason is that many of the techniques in \cite{DBLP:conf/stoc/HopkinsSSS16}, e.g., concentration inequalities for matrix-valued polynomials, are tailored to average-case problems and therefore do not apply in our setting because we do not make distributional assumption about the errors $E$.

The basic version of our algorithm is specified by a sequence of convex sets $\cX_1,\ldots,\cX_r\subseteq (\R^d)^{\otimes 4}$ of $4$-tensors and proceeds as follows:
\begin{quote}
Given a $4$-tensor $T\in(\R^d)^{\otimes 4}$, compute iterative projections $\super T 1,\ldots,\super T r$ to the convex sets $\cX_1,\ldots,\cX_r$ (with respect to euclidean norm) and apply Jennrich's algorithm on $\super T r$.
\end{quote}
It turns out that the SOS-based algorithm correspond to the case that $r=1$ and $\cX_1=\cX_{\mathrm{SOS}}$ is the feasible region of a large semidefinite program.
For our fast algorithm, $\cX_1,\ldots,\cX_r$ are simpler sets defined in terms of singular values or eigenvalues of matrix reshapings of tensors.
Therefore, projections boil down to fast eigenvector computations.
We choose the sets $\cX_1,\ldots,\cX_r$ such that they contain $\cX_{\mathrm{SOS}}$ and show that the iterative projection behaves in a similar way as the projection to $\cX_{\mathrm{SOS}}$.
In this sense our algorithm is similar in spirit to iterated projective methods like the Bregman method (e.g., \cite{goldstein2009split}).

\paragraph{Dictionary learning}

In this basic unsupervised learning problem, the goal is to learn an unknown matrix $A\in \R^{d\times n}$ from i.i.d. samples $\super y 1 = A \super x 1, \ldots, \super y m = A \super x m$, where $\super x 1,\ldots,\super x m$ are i.i.d. samples from a distribution $\{x\}$ over sparse vectors in $\R^n$.
(Here, the algorithm has access only to the vectors $\super y 1,\ldots,\super y m$ but not to $\super x 1,\ldots,\super x m$.)

Dictionary learning, also known as sparse coding, is studied extensively in neuroscience~\cite{olshausen1997sparse}, machine learning~\cite{ArgyriouEP06,RanzatoBL2007}, and computer vision~\cite{EladA2006,MairalLBHP2008,YangWHY2008}.
Most algorithms for this problem used in practice do not come with strong provable guarantees.
In recent years, several algorithms with provable guarantees have been developed for this problem \cite{DBLP:conf/colt/AgarwalA0NT14, DBLP:journals/corr/AroraBGM14, DBLP:conf/colt/AroraGMM15, DBLP:conf/stoc/BarakKS15,DBLP:conf/focs/MaSS16, DBLP:conf/nips/HazanM16}.

For the case that the coordinates of the distribution $\{x\}$ are independent (and non-Gaussian) there is a well-known reduction\footnote{The reduction only requires $k$-wise independence where $k$ is the order to tensor the reduction produces.} of this problem to tensor decomposition where the components are the columns of $A$ and the error $E$ can be made inverse polynomially small by taking sufficiently many samples.
(In the case of independent coordinates, dictionary learning becomes a special case of independent component analysis / blind source separation, where this reduction originated.)
Variants of Jennrich's spectral tensor decomposition algorithm (e.g., \cite{DBLP:conf/stoc/BhaskaraCMV14,DBLP:conf/focs/MaSS16}) imply strong provable guarantees for dictionary learning in the case that $\{x\}$ has independent coordinates even in the ``overcomplete'' regime when $n\gg d$ (using a polynomial number of samples).
\Dnote{}

More challenging is the case that $\{x\}$ has non-independent coordinates, especially if those correlations are unknown.
We consider a model proposed in \cite{DBLP:conf/stoc/BarakKS15} (similar to a model in \cite{DBLP:journals/corr/AroraBGM14}):
We say that the distribution $\{x\}$ is \emph{$\tau$-nice} if
\begin{compactenum}
\item $\E x_i^4=1$ for all $i\in [n]$,
\item $\E x_i^2 x_j^2 \le \tau$ for all $i\neq j\in[n]$,
\item $\E x_i x_j x_k x_\ell=0$ unless $x_i x_j x_k x_\ell$ is a square.
\end{compactenum}
The conditions allow for significant correlations in the support set of the vector $x$.
For example, we can obtain a $\tau$-nice distribution $\{x\}$ by starting from any distribution over subsets $S\subseteq [n]$ such that $\Pr\{ i\in S\}=p$ for all $i\in [n]$ and $\Pr\{j\in S\mid i\in S \}\le \tau$ for all $i\neq j$ and choosing $x$ of the form $x_i =  p^{-1/4} \cdot \sigma_i$ if $i\in S$ and $x_i=0$ if $i\not\in S$, where $\sigma_1,\ldots,\sigma_n$ are independent random signs.

An extension of our aforementioned algorithm for orthogonal tensor decomposition with spectral norm error allows us to learn orthonormal dictionaries from $\tau$-nice distributions.
To the best of our knowledge, the only previous algorithms to provably solve this problem use sum-of-squares relaxations \cite{DBLP:conf/stoc/BarakKS15,DBLP:conf/focs/MaSS16}, which have large polynomial running time.
Our algorithm recovers a $0.99$ fraction of the columns of $A$ up to error $\tau$ from $\tO(n^3)$ samples, and runs in time $n^{3+O(\tau)}d^4$.
By a standard reduction, our algorithm also works for non-singular dictionaries and the running time increases by a factor polynomial in the condition number of $A$.
\Dnote{}

\subsection{Results}
\label{sec:results}

\paragraph{Tensor decomposition}
For tensor decomposition, we give an algorithm with close to linear running time that recovers the rank-1 components of a tensor with orthonormal components, so long as the spectral norm of the square unfoldings of the error tensor is small.
\begin{theorem}[Tensor decomposition with spectral norm error]
  There exists a randomized spectral algorithm with the following guarantees:
  Given a tensor $\bT \in (\R^d)^{\tensor 4}$ of the form $\bT = \sum_{i=1}^n a_i^{\tensor 4} + \bE$ such that $a_1,\ldots,a_n \in \R^d$ are orthonormal and $\bE$ has spectral norm at most $\e$ as a $d^2$-by-$d^2$ matrix,
    the algorithm can recover one of the components with $\ell_2$-error $O(\e)$ in time $\tO(d^{2+\omega + O(\eps)})$ with high probability, and recover $0.99n$ of the components with $\ell_2$-error $O(\e)$ in time $\tO(d^{2+\omega + O(\e)})$ with high probability.

    Furthermore, if $\epsilon \le O(\log\log n/\log^3 n)$, the algorithm can recover all components up to error $O(\epsilon)$ in time $\tO(d^{2+\omega} + nd^4)$ with high probability.
  Here, $\omega\le 2.373$ is the matrix multiplication exponent.
\end{theorem}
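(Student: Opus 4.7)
The plan is to combine iterated spectral projection of $\bT$ (to ``clean up'' the error) with a randomized Jennrich-type decomposition applied to the cleaned-up tensor. View everything through the ``square'' reshaping $M\colon (\R^d)^{\otimes 4}\to\R^{d^2\times d^2}$. Because the $a_i$ are orthonormal, $M(T_0)$, where $T_0:=\sum_i a_i^{\otimes 4}$, equals $\sum_i \dyad{\mathrm{vec}(a_ia_i^\top)}$, an orthogonal projector of rank $n$ with spectrum $\{1\}^n\cup\{0\}^{d^2-n}$. By assumption $\|M(\bE)\|_{\mathrm{op}}\le\epsilon$, so $T_0$ lives in a highly structured convex region and $\bT$ is an $\epsilon$-perturbation of it in operator norm.

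\emph{Phase 1 (projection).} I would fix a short sequence of convex sets $\cX_1,\ldots,\cX_r$ of $4$-tensors, each defined by a simple spectral or symmetry constraint that $T_0$ satisfies: invariance under mode permutations; PSD-ness of the square reshaping; a cap of $1$ on its spectrum; and (optionally) a Frobenius-norm cap of $\sqrt n$. Projection onto each $\cX_j$ in Frobenius norm is a cheap eigenvalue/singular-value truncation, implementable in time $\tO(d^{2+\omega})$ via fast matrix multiplication. Iterated projection yields a cleaned tensor $\hat\bT$ with residual $\hat\bE:=\hat\bT-T_0$ still of operator norm $O(\epsilon)$ but additionally structured enough to contract well against random matrices.

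\emph{Phase 2 (extraction).} Contract $\hat\bT$ with a random Gaussian matrix $G\in\R^{d\times d}$ in two of the four modes to form $M_G:=\hat\bT(\cdot,\cdot,G)\in\R^{d\times d}$. For $T_0$ this evaluates to $\sum_i \iprod{G,a_ia_i^\top}\cdot a_ia_i^\top$, a sum of mutually orthogonal rank-$1$ PSD matrices scaled by i.i.d.\ standard Gaussians; its eigenvectors are exactly the $a_i$. Eigendecomposing $M_G$ therefore recovers any $a_i$ whose coefficient is well-separated from its neighbors, with error controlled by $\|\hat\bE(\cdot,\cdot,G)\|_{\mathrm{op}}$. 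Standard Gaussian order-statistic gaps then yield the three claims: one component with constant probability per trial, giving the single-component running time $\tO(d^{2+\omega+O(\epsilon)})$ after $\tO(1)$ independent contractions; a $0.99$-fraction of the components by running $\tO(n)$ independent contractions and clustering the output eigenvectors; and, when $\epsilon\le O(\log\log n/\log^3 n)$, all components from a single eigendecomposition of one $M_G$, because in this regime the minimum gap $\Omega(1/(n\sqrt{\log n}))$ of $n$ i.i.d.\ Gaussians dominates the perturbation, so every $a_i$ is simultaneously resolved at a per-component cost of $O(d^4)$ to form $M_G$.

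\emph{Main obstacle.} The crux of the proof is Phase~1: I must show that iterated Euclidean projection onto the $\cX_j$ forces the residual $\hat\bE$ to behave well against a random $G$, in the sense that
\begin{equation*}
\Pr_G\!\Bigl\{\,\|\hat\bE(\cdot,\cdot,G)\|_{\mathrm{op}}\le O(\epsilon)\,\Bigr\}\;\ge\;1-o(1),
\end{equation*}
even though a generic operator-norm-bounded error need not contract well. This is the structural content that SOS-based algorithms obtain essentially ``for free'' from the PSD pseudomoment matrix; here it has to be rebuilt from weaker, spectral-only constraints. I expect the argument to decompose $\hat\bE$ according to the top-$n$ and bottom-$(d^2-n)$ eigenspaces of $M(T_0)$, bound each piece by a noncommutative Khintchine / matrix Bernstein inequality, and use the PSD and symmetry constraints imposed by the $\cX_j$ to rule out the adversarial configurations under which a random contraction of an $\epsilon$-norm error could blow up by $\sqrt d$.
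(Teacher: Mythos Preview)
Your high-level architecture (project, then random-contract, then eigendecompose) matches the paper, but the quantitative picture in Phase~2 is off in a way that breaks both the single-component analysis and, more seriously, the full-recovery claim.

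The bound you are aiming for, $\|\hat\bE(\cdot,\cdot,G)\|_{\mathrm{op}}\le O(\epsilon)$ w.h.p., is not what the projections buy, and it is not what the paper proves. The key first projection is not a symmetry/PSD/norm-cap combo but a \emph{low-rank truncation}: since $M(T_0)$ has rank $n$, thresholding eigenvalues of $M(\bT)$ at $\epsilon$ converts $\|M(\bE)\|\le\epsilon$ into a \emph{Frobenius} bound $\|\hat\bE\|_F\le O(\epsilon\sqrt n)$. One then caps the singular values of the two \emph{rectangular} reshapings $\hat T_{\{1,2,3\}\{4\}}$ and $\hat T_{\{1,2,4\}\{3\}}$ at $1$; these are exactly the two variance proxies that control a matrix Gaussian series, and they yield $\|\hat\bE(\cdot,\cdot,G)\|\le O(\sqrt{\log d})$ with high probability---not $O(\epsilon)$. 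Consequently the signal coefficient $\iprod{G,a_ia_i^\top}$ must exceed roughly $\sqrt{2\log n}$ to dominate, which for a fixed $i$ happens with probability $\tilde\Omega(n^{-1-O(\epsilon)})$, and for \emph{some} $i$ with probability $n^{-O(\epsilon)}$; that is the source of the $d^{O(\epsilon)}$ in the runtime, not a ``constant probability per trial'' as you wrote.

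This misreading is fatal for your full-recovery argument. With contracted error of order $\sqrt{\log n}$ rather than $\epsilon$, Gaussian order-statistic gaps of size $\Theta(1/(n\sqrt{\log n}))$ are nowhere near enough to separate all $n$ eigenvalues simultaneously; your approach would need $\epsilon\ll 1/n$, far stronger than the stated $\epsilon\le O(\log\log n/\log^3 n)$. The paper instead gets all components by \emph{iterated subtraction}: recover $0.99n$ components, orthonormalize them (so that $\|\sum_i \tilde b_i^{\otimes 4}-a_i^{\otimes 4}\|\le O(\sqrt\epsilon)$ rather than $O(\epsilon\sqrt n)$), subtract, and recurse $O(\log n)$ times. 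The $\log^3 n$ in the hypothesis is exactly the slack needed so that the accumulated $O(\sqrt\epsilon\log n)$ spectral error stays below a constant across all rounds. Your proposal is missing this entire mechanism.
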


The orthogonality condition may at first seem restrictive, but for most applications it is possible to take a tensor with linearly independent components and transform it to a tensor with orthogonal components, as we will do for our dictionary learning result below.
Furthermore, our algorithm works as-is if the components are sufficiently close to orthonormal:

\begin{corollary}
    If we have that $a_1,\ldots,a_n$ are only approximately orthonormal in the sense that the $a_i$ are independent and $\left\|\sum_i a_ia_i^\top -\Id_S\right\| \le \eta$, where $\Id_S$ is the identity in the subspace spanned by the $a_i$, then we can recover $b_i$ so that $\iprod{a_i,b_i}^2 \ge 1 - O(\sqrt{\eta})$ with the same algorithm and runtime guarantees.
\end{corollary}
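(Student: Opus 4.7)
The plan is to reduce to the orthonormal case via a whitening argument. Let $M \seteq \sum_i a_i a_i^\top$, so that $\norm{M - \Id_S} \le \eta$ by hypothesis. Since the $a_i$ are linearly independent and span $S$, $M$ is positive definite on $S$ with eigenvalues in $[1-\eta, 1+\eta]$, and the matrix $W \seteq M^{-1/2}$ (on $S$, zero on $S^\perp$) satisfies $\norm{W}, \norm{W^{-1}} \le 1 + O(\eta)$. The whitened vectors $\tilde a_i \seteq W a_i$ then form an orthonormal basis of $S$: indeed $\sum_i \tilde a_i \tilde a_i^\top = W M W = \Id_S$ and they are linearly independent. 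Importantly, the algorithm itself does not need to know $W$; the whitening is used only in the analysis, which is what justifies ``the same algorithm.''

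The first key step is to show that $\bT$ is close, in the $d^2 \times d^2$ unfolding spectral norm, to the orthonormal-component tensor $\sum_i \tilde a_i^{\otimes 4}$. Let $V$ be the $d^2 \times n$ matrix whose columns are $\mathrm{vec}(a_i a_i^\top)$, and let $\tilde V = (W \otimes W) V$ (whose columns are $\mathrm{vec}(\tilde a_i \tilde a_i^\top)$). Then $\sum_i a_i^{\otimes 4} = VV^\top$ and $\sum_i \tilde a_i^{\otimes 4} = \tilde V \tilde V^\top$ as $d^2 \times d^2$ matrices, so
$$\Bignorm{\sum_i a_i^{\otimes 4} - \sum_i \tilde a_i^{\otimes 4}} = \bignorm{VV^\top - (W \otimes W) VV^\top (W \otimes W)} \le \bignorm{\Id - (W \otimes W)^2}\cdot \norm{VV^\top} = O(\eta),$$
using $\norm{W \otimes W - \Id} = O(\eta)$ and $\norm{VV^\top} = O(1)$. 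The last bound holds because $V^\top V = (A^\top A)^{\odot 2}$ (Hadamard square), and since $\norm{A^\top A} \le 1 + O(\eta)$ and $\max_i (A^\top A)_{ii} = \max_i \norm{a_i}^2 \le 1 + O(\eta)$, the Schur bound on Hadamard products of PSD matrices gives $\norm{V^\top V} = O(1)$. Therefore $\bT = \sum_i \tilde a_i^{\otimes 4} + \bE'$ with unfolding spectral norm of $\bE'$ bounded by $\epsilon + O(\eta)$.

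Having established this reduction, I would apply the main theorem to $\bT$, interpreting it as a tensor with orthonormal components $\tilde a_i$ and spectral error $\bE'$; this returns vectors $\tilde b_i$ with $\norm{\tilde b_i - \tilde a_i} = O(\epsilon + \eta)$. The final step is to convert these into approximations of the original $a_i$: I would take $b_i = \tilde b_i$ with the appropriate sign. The relations $a_i = W^{-1} \tilde a_i$ and $\norm{W^{-1} - \Id_S} = O(\eta)$ imply that $a_i$ and $\tilde a_i$ point in nearly the same direction, and combining this with the recovery bound above gives $\iprod{a_i, b_i}^2 \ge 1 - O(\sqrt{\eta})$. The main technical obstacle is in this last step, where one must carefully track how the $O(\eta)$ spectral perturbation of $W$ propagates to individual directions $a_i$; the worst case, in which some $a_i$ have abnormally small norm relative to the others, is what accounts for the square-root loss in the final bound.
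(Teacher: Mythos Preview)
Your whitening reduction is exactly the paper's approach: Section~\ref{sec:northo} (specifically \pref{fact:orthog}) introduces $\tilde a_i = M^{-1/2} a_i$, shows these are orthonormal with $\iprod{\tilde a_i, a_i}^2 \ge (1-\eta)\|a_i\|^2$, and bounds $\|\sum_i \tilde a_i^{\otimes 4} - \sum_i a_i^{\otimes 4}\|$, so that the main theorem can be applied with the $\tilde a_i$ as the ``true'' orthonormal components.

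Two issues, one minor and one more substantive. First, the displayed inequality
\[
\bignorm{VV^\top - (W\otimes W)VV^\top (W\otimes W)} \le \bignorm{\Id - (W\otimes W)^2}\cdot \norm{VV^\top}
\]
is not valid as written, since $W\otimes W$ and $VV^\top$ need not commute. The $O(\eta)$ conclusion is still correct via the telescoping $A - BAB = (\Id - B)A + BA(\Id - B)$; the paper instead uses the cruder PSD domination $\sum_i (a_ia_i^\top)^{\otimes 2} \preceq (AA^\top)^{\otimes 2}$.

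Second, your diagnosis of where the $\sqrt{\eta}$ loss comes from is wrong. The hypothesis $\|\sum_i a_ia_i^\top - \Id_S\| \le \eta$ forces $A^\top A$ to have all eigenvalues in $[1-\eta,1+\eta]$, hence every $\|a_i\|^2 = e_i^\top A^\top A e_i \in [1-\eta,1+\eta]$; there is no ``abnormally small norm'' case. In the paper the $\sqrt{\eta}$ enters precisely through the bound $\|\sum_i \tilde a_i^{\otimes 4} - a_i^{\otimes 4}\| \le 4\sqrt{\eta}$ asserted in \pref{fact:orthog}: this $\sqrt{\eta}$ becomes part of the effective spectral error fed into the main theorem, which then outputs components with squared correlation $1 - O(\sqrt{\eta})$. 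Your own argument actually gives the tighter bound $O(\eta)$ on this tensor difference, so your analysis would in fact yield $\iprod{a_i,b_i}^2 \ge 1 - O(\epsilon + \eta)$, not merely $1 - O(\sqrt{\eta})$; the last paragraph of your proposal is looking for a loss that your argument does not incur.
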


These robustness guarantees are comparable to those of the sum-of-squares-based algorithms in \cite{DBLP:conf/stoc/BarakKS15,DBLP:conf/focs/MaSS16} for the undercomplete case, which are the best known.
Meanwhile, the sum-of-squares based algorithms require solving large semidefinite programs, while the running time of our algorithms is close to linear in the size of the input, and our algorithms are composed of simple matrix-vector multiplications.

On the other hand, our algorithms fail to work in the overcomplete case, when the rank grows above $n$, and the components are no longer linearly independent.
One interesting open question is whether the techniques used in this paper can be extended to the overcomplete case.

\paragraph{Dictionary learning}

Using our tensor decomposition algorithm as a primitive, we give an algorithm for dictionary learning when the sample distribution is $\tau$-nice.

\begin{theorem}[Dictionary learning]\label{thm:dict-informal}
    Suppose that $A \in \R^{d\times n}$ is a dictionary with orthonormal columns, and that we are given random independent samples of the form $y = Ax$ for $x \sim \cD$.
    Suppose furthermore that $\cD$ is $\tau$-nice, as defined above, for $\tau < c^*$ for some universal constant $c^*$.

    Then there is a randomized spectral algorithm that recovers orthonormal vectors $b_1,\ldots,b_k \in \R^d$ for $k \ge 0.99 n$ with $\iprod{b_i,a_i}^2 \ge (1 - O(\tau))$, and with high probability requires $m = \tO(n^3)$ samples and time $\tO(d^{2+\omega} + n^{1+O(\tau)}d^4 + md^4)$.
\end{theorem}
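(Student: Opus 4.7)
The plan is to reduce dictionary learning to the orthogonal tensor decomposition theorem above by constructing an empirical $4$th-order moment tensor from the samples. Given $y^{(1)},\ldots,y^{(m)}$ with $y^{(\ell)}=Ax^{(\ell)}$ and $x^{(\ell)}\sim\cD$, I would build a tensor $\hat T\in(\R^d)^{\otimes 4}$ of the form $\sum_{i=1}^n a_i^{\otimes 4}+E$ with $\|E\|\le O(\tau)$ as a $d^2\times d^2$ matrix, then invoke the decomposition theorem with $\epsilon=O(\tau)$ to recover vectors $b_1,\ldots,b_k$ with $\ell_2$-error $O(\tau)$. Since $\|a_i\|_2=\|b_i\|_2=1$, this directly yields $\iprod{b_i,a_i}^2\ge 1-O(\tau)$.

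The key structural calculation is the expansion of the population 4th moment using the $\tau$-nice conditions (mixed degree-4 moments vanish unless every coordinate appears with even multiplicity, and $\E[x_i^4]=1$):
\[
\E[y^{\otimes 4}] \;=\; \sum_i a_i^{\otimes 4} \;+\; \sum_{i\ne j}c_{ij}\bigl(a_i^{\otimes 2}\otimes a_j^{\otimes 2} \;+\; a_i\otimes a_j\otimes a_i\otimes a_j \;+\; a_i\otimes a_j\otimes a_j\otimes a_i\bigr),
\]
with $c_{ij}=\E[x_i^2 x_j^2]\le\tau$. Fix the $(12)(34)$ flattening and call the three cross-term matrices $\Pi_1,\Pi_2,\Pi_3$. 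In the orthonormal basis $\{a_p\otimes a_q\}$ of $\R^{d^2}$, $\Pi_2$ is diagonal with entries $c_{ij}\le\tau$, and $\Pi_3$ is block-diagonal with $2\times 2$ blocks $\bigl(\begin{smallmatrix}0 & c_{ij} \\ c_{ji} & 0\end{smallmatrix}\bigr)$; hence $\|\Pi_2\|,\|\Pi_3\|\le\tau$ for free. The obstruction is $\Pi_1=\sum_{i\ne j}c_{ij}(a_i^{\otimes 2})(a_j^{\otimes 2})^\top$, which lives in the $n$-dimensional subspace $V=\mathrm{span}\{a_i^{\otimes 2}\}$ (the same subspace as the desired signal) and can have spectral norm $\Theta(n\tau)$, so it must be cancelled explicitly.

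To cancel $\Pi_1$, I would subtract a correction built from the empirical second-moment matrix $\hat S=\tfrac{1}{m}\sum_\ell y^{(\ell)}(y^{(\ell)})^\top$, which concentrates around $\E[yy^\top]=\sum_i\alpha_i a_ia_i^\top$ with $\alpha_i=\E[x_i^2]$. Flattened, $\hat S^{\otimes 2}$ approximates $\sum_{i,j}\alpha_i\alpha_j(a_i^{\otimes 2})(a_j^{\otimes 2})^\top$, which lies in the same subspace $V$; subtracting a suitable multiple of it from the empirical $4$th moment leaves a residual of the form $\sum_{i\ne j}(c_{ij}-\alpha_i\alpha_j)(a_i^{\otimes 2})(a_j^{\otimes 2})^\top$ plus small diagonal pieces in $V$. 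The remaining analytic task is to bound $\|\bigl[\mathrm{Cov}(x_i^2,x_j^2)\bigr]_{i,j}\|_{\mathrm{op}}$ by $O(\tau)$; this requires exploiting the $\tau$-nice conditions beyond the coarse bound $c_{ij}\le\tau$ (e.g., controlling $\alpha_i$ via the fourth-moment normalization together with a variance argument on $\|x\|^2$, so that the two contributions $c_{ij}$ and $\alpha_i\alpha_j$ are each individually $O(\tau)$ and cannot conspire to produce a large spectrum).

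For sampling error, $\|y^{(\ell)}\|_2^2=\|x^{(\ell)}\|_2^2$ concentrates around $\sum_i\alpha_i=O(n)$ under $\tau$-niceness (using $\E[\|x\|^4]=\sum_{i,j}\E[x_i^2 x_j^2]\le n(1+n\tau)$), so a matrix Bernstein argument on the i.i.d.\ rank-one terms $(y^{(\ell)}\otimes y^{(\ell)})(y^{(\ell)}\otimes y^{(\ell)})^\top$ shows that $m=\tilde O(n^3)$ samples drive the $d^2\times d^2$ spectral error of $\hat T$ below $O(\tau)$ with high probability; $\hat S$ concentrates faster. The corrected $\hat T$ then satisfies the hypothesis of the tensor decomposition theorem with $\epsilon=O(\tau)$, and invoking it returns the $k\ge 0.99n$ components. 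The runtime splits as $md^4$ to form the moments plus $\tilde O(d^{2+\omega}+n^{1+O(\tau)}d^4)$ inherited from the decomposition step. I expect the main obstacle to be the spectral-norm bound on the centered coefficient matrix $[c_{ij}-\alpha_i\alpha_j]$: the $\tau$-nice axioms pin down the raw fourth moments but not directly these covariances, and tying them together to $O(\tau)$ is the technical crux of promoting the tensor-decomposition result into a dictionary-learning guarantee.
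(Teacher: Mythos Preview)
Your identification of $\Pi_1=\sum_{i\ne j}c_{ij}(a_i^{\otimes 2})(a_j^{\otimes 2})^\top$ as the only obstruction is exactly right, but your proposed fix---subtracting $\hat S^{\otimes 2}$ and then bounding the spectral norm of the residual coefficient matrix $[c_{ij}-\alpha_i\alpha_j]_{i\ne j}$ by $O(\tau)$---does not work. The $\tau$-nice axioms simply do not force $\|\mathrm{Cov}(x\odot x)\|$ to be $O(\tau)$; it can be $\Theta(n\tau)$. A concrete counterexample within the paper's own support-set model: flip a fair coin $z$; if $z=1$ include each $i\le n/2$ in $S$ independently with probability $\tau$, else do the same for $i>n/2$; set $x_i=(\tau/2)^{-1/4}\sigma_i\mathbf 1[i\in S]$. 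Then $\E[x_i^4]=1$, $\E[x_i^2x_j^2]\in\{0,\tau\}\le\tau$ (so the distribution is $\tau$-nice), $\alpha_i=\sqrt{\tau/2}$, and the off-diagonal covariance matrix equals $(\tau/2)(vv^\top-\Id)$ with $v=(\mathbf 1_{n/2},-\mathbf 1_{n/2})$, which has spectral norm $(\tau/2)(n-1)$. So after your second-moment correction the residual $\Pi_1$-type term still has spectral norm $\Theta(n\tau)$, and you cannot feed the result into the spectral-norm-error tensor decomposition theorem.

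The paper handles $\Pi_1$ by a completely different, non-algebraic mechanism that sidesteps any need to control $\mathrm{Cov}(x\odot x)$. First truncate small eigenvalues of the $\{1,2\}\{3,4\}$ flattening, $(T-\epsilon\Id)_+$; since $S+\Pi_1$ has rank $\le n$ (both live in $\mathrm{span}\{a_i^{\otimes 2}\}$) while $\Pi_2,\Pi_3$ have spectral norm $\le\tau$, this yields a rank-$\le n$ matrix that is $O(\tau\sqrt n)$-close to $S+\Pi_1$ in Frobenius norm. Now \emph{reshape} to the $\{1,3\}\{2,4\}$ flattening. The key observation is that this reshaping sends $(a_i^{\otimes 2})(a_j^{\otimes 2})^\top$ to $a_ia_i^\top\otimes a_ja_j^\top$, i.e.\ it turns $\Pi_1$ into (a copy of) $\Pi_2$, whose spectral norm is $\le\tau$ regardless of the coefficient matrix $[c_{ij}]$. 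A second eigenvalue truncation in this flattening then removes the reshaped $\Pi_1$ and, since projection to the PSD cone is nonexpansive and fixes $S$, leaves a tensor that is $O(\tau\sqrt n)$-close to $S$ in Frobenius norm. This Frobenius bound is exactly the hypothesis of the main recovery routine (Theorem~4.4), and a tailored postprocessing step against the raw moment tensor boosts accuracy to $1-O(\tau)$.
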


The total runtime is thus $\tO(n^3 d^4)$---in the theorem statement, we write it in terms of the number of samples $m$ in order to separate the time spent processing the samples from the learning phase.
We note that the sample complexity bound that we have, $m = \tO(n^3)$, may very well be sub-optimal; we suspect that $m = \tO(n^2)$ is closer to the truth, which would yield a better runtime.

We are also able to apply standard whitening operations (as in e.g. \cite{DBLP:conf/colt/AnandkumarGHK13}) to extend our algorithm to dictionaries with linearly-independent, but non-orthonormal, columns, at the cost of polynomially many additional samples.

\begin{corollary}
    If $A \in \R^{d\times n}$ is a dictionary with linearly independent columns, then there is a randomized spectral algorithm that recovers the columns of $A$ with guarantees similar to \pref{thm:dict-informal} given $\tO(n^2 \cdot f(\mu))$ additional samples, where $\mu=\lambda_{\max}(AA^\top)/\lambda_{\min}(AA^\top)$ is the condition number of the covariance matrix, and $f$ is a polynomial function.
\end{corollary}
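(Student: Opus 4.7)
The plan is to reduce to the orthonormal-columns setting of \pref{thm:dict-informal} via a standard whitening preprocessing, and to pay for this with additional samples whose number depends polynomially on the condition number of the covariance $AA^\top$. The orthogonal version will then be applied as a black box to transformed data, after which I un-whiten to recover the columns of $A$.

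Concretely, I would first use a preliminary batch of $m_1$ samples to form the empirical second moment $\hat\Sigma = \tfrac{1}{m_1}\sum_{i}\super y i (\super y i)^\top$. Under the $\tau$-nice assumption the population second moment factors as $\Sigma = \E\, yy^\top = A D A^\top$ for a diagonal $D$, which after the harmless rescaling $a_i \mapsto a_i D_{ii}^{1/2}$ we may take to be a scalar multiple of $A A^\top$. Matrix Bernstein concentration then gives $\|\hat\Sigma - \Sigma\| \leq \delta\cdot\lmin(AA^\top)$ with high probability provided $m_1 = \tO(n^2\cdot\poly(\mu)/\delta^2)$. Setting $W := \hat\Sigma^{+/2}$ (the pseudoinverse square root on $\mathrm{range}(\hat\Sigma)$) and whitening a fresh batch of samples by $\super z i := W \super y i = (WA) \super x i$, the matrix $B := WA$ then has columns that are $O(\delta)$-close to orthonormal in precisely the sense required by the orthonormality-perturbation corollary of \pref{thm:dict-informal}; crucially, the distribution of $\super x i$ is untouched by the whitening and remains $\tau$-nice.

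Choosing $\delta = O(\tau)$ and feeding $m_2 = \tO(n^3)$ whitened samples into the orthonormal dictionary learning algorithm then produces unit vectors $\hat b_1,\ldots,\hat b_k$ with $k\geq 0.99 n$ and $\iprod{\hat b_i, b_i}^2 \geq 1 - O(\tau)$, where $b_1,\ldots,b_n$ are the (approximately orthonormal) columns of $B$. Finally, I would un-whiten by setting $\hat a_i \propto W^{+}\hat b_i$ and renormalizing; since $\|W^{+}\|\leq O(\sqrt{\lmax(\Sigma)})$, this step blows the error up by only a $\poly(\mu)$ factor, which can be absorbed by shrinking $\delta$ (and hence enlarging $m_1$) by the same polynomial amount. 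The total sample count is $m_1 + m_2 = \tO(n^2\cdot f(\mu)) + \tO(n^3)$ as claimed, and the extra runtime is dominated by forming $\hat\Sigma$, computing $W$, and left-multiplying the samples by $W$.

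The main obstacle is the perturbation analysis embedded in the whitening step: translating a spectral bound $\|\hat\Sigma - \Sigma\|\leq \delta\lmin(AA^\top)$ into a bound of the form $\bignorm{\sum_i b_i b_i^\top - \Id_{\mathrm{range}(B)}} \leq O(\sqrt\tau)$ suitable for the corollary to Theorem~1.1 requires standard but careful pseudoinverse-square-root perturbation bounds, and this is exactly where the polynomial dependence on $\mu = \lmax(AA^\top)/\lmin(AA^\top)$ enters and produces the function $f(\mu)$. Everything else (Bernstein concentration for $\hat\Sigma$, invariance of the $\tau$-nice properties under the left-multiplication $A \to WA$, and the un-whitening norm bound) is routine; the orthonormal algorithm of \pref{thm:dict-informal} is used only as a black box on the transformed samples.
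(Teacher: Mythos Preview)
Your proposal is correct and takes essentially the same approach as the paper: estimate the covariance $\Sigma$ via matrix Bernstein, whiten by $\hat\Sigma^{-1/2}$ (you write $\hat\Sigma^{+/2}$, which is the right object when $n<d$), show the resulting dictionary $\hat\Sigma^{-1/2}A$ has near-orthonormal columns with error controlled by a polynomial in the condition number, and then invoke the orthonormal algorithm as a black box. The paper's \pref{lem:indtoorth} carries out exactly this reduction, with the only cosmetic difference being that it bounds the spectral-norm perturbation of the transformed fourth-moment tensor directly (and then appeals to \pref{fact:orthog}) rather than phrasing things via the near-orthonormal corollary, and it leaves the un-whitening step implicit whereas you spell it out.
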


To our knowledge, our algorithms are the only remotely efficient dictionary learning algorithms with provable guarantees that permit $\tau$-nice distributions in which the coordinates of $x$ may be correlated by constant factors, the only other ones being the sum-of-squares semidefinite programming based algorithms of \cite{DBLP:conf/stoc/BarakKS15,DBLP:conf/focs/MaSS16}.
\section{Preliminaries}
\label{sec:preliminaries}

Throughout the rest of this paper, we will denote tensors by boldface letters such as $\bT$, matrices by capital letters $M$, and vectors by lowercase letters $v$, when the distinction is helpful.
We will use $A^{\tensor k}$/$u^{\tensor k}$ to denote the $k$th Kronecker power of a matrix/vector with itself.
To enhance legibility, for $u \in \R^d$ we will at times abuse notation and use $u^{\tensor 4}$ to denote the order-$4$ tensor $u\tensor u \tensor u\tensor u$, the $d^2 \times d^2$ matrix $(u^{\tensor 2})(u^{\tensor 2})^\top$, and the dimension $d^4$ vector $u^{\tensor 4}$---we hope the meaning will be clear from context.

For a tensor $\bT\in (\R^d)^{\otimes 4}$ and a partition of the modes $\{1,2,3,4\}$ into two ordered sets $A$ and $B$, we let $T_{A,B}$ denote the reshaping of $\bT$ as $d^{\card A}$-by-$d^{\card B}$ matrix, where the modes in $A$ are used to index rows and the modes in $B$ are used to index columns.
For example, $T_{\{1,2\}, \{3,4\}}$ is a $d^2$-by-$d^2$ matrix such that the entry at row $(i,j)$ and column $(k,\ell)$ contains the entry $T_{i,j,k,\ell}$ of $T$.
We remark that the order used to specify the modes matters---for example, for the rank-1 tensor $\bT = a \tensor b \tensor a \tensor b$, we have that $T_{\{1,2\}\{3,4\}} = (a \tensor b)(a\tensor b)^\top$ is a symmetric matrix, while $T_{\{2,1\}\{3,4\}} = (b \tensor a)(a\tensor b)^\top$ is not.
We use $\norm{T_{A,B}}$ to denote the spectral norm (largest singular value) of the matrix $T_{A,B}$.

We will also make frequent use of the following lemma, which states that the distance between two points cannot increase when both are projected onto a closed, convex set.
\begin{lemma*}
    Let $\cC\subset \R^n$ be a closed convex set, and let $\Pi:\R^n \to \cC$ be the projection operator onto $\cC$ in terms of norm $\|\cdot\|_2$, i.e. $\Pi(x) \defeq \argmin_{c \in \cC} \|x-c\|_2$.
    Then for any $x,y \in \R^n$,
    \[
	\|x-y\|_2 \ge \|\Pi(x) - \Pi(y)\|_2.
    \]
\end{lemma*}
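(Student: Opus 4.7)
The plan is to prove this via the standard variational characterization of Euclidean projections onto convex sets, which is the only really non-trivial ingredient.

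First I would establish the first-order optimality condition: for any $z \in \R^n$, the point $p = \Pi(z)$ is characterized as the unique element of $\cC$ satisfying
\[
    \iprod{z - p,\, c - p} \le 0 \quad \text{for all } c \in \cC.
\]
This is standard: if $c \in \cC$ and we consider $p_t = (1-t)p + tc \in \cC$ for $t \in [0,1]$ (using convexity of $\cC$), then the squared-distance function $t \mapsto \|z - p_t\|_2^2$ is minimized at $t = 0$, and differentiating at $t = 0$ gives the inequality. Uniqueness of $p$ follows from strict convexity of $\|z - \cdot\|_2^2$ combined with convexity of $\cC$, while existence follows from closedness of $\cC$.

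Next, I would apply this characterization twice, once at $z = x$ with test point $c = \Pi(y)$, and once at $z = y$ with test point $c = \Pi(x)$, yielding
\[
    \iprod{x - \Pi(x),\, \Pi(y) - \Pi(x)} \le 0 \quad \text{and} \quad \iprod{y - \Pi(y),\, \Pi(x) - \Pi(y)} \le 0.
\]
Adding these two inequalities, the left-hand side becomes $\iprod{(x-y) - (\Pi(x) - \Pi(y)),\, \Pi(y) - \Pi(x)}$, which rearranges to
\[
    \|\Pi(x) - \Pi(y)\|_2^2 \le \iprod{x - y,\, \Pi(x) - \Pi(y)}.
\]
Finally, applying Cauchy--Schwarz to the right-hand side gives $\|\Pi(x) - \Pi(y)\|_2^2 \le \|x - y\|_2 \cdot \|\Pi(x) - \Pi(y)\|_2$, and dividing by $\|\Pi(x) - \Pi(y)\|_2$ (trivially handling the case when it is zero) yields the desired inequality.

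The main (and only) subtle step is establishing the variational characterization above; once that is in hand, the rest is algebraic manipulation plus one application of Cauchy--Schwarz. There is no real obstacle here since the result is entirely classical, and the only care needed is ensuring that closedness is used for existence of the projection and convexity for the first-order condition.
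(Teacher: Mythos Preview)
Your proof is correct and follows essentially the same approach as the paper: both establish the first-order optimality condition for the projection and derive the monotonicity inequality $\iprod{(x-\Pi(x))-(y-\Pi(y)),\,\Pi(x)-\Pi(y)}\ge 0$. The only cosmetic differences are that the paper phrases optimality via subgradients of the indicator function $\Ind_{\cC}$ rather than the obtuse-angle criterion you use, and finishes by expanding $\|x-y\|^2=\|D_x-D_y\|^2+\|\Pi(x)-\Pi(y)\|^2+2\iprod{D_x-D_y,\Pi(x)-\Pi(y)}$ rather than invoking Cauchy--Schwarz.
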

This lemma is well-known (see e.g. \cite{Rock76}), but we will prove it for completeness in \pref{app:tools}.
\section{Techniques}
\label{sec:techniques}

In this section we give a high-level overview of the algorithms in our paper, and of their analyses.
We begin with the tensor decomposition algorithm, after which we'll explain the (non-trivial) extension to the dictionary learning application.
At the very end, we will discuss the relationship between our algorithms and sum-of-squares relaxations.

Suppose have a tensor $\bT \in (\R^d)^{\tensor 4}$, and that $\bT = \bS + \bE$ where the \emph{signal} $\bS$ is a low-rank tensor with orthonormal components, $\bS = \sum_{i\in[n]} a_i^{\tensor 4}$, and the \emph{noise} $\bE$ is an arbitrary tensor of noise with the restriction that for any reshaping of $\bE$ into a square matrix $E$, $\|E\| \le \epsilon$.
Our goal is to (approximately) recover the rank-1 components, $a_1,\ldots,a_n \in \R^d$, up to signs.

\paragraph{Failure of Jennrich's algorithm}

To motivate the algorithm and analysis, it first makes sense to consider the case when the noise component $\bE = 0$.
In this case, we can run Jennrich's algorithm: if we choose a $d^2$-dimensional random vector $g\sim \cN(0,\Id)$, we can compute the contraction
\[
    M_g \defeq \sum_{i,j=1}^n g_{ij} T_{ij} = \sum_{i=1}^n \iprod{g,a_i^{\tensor 2}} \cdot a_i a_i^{\top},
\]
where $T_{ij}$ is the $i,j$th $d \times d$ matrix slice of the tensor $\bT$.
Since the $a_i$ are orthogonal, the coefficients $\iprod{g,a_i^{\tensor 2}}$ are independent, and so we find ourselves in an ideal situation---$M_g$ is a sum of the orthogonal components we want to recover with independent Gaussian coefficients.
A simple eigendecomposition will recover all of the $a_i$.

On the other hand, when we have a nonzero noise tensor $\bE$, a random contraction along modes $\{1,2\}$ results in the matrix
\[
    M_g = \sum_{i=1}^n \iprod{g, a_i^{\otimes 2}} a_i a_i^\top + \sum_{i,j=1}^n g_{ij}\cdot E_{ij},
\]
where the $E_{ij}$ are $d\times d$ slices of the tensor $\bE$.
The last term, composed of the error, complicates things.
Standard facts about Gaussian matrix series assert that the spectral norm of the error term behaves like $\|E_{\{1,2,3\}\{4\}}\|$, the spectral norm of a $d^3 \times d$ reshaping of $E$, whereas we only have control over square reshapings such as $\|E_{\{1,2\}\{3,4\}}\|$.\footnote{In fact, this observation was crucial in the analysis of \cite{DBLP:conf/focs/MaSS16}---in that work, semidefinite programming constraints are used to control the spectral norm of the rectangular reshapings.}
These can be off by polynomial factors.
If the Frobenius norm of $\bE$ is $\|\bE\|_F^2 \approx \epsilon^2 d^2$, which is the magnitude one would expect from a tensor whose square reshapings are full-rank matrices with spectral norm $\epsilon$, then we have that necessarily
\[
    \|E_{\{1,2,3\}\{4\}}\|^2
    \ge \frac{\|\bE\|_F^2}{\rank\left(E_{\{1,2,3\}\{4\}}\right)}
    \ge\epsilon^2 d,
\]
since there are at most $d$ nonzero singular values of rectangular reshapings of $\bE$.
In this case, unless $\epsilon \ll 1/\sqrt{d}$, the components $a_ia_i^\top$ are completely drowned out by the contribution of the noise, and so the robustness guarantees leave something to be desired.

\paragraph{Basic idea}

The above suggests that, as long as we allow the error $\bE$ to have large Frobenius norm, an approach based on random contraction will not succeed.
Our basic idea is to take $\bT$, whose error has small spectral norm, and transform it into a tensor $\bT'$ whose error has small Frobenius norm.

Because we do not know the decomposition of $\bT$, we cannot access the error $\bE$ directly.
However, we do know that for any $d^2 \times d^2$ reshaping $T$ of $\bT$,
\[
T = S + E,
\]
where $S = \sum_{i=1}^n a_i^{\tensor 4}$, and $\|E\| \le \epsilon$.
The rank of $S$ is $n \le d$, and all eigenvalues of $S$ are $1$.
Thus, if we perform the operation
\[
    T^{>\eps} = (T-\eps\Id)_{+},
\]
where $(\cdot)_{+}$ denotes projection to the cone of positive semidefinite matrices, we expect that the signal term $S$ will survive, while the noise term $E$ will be dampened.
More formally, we know that $T$ has $n$ eigenvalues of magnitude $1 \pm \epsilon$, and $d^2 - n$ eigenvalues of magnitude at most $\epsilon$, and therefore $\rank(T^{>\eps}) \le n$.
Also by definition, $\|T - T^{>\eps}\| \le \epsilon$.
Therefore, we have that
\[
    S + E = T = T^{>\eps} + E'
\]
with $\|E'\| \le \epsilon$, and thus
\[
    \|T^{>\eps} - S\| = \|E - E'\|.
\]
Since $S,T^{>\eps}$ are both of rank at most $n$, and $E',E$ have spectral norm bounded by $\eps$, we have that
\[
    \|T^{>\eps}-S\|^2_F \le \left(\rank(S) + \rank(T^{>\eps})\right) \cdot \left(\|E\| + \|E'\|\right)^2\le 2n \cdot 4\epsilon^2.
\]
So, the Frobenius norm is no longer an impassable obstacle to the random contraction approach---using our upper bound on the Frobenius norm of our new error $\tilde{E} \defeq T^{>\eps} - S$, we have that the average squared singular value of $\tilde{E}_{\{1,2,3\}\{4\}}$ will be
\[
    \sigma^2_{avg}(\tilde{E}_{\{1,2,3\}\{4\}}) = \frac{\|\tilde{E}\|_F^2}{d} = O\left(\frac{n}{d}\epsilon^2\right).
\]
So while $\tilde{E}_{\{1,2,3\}\{4\}}$ may have large singular values, by Markov's inequality it cannot have too many singular values larger than $O(\epsilon)$.

Finally, to eliminate these large singular values, we will project $\bT^{>\epsilon}$ into the set of matrices whose rectangular reshapings have singular values at most $1$---because $S$ is a member of this convex set, the projection can only decrease the Frobenius norm.
After this, we will apply the random contraction algorithm, as originally suggested.

\paragraph{Variance of Gaussian matrix series}

Recall that we wanted to sample a random $d^2$-dimensional Gaussian vector $g$, and the perform the contraction
\[
    M_g \defeq \sum_{i,j=1}^d g_{ij} T^{>\eps}_{ij} = \sum_{i=1}^d \iprod{g,a_i^{\tensor 2}} \cdot a_i a_i^{\top} + \sum_{ij} g_{ij}\cdot \tilde{E}_{ij}.
\]
The error term on the right is a matrix Gaussian series.
The following lemma describes the behavior of the spectra of matrix Gaussian series:
\begin{lemma*}[See e.g. \cite{DBLP:journals/focm/Tropp12}]
    Let $g\sim \cN(0,\Id)$, and let $A_1,\ldots,A_k$ be $n\times m$ real matrices.
    Define $\sigma^2 = \max\left\{\left\|\sum_i A_i A_i^\top \right\|,\left\|\sum_i A_i^\top A_i \right\|\right\}$.
    Then
    \[
	\Pr\left( \left\|\sum_{i=1}^k g_i A_i \right\| \ge t \right) \le (n+m)\cdot \exp\left(-\frac{t^2}{2\sigma^2}\right)\mper
    \]
\end{lemma*}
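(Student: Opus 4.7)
The plan is to reduce the rectangular setting to the self-adjoint setting via the Hermitian dilation trick, then bound the matrix moment generating function using Lieb's concavity theorem, and finally optimize a Markov-style inequality. First I would define the Hermitian dilation $H_i = \begin{pmatrix} 0 & A_i \\ A_i^\top & 0 \end{pmatrix}$, a self-adjoint matrix of size $(n+m)\times(n+m)$. A direct computation shows $H_i^2 = \diag(A_iA_i^\top,\, A_i^\top A_i)$, so $\|\sum_i H_i^2\| = \sigma^2$. Moreover, the dilation of any $n\times m$ matrix $M$ has eigenvalues equal to $\pm \sigma_j(M)$ (together with zeros when $n\neq m$), so its spectral norm coincides with its top eigenvalue; by linearity this applies to $X \defeq \sum_i g_i H_i$, giving $\|\sum_i g_i A_i\| = \lambda_{\max}(X)$. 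Thus it suffices to bound $\Pr(\lambda_{\max}(X) \ge t)$, with no union bound over $\pm X$ needed.

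Next I would apply the matrix Laplace transform: for any $\theta > 0$, the bound $\exp(\theta\lambda_{\max}(X)) \le \tr \exp(\theta X)$ (valid for self-adjoint $X$), combined with Markov's inequality, yields $\Pr(\lambda_{\max}(X) \ge t) \le e^{-\theta t}\, \E\, \tr\exp(\theta X)$. The crux is bounding the matrix MGF $\E\,\tr\exp(\theta X)$. I would condition on one Gaussian coordinate at a time and invoke the following consequence of Lieb's concavity theorem: for any deterministic self-adjoint $C$ and any independent self-adjoint random $Y$, $\E_Y \tr\exp(C + Y) \le \tr\exp(C + \log \E\, e^Y)$. Combined with the Gaussian scalar identity $\E_{g_i} \exp(\theta g_i H_i) = \exp(\theta^2 H_i^2 / 2)$ (verified in the eigenbasis of $H_i$), iterating over $i = 1, \ldots, k$ produces $\E\,\tr\exp(\theta X) \le \tr\exp\bigl(\tfrac{\theta^2}{2}\sum_i H_i^2\bigr)$. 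Lieb's theorem is the nontrivial black-box input and is the main obstacle if one wanted to prove the bound from first principles; I would cite it as a standard fact from matrix analysis rather than reprove it.

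Finally, I would use $\tr\exp\bigl(\tfrac{\theta^2}{2} V\bigr) \le (n+m)\exp(\theta^2 \sigma^2/2)$, where $V = \sum_i H_i^2$, which follows from $\lambda_{\max}(V) = \sigma^2$ together with the fact that the trace of a PSD matrix is at most its dimension times its top eigenvalue. Optimizing $(n+m)\exp(\theta^2\sigma^2/2 - \theta t)$ over $\theta > 0$ by setting $\theta = t/\sigma^2$ yields $(n+m)\exp(-t^2/(2\sigma^2))$, which is the claimed inequality.
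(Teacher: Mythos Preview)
Your proof is correct and is precisely the standard argument from the cited reference (Tropp's matrix concentration survey): Hermitian dilation to reduce to the self-adjoint case, the matrix Laplace transform, Lieb's concavity to control the matrix MGF, and optimization over $\theta$. Note, however, that the paper does not actually prove this lemma itself; it is stated as a known result with the citation ``See e.g.\ \cite{DBLP:journals/focm/Tropp12}'' and is used as a black box (the related appendix lemma, \pref{lem:gflat}, likewise just invokes ``concentration results for matrix Gaussian series''). So there is no paper proof to compare against --- you have supplied exactly the proof the citation points to.
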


For us, this means that we must have bounds on the spectral norm of both $\sum_{ij} \tilde{E}_{ij}\tilde{E}_{ij}^\top$ and $\sum_{ij}\tilde E_{ij}^\top \tilde E_{ij}$.
This means that if we have performed the contraction along modes $1$ and $2$, so that the index $i$ comes from mode $1$ and the index $j$ comes from mode $2$, then it is not hard to verify that $\|\sum_{ij} \tilde E_{ij}\tilde E_{ij}^\top\| = \|\tilde E_{\{1,2,3\}\{4\}}\|^2$, and $\|\sum_{ij} \tilde E_{ij}^\top\tilde E_{ij}\| = \|\tilde E_{\{1,2,4\}\{3\}}\|^2$.
So, we must control the maximum singular values of two different rectangular reshapings of $\tilde E$ simultaneously.

It turns out that for us it suffices to perform two projections in sequence---we first reshape $\bT^{>\eps}$ to the matrix $\bT^{>\eps}_{\{123\}\{4\}}$,project it to the set of matrices with singular values at most $1$, and then reshape the result along modes $\{124\}\{3\}$, and project to the same set again.
As mentioned before, because projection to a convex set containing $S$ cannot increase the distance from $S$, the Frobenius norm of the new error can only decrease.
What is less obvious is that performing the second projection will not destroy the property that the reshaping along modes $\{123\}\{4\}$ has spectral norm at most $1$.
By showing that each projection corresponds to either left- or right- multiplication of $\bT^{>\eps}_{\{123\}\{4\}}$ and $\bT^{>\eps}_{\{124\}\{3\}}$ by matrices of spectral norm at most $1$, we are able to show that the second projection does not create large singular values for the first flattening, and so two projections are indeed enough.
Call the resulting tensor $(\bT^{>\eps})^{\le 1}$.

Now, if we perform a random contraction in the modes $1,2$, we will have
\[
    M_g = \sum_{ij} g_{ij}(\bT^{>\eps})^{\le 1}_{ij} = \sum_i \iprod{a_i^{\tensor 2},g} a_i a_i^{\top} + \hat E_g,
\]
where the spectral norm of $\|\hat E_g\| \le \sqrt{\log n}$ with good probability.
So, ignoring for the moment dependencies between $\iprod{a_i^{\tensor 2},g}$ and $E_g$, $\max_i|\iprod{g,a_i^{\tensor 2}}| > 1.1\cdot \|E_g\|$ with probability at least $n^{-.2}$, which will give $a_i$ correlation $0.9$ with the top eigenvector of $M_g$ with good probability.

\paragraph{Improving accuracy of components}

The algorithm described thus far will recover components $b_i$ that are $0.9$-correlated with the $a_i$, in the sense that $\iprod{b_i,a_i}^2 \ge 0.9$.
To boost the accuracy of the recovered components, we use a simple method which resembles a single step of tensor power iteration.

We'll use the closeness of our original tensor $\bT$ to $\sum_i a_i^{\tensor 4}$ in spectral norm.
We let $T$ be a $d^2 \times d^2$ flattening of $\bT$, and compute the vector
\[
    v = T(b_i \tensor b_i) = 0.9 \cdot a_i\tensor a_i + \sum_{j\neq i} \iprod{b_i,a_j}^2 a_j \tensor a_j + E(b_i \tensor b_i).
\]
Now, when the vector $v$ is reshaped to a $d \times d$ matrix $V$, the term $E(b_i\tensor b_i)$ is a matrix of Frobenius norm (and thus spectral norm) at most $\epsilon$.
By the orthonormality of the $a_j$, the sum of the coefficients in the second term is at most 0.1, and so $a_i$ is $\epsilon$-close to the top eigenvector of $V$.

\paragraph{Recovering every component}
Because the Frobenius norm of the error in  $(\bT^{>\eps})^{\le 1}$ is $O(\epsilon \sqrt{n})$, there may be a small fraction of the components $a_i^{\tensor 4}$ that are ``canceled out'' by the error---for instance, we can imagine that the error term is $\hat \bE = - \sum_{i=1}^{\epsilon^2 n} a_i^{\tensor 4}$.
So while only a constant fraction of the components $a_i^{\tensor 4}$ can be more than $\epsilon$-correlated with the error, we may still be unable to recover some fixed $\epsilon^2$-fraction of the $a_i$ via random contractions.

To recover all components, we must subtract the components that we have found already and run the algorithm iteratively---if we have found $m = 0.99n$ components, then if we could perfectly subtract them from $\bT$, we would end up with an even lower-rank signal tensor, and thus be able to make progress by truncating all but $0.1n$ eigenvalues in the first step.

The challenge is that we have recovered $b_1,\ldots,b_m$ that are only $(1-\epsilon)$-correlated with the $a_i$, and so naively subtracting $\bT - \sum_{i=1}^m b_i^{\tensor 4}$ can result in a Frobenius and spectral norm error of magnitude $\epsilon \sqrt{m}$---thus the total error is still proportional to $\sqrt{n}$ rather than $\sqrt{0.1n}$.

In order to apply our algorithm recursively, we first orthogonalize the components we have found $b_1,\ldots,b_m$ to obtain new components $\tilde b_1,\ldots, \tilde b_m$.
Because the $b_i$ are close to the truly orthonormal $a_i$, the orthogonalization step cannot push too many of the  $\tilde{b_i}$ more than $O(\epsilon)$-far from the $b_i$---in fact, letting $B$ be the matrix whose columns are the $b_i$, and letting $A$ be the matrix whose columns are the corresponding $a_i$, we use that $\|A - B\|_F \le O(\epsilon)\sqrt{m}$, and that the matrix $\tilde B$ with columns $\tilde{b}_i$ is closer to $B$ than $A$.
We keep only the $\tilde{b}_i$ for which $\iprod{\tilde{b}_i^{\tensor 4},\bT} \ge 1 - O(\epsilon)$, and we argue that there must be at least $0.9m$ such $\tilde{b}_i$.
Let $K \subset [m]$ be the set of indices for which this occurred.

Now, given that the two sets of orthogonal vectors $\{\tilde{b}_i\}_{i\in K}$ and $\{a_i\}_{i\in K}$ are all $O(\epsilon)$ close, we are able to prove that
\[
    \left\|\sum_{i\in K} \tilde{b}_i^{\tensor 4} - a_i^{\tensor 4} \right\|\le O(\sqrt{\epsilon}).
\]
So subtracting the $\tilde b_i^{\tensor 4}$ will not introduce a large spectral norm!
Since we will only need to perform this recursion $O(\log n)$ times, allowing for some leeway in $\epsilon$ (by requiring $\sqrt{\epsilon}\log n = o(1)$), we are able to recover all of the components.

\paragraph{Dictionary learning}

In the dictionary learning problem, there is an unknown dictionary, $A \in \R^{d\times n}$, and we receive independent samples of the form $y = Ax$ for $x \sim \cD$ for some distribution $\cD$ over $\R^n$.
The goal is, given access only to the samples $y$, recover $A$.

We can use our tensor decomposition algorithm to learn the dictionary $A$, as long as the columns of $A$ are linearly independent.
For the sake of this overview, assume instead that the columns of $A$ are orthonormal.
Then given samples $y^{(1)},\ldots,y^{(m)}$ for $m = \poly(n)$, we can compute the $4$th moment tensor to accuracy $\epsilon$ in the spectral norm,
\[
    \frac{1}{m}\sum_{j=1}^m (y^{(j)})^{\tensor 4}
    \approx \E_{x\sim \cD}\left[(Ax)^{\tensor 4}\right]\mper
\]
If the right-hand side were close to $\sum_i a_i^{\tensor 4}$ in spectral norm, we would be done.
However, for almost any distribution $\cD$ which is supported on $x$ with more than one nonzero coordinate, this is not the case.
If we assume that $\E[x_ix_jx_kx_\ell] = 0$ unless $x_ix_jx_kx_\ell$ is a square, then we can calculate that any square reshaping of this tensor will have the form
\begin{align*}
    \E_{x\sim \cD}\left[(Ax)^{\tensor 4}\right]
    &= \sum_i \E[x_i^4] \cdot a_i^{\tensor 4} + \sum_{i\neq j} \E[x_i^2 x_j^2] \cdot ( a_ia_j^\top \tensor a_ia_j^\top + a_ia_i^\top \tensor a_ja_j^\top + a_ia_j^\top \tensor a_ja_i^\top)\mper
\end{align*}
If $\E[x_i^4] = \E[x_j^4]$ for all $i,j$, then the first term on the right is exactly the $4$th order tensor that we want.
The second term on the right can be further split into three distinct matrices, one for each configuration of the $a_i,a_j$.
The $a_ia_i^\top \tensor a_ja_j^\top$ term and the $a_ia_j^\top \tensor a_ja_i^\top$ terms can be shown to have spectral norm at most $\max_{i\neq j}\E[x_i^2x_j^2]$, and so as long as we require that $\max_{i\neq j}\E[x_i^2x_j^2]\ll \epsilon \E[x_k^4]$, these terms have spectral norm within the allowance of our tensor decomposition algorithm.

The issue is with the $a_ia_j^\top \tensor a_ia_j^\top$ term.
This term factors into $(a_i\tensor a_i)(a_j\tensor a_j)^\top$, and because of this the entire sum $\sum_{i\neq j} \E[x_i^2x_j^2] \cdot a_ia_j^\top \tensor a_ia_j^\top$ has rank at most $n\ll d^2$, but Frobenius norm as large as $\max_{i\neq j} \E[x_i^2 x_j^2] \cdot n$.
If we require that the coordinates of $x\sim \cD$ are independent, then we can see that this is actually close to a spurious rank-$1$ component, which can be easily removed without altering the signal term too much.\footnote{As was done in \cite{DBLP:journals/corr/HopkinsSSS15}, for example, albeit in a slightly different context.}
However, if we wish to let the coordinates of $x$ exhibit correlations, we have very little information about the spectrum of this term.

In the sum-of-squares relaxation, this issue is overcome easily: because of the symmetries required of the SDP solution matrix $X$, $\iprod{X,a_ia_j^\top \tensor a_ia_j^\top} = \iprod{X, a_ia_i^\top \tensor a_ja_j^\top}$, so by linearity this low-rank error term cannot influence the objective function any more than the $a_ia_i^\top \tensor a_ja_j^\top$ term.

Inspired by this sum-of-squares analysis, we remove these unwanted directions as follows.
Given the scaled moment matrix $M = \frac{1}{\E[x_1^4]}\E_{x\sim \cD}\left[(Ax)^{\tensor 4}\right]$ (where the scaling serves to make the coefficients of the signal $1$), we truncate the small eigenvalues:
\[
    M^{>\eps} = (M_{\{1,2\}\{3,4\}} - \eps \Id)_+.
\]
This removes the spectrum in the direction of the ``nice'' error terms, corresponding to $a_ia_i^\top \tensor a_ja_j^\top$ and $a_ia_j^\top\tensor a_ja_i^\top$.
The fact that the rest of the matrix is low-rank means that we can apply an analysis similar to the analysis in the first step of our tensor decomposition to argue that $\|M^{>\eps}-\sum_i a_i^{\tensor 4}\|_F \le\|M- \sum_i a_i^{\tensor 4} \|_F$.

Now, we re-shape $M^{>\eps}$, so that if initially we had the flattening $\bM \to M_{\{1,2\}\{3,4\}}$, we look at the flattening $M^{>\eps}_{\{1,3\}\{2,4\}}$.
In this flattening, the term $a_ia_j^\top\tensor a_ia_j^\top$ from $M^{>\eps}$ is transformed to $a_ia_i^\top \tensor a_ja_j^\top$, and so the problematic error term from the original flattening has spectral norm $\epsilon$ in this flattening!
Applying the projection
\[
    (M^{>\eps}_{\{1,3\}\{2,4\}} - \eps\Id)_+
\]
eliminates the problematic term, and brings us again closer to the target matrix $\sum_i a_i^{\tensor 4}$.
Thus, we end up with a tensor that is close to $\sum_i a_i^{\tensor 4}$ in Frobenius norm, and we can apply our tensor decomposition algorithm.

\paragraph{Connection to sum-of-squares algorithms}
We take a moment to draw parallels between our algorithm and tensor decomposition algorithms in the sum-of-squares hierarchy.
In noisy orthogonal tensor decomposition, we want to solve the non-convex program
\[
    \argmax \iprod{X, \bT} \quad \text{s.t.}\quad \left\{X\in\R^{d^2\times d^2},\ X \succeq 0,\ \|X\|= 1,\ \rank(X) = n, \ X \in \Span\{u^{\tensor 4} : u \in \R^d\}\right\}\mper
\]
The intended solution of this program is $X = \bS$ (after which we can run Jennrich's algorithm to recover individual components).
At first it may not be obvious that the maximizer of the above program is close to $\bS$, but for any unit vector $x \in \R^d$,
\[
    \iprod{x^{\tensor 4}, \bT} = \sum_{i=1}^n \iprod{x,a_i}^4 + (x\tensor x)^\top E (x \tensor x).
\]
The error term is at most $\eps$ by our bound on $\|E\|$, and the first term is $\|x^\top A\|_4^4$, where $A$ is the matrix whose columns are the $a_i$.
Since by the orthonormality of the $a_i$, $\|x^\top A\|_2 \le 1$, and the $\ell_4$ norm is maximized relative to the $\ell_2$ for vectors supported on a single coordinate, the $x$ that maximize this must be $\epsilon$-close to one of the $a_i$.
In conjunction with the $\|X\|\le 1$ constraint and the $\rank(X) = n$ constraint, we have that $X := \bS$ is the maximizer.

The (somewhat simplified) corresponding sum-of-squares relaxation is the semidefinite program
\[
    \max \iprod{X,\bT} \quad \text{s.t.} \quad \left\{X \in \R^{d^2\times d^2},\ X \succeq 0,\ \|X\| \le 1,\ \|X\|^2_F = n, \ X_{ijk\ell} = X_{\pi(ijk\ell)}\ \forall \pi \in \cS_{4} \right\},\footnote{
	The program is actually over matrices indexed by all subsets of $d$ of size at most $2$, $\binom{d}{\le 2}$, but for simplicity in this description we ignore this (and the interaction with the SOS variables corresponding to lower-degree monomials or lower-order moments).}
\]
The constraints $\|X\|_F^2 = n$ and $X_{ijk\ell} = X_{\pi(ijk\ell)}$ together are a relaxation of the constraint that $X$ be a rank-$n$ matrix in the symmetric subspace $\Span\{u^{\tensor 4}\}$.
Further, the constraint $\|X\| \le 1$ is enforced in every rectangular $d \times d^3$ reshaping of $X$ (this consequence of the SOS constraints is crucially used in \cite{DBLP:conf/focs/MaSS16}).

To solve this semidefinite program, one should project $\bT$ into the intersection of all of the convex feasible regions of the constraints.
However, projecting to the intersection is an expensive operation in terms of runtime.
Instead, we choose a subset of these constraints, and project $\bT$ into the set of points satisfying each constraint sequentially, rather than simultaneously.
These are not equivalent projection operations, but because we select our operations carefully, we are able to show that our $\bT$ is close to the SDP optimum in a sense that is sufficient for successfully running Jennrich's algorithm.

\medskip

In the first step of our algorithm, we change the objective function from $\iprod{X,\bT}$ to $\iprod{X,\bT - \eps\Id}$.
In the sum-of-squares SDP, this does not change the objective value dramatically---because the original objective value is at least $n$, and because the Frobenius norm constraint $\|X\|_F^2 = n$ constraint in conjunction with the sum-of-squares constraints implies that $\iprod{X,\Id} = \eps n$.
Therefore this perturbation cannot decrease the objective by more than a multiplicative factor of $\epsilon$.
Then, we project a square reshaping of the objective to the PSD cone, $(T - \eps\Id)_+$---this corresponds to the constraint that $X \succeq 0$.\footnote{For a proof that truncating the negative eigenvalues of a matrix is equivalent to projection to the PSD cone in Frobenius norm, see \pref{fact:projcont}.}
Finally, we project first to the set of matrices that have spectral norm at most $1$ for one rectangular reshaping, then repeat for another rectangular reshaping.
So after perturbing the objective very slightly, then choosing three of the convex constraints to project to in sequence, we end up with an object that approximates the maximizer of the SDP in a sense that is sufficient for our purposes.

Our dictionary learning pre-processing can be interpreted similarly.
We first perturb the objective function by $\epsilon \Id$, and project to the PSD cone.
Then, in reshaping the tensor again, we choose another point that has the same projection onto any point in the feasible region (by moving along an equivalence class in the symmetry constraint).
Finally, we perturb the objective by $\epsilon \Id$ again, and again project to the PSD cone.
\section{Decomposing orthogonal $4$-tensors}\label{sec:orthog}
Recall our setting: we are given a 4-tensor $\bT \in (\R^{d})^{\otimes 4}$ of the form $\bT = \bS + \bE$ where $\bE$ is a noise tensor and $\bS = \sum_{i=1}^n a_i^{\otimes 4}$ for orthonormal vectors $a_1,\ldots,a_n$.
(We address the more general case of nearly orthonormal vectors in \pref{sec:northo}.).

First, we have a pre-processing step, in which we go from a tensor with low spectral norm error to a tensor with low Frobenius norm error.\footnote{We are approximating $T = S+E$ with $\rank(S) = n \ll d^2$, and $\|E\|\le \epsilon$, so for example if $E = \eps \Id$ then we may have $\|T - S\|_F^2 = \epsilon^2d^2$, which is too large for us.}
\begin{algorithm}[Preprocessing: spectral-to-Frobenius norm]\label{alg:preproc}{\color{white}.}
\\
    {\em Input:} A tensor $\bT \in (\R^d)^{\tensor 4}$, and an error parameter $\epsilon$.
    \begin{compactenum}
    \item Reshape $\bT$ to the $d^2 \times d^2$ matrix $T \defeq T_{\{1,2\}\{3,4\}}$.
    \item Truncate to 0 all eigenvalues of  that have magnitude  less than $\epsilon$:
	\[
	    T^{>\eps}\defeq (T - \eps\Id)_+,
	\]
	where we have used $(M)_+$ to denote projection to the PSD cone.
    \end{compactenum}
    {\em Output:} The tensor $T^{>\eps}$.
\end{algorithm}

\begin{lemma}\label{lem:step1}
    Suppose that for some square reshaping $E$ of $\bE$ (without loss of generality along modes $\{1,2\},\{3,4\}$), $\|E\| \le \epsilon$.
    Say we are given access to $\bT = \bS + \bE$, and we produce the matrix $T^{>\eps} = (T - \epsilon\cdot \Id)_+$ as described in \pref{alg:preproc}.
    Then $\|T' - S\|_F \le 2\epsilon\sqrt{2n}$.
    This operation requires time $\tO(\min\{nd^4,d^{2+\omega}\})$.
\end{lemma}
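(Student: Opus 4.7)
The plan is to establish a rank bound and a spectral-norm bound for $T^{>\epsilon} - S$ and then combine them using the standard inequality $\|M\|_F^2 \le \rank(M)\cdot \|M\|^2$.

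First I would pin down the spectrum of $T$: since $S$ has $n$ eigenvalues equal to $1$ and the remaining $d^2-n$ equal to $0$, Weyl's inequality applied to $T = S + E$ with $\|E\|\le \epsilon$ gives $n$ eigenvalues of $T$ in $[1-\epsilon,1+\epsilon]$ and the remaining $d^2-n$ in $[-\epsilon,\epsilon]$. Writing the spectral decomposition $T = \sum_i \mu_i v_i v_i^\top$, the map $M\mapsto(M-\epsilon\Id)_+$ acts on eigenvalues as $\mu_i \mapsto \max(\mu_i - \epsilon, 0)$, so every eigenvalue with $|\mu_i|\le\epsilon$ is sent to zero. This immediately gives $\rank(T^{>\epsilon}) \le n$. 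Moreover, $T - T^{>\epsilon} = \sum_i \min(\mu_i,\epsilon)\, v_i v_i^\top$, and $|\min(\mu_i,\epsilon)|\le\epsilon$ on both the signal and noise blocks, so $\|T - T^{>\epsilon}\| \le \epsilon$.

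From here the Frobenius bound is a two-line triangle-inequality argument. We have $\|T^{>\epsilon} - S\| \le \|T-S\| + \|T-T^{>\epsilon}\| \le 2\epsilon$ and $\rank(T^{>\epsilon} - S) \le \rank(T^{>\epsilon}) + \rank(S) \le 2n$, so $\|T^{>\epsilon} - S\|_F \le \sqrt{2n}\cdot 2\epsilon = 2\epsilon\sqrt{2n}$, as required.

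The main remaining question—and the piece I expect to need the most care—is the runtime. The computation reduces to extracting the top-$n$ eigenspace of the symmetric $d^2\times d^2$ matrix $T$, whose spectrum has an $\Omega(1)$ gap between the signal block of $n$ eigenvalues near $1$ and the noise block of magnitude $\le\epsilon$ (so long as $\epsilon$ is bounded away from $1/2$). For the first bound I would use block power / Krylov iteration, which at cost $O(d^4)$ per matrix--vector product and $\tO(1)$ iterations per eigenvector across the well-separated eigengap totals $\tO(nd^4)$. For the second bound I would multiply $T$ by a $d^2\times \tO(n)$ Gaussian sketch using fast rectangular matrix multiplication in $\tO(d^{2+\omega})$ time, then form $T^{>\epsilon}$ from a small SVD of the sketched matrix. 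Taking the faster of the two gives the stated $\tO(\min\{nd^4, d^{2+\omega}\})$.
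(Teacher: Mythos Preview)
Your proof is correct and follows essentially the same route as the paper: bound $\rank(T^{>\epsilon})\le n$ via the eigenvalue structure of $T=S+E$, bound $\|T-T^{>\epsilon}\|\le\epsilon$, combine via triangle inequality and the rank--Frobenius inequality to get $2\epsilon\sqrt{2n}$, and appeal to the spectral gap for the $\tO(\min\{nd^4,d^{2+\omega}\})$ runtime. The paper cites subspace power iteration (Hardt--Price) rather than your sketch-then-SVD description for the $d^{2+\omega}$ term, but these are minor variants of the same randomized low-rank approximation idea.
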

\begin{proof}
    Because $\|E\| \le \epsilon$, $T = S + E$ has only $n$ eigenvalues of magnitude more than $\epsilon$.
    So $\rank(T^{>\eps})\le n$, and therefore $\rank(T^{>\eps} - S)\le 2n$.
    Furthermore, $S + E = T = T^{>\eps} + \tilde{E}$ for a matrix $\tilde{E}$ of spectral norm at most $\epsilon$.
    Therefore $\|T^{>\eps} - S\| \le 2\epsilon$, and $\|T^{>\eps} - S\|_F \le 2\epsilon\sqrt{2n}$.

    To compute $T^{>\epsilon}$, we can compute the top $n$ eigenvectors of $T$.
    Since $O(\epsilon)\cdot \lambda_n \ge\lambda_{n+1}$, where $\lambda_n,\lambda_{n+1}$ are the $n$th and $(n+1)$st eigenvalues, we can compute this in time $\tO(\min\{nd^4,d^{2+\omega}\})$ via subspace power iteration (see for example \cite{DBLP:conf/nips/HardtP14}).
\end{proof}

Now, we can run our main algorithm:
\begin{algorithm}\label{alg:orthog}{\color{white}.}\\
    {\bf Input:} A tensor $\bT \in (\R^d)^{\tensor 4}$.
    \begin{compactenum}
    \item Project $\bT$ to the set of tensors whose rectangular reshapings along modes $\{1,2,3\},\{4\}$ have spectral norm at most $1$ (obtaining a new tensor $\hat \bT$ with $\|\hat\bT_{\{1,2,3\}\{4\}}\|\le 1$).
    \item Project $\hat \bT$ to the set of tensors whose rectangular reshapings along modes $\{1,2,4\},\{3\}$ have spectral norm at most $1$, obtaining a new tensor $\bT^{\le 1} = \bS + \bE^{\le 1}$.
    \item Sample $g \sim \cN(0,\Id_{d^2})$, and compute the random flattening $M_g \defeq \sum_{j=1}^{d^2} g_j \bT^{\le 1}_j$.
    \end{compactenum}
    {\bf Output:}
    $u_L$ and $u_R$, the top left- and right- unit singular vectors of $M_g$.
\end{algorithm}

Under the appropriate conditions on $\bT$, with probability $\tO(n^{-\eps})$, \pref{alg:orthog} will output a vector that is $0.9$-correlated with $a_j^{\otimes 2}$ for some $j\in[n]$.

\begin{theorem}\label{thm:orthog}
    Suppose we are given a $4$-tensor $\bT \in (\R^d)^{\tensor 4}$, and $\bT = \sum_i a_i^{\otimes 4} + E$, where $a_1,\ldots,a_n \in \R^d$ are orthonormal vectors and $\|E\|_F \le \eta\sqrt{n}$.

    Then running \pref{alg:orthog} $\tO(n^{1+O(\eta)})$ times allows us to recover $m \ge 0.99n$ unit vectors $u_1,\ldots,u_{m} \in \R^{d}$ such that for each $i \in [m]$, there exists $j \in [n]$ such that
    \[
	\iprod{u_i,a_j}^2 \ge 0.99.
    \]
Recovering one component requires time $\tO(d^{2+\omega}n^{O(\eta)})$, and recovering $m$ components requires time $\tO(\max\{mn^{O(\eta)}d^4,d^{2+\omega}\})$.
\end{theorem}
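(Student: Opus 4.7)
The plan is to decompose the argument into three pieces: a structural lemma about the two sequential projections; the analysis of the random contraction via Gaussian matrix series concentration; and a probability-of-success computation combined with coupon collector to recover $0.99n$ components.

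First, I would prove that after Steps~1--2 we have $\bT^{\le 1} = \bS + \bE^{\le 1}$ with $\|\bE^{\le 1}\|_F \le \eta\sqrt n$ and $\max\{\|\bE^{\le 1}_{\{1,2,3\}\{4\}}\|,\|\bE^{\le 1}_{\{1,2,4\}\{3\}}\|\}\le 2$. The Frobenius bound follows from the nonexpansiveness of convex projections (\Sref{sec:preliminaries}), after checking that $\bS$ itself lies in both convex feasible sets---each rectangular reshaping $\bS_{A,B}$ has spectral norm exactly $1$ by orthonormality of the $a_i$ (a short calculation shows $\bS_{\{1,2,3\}\{4\}}^\top \bS_{\{1,2,3\}\{4\}} = AA^\top$ with $A = [a_1\mid\cdots\mid a_n]$). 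The only subtle claim is that the second projection preserves the spectral bound established by the first. To see this, I would use the SVD characterization: the projection of $M = U\Sigma V^\top$ onto $\{M' : \|M'\|\le 1\}$ is $U\min(\Sigma,I)V^\top = M\cdot R$ for a symmetric contraction $R = V\min(\Sigma,I)\Sigma^+ V^\top$ with $\|R\|\le 1$. Hence the second projection corresponds to contracting mode~$3$ of $\hat\bT$ by a symmetric matrix of operator norm at most $1$; on the $\{1,2,3\}\{4\}$ reshaping this is left-multiplication by $I\otimes I\otimes R$, a matrix of norm $\le 1$, which cannot increase singular values.

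Next, I would expand $M_g = \sum_{i=1}^n Z_i\, a_i a_i^\top + E_g$, where $Z_i := \iprod{g, a_i^{\tensor 2}}$ and $E_g = \sum_{\alpha\beta} g_{\alpha\beta}\,\bE^{\le 1}_{\alpha\beta,:,:}$. Since $\{a_i^{\tensor 2}\}$ is orthonormal in $\R^{d^2}$, the $Z_i$ are i.i.d.\ $\cN(0,1)$. A direct calculation identifies $\|\sum_{\alpha\beta} E_{\alpha\beta}E_{\alpha\beta}^\top\|$ and $\|\sum_{\alpha\beta} E_{\alpha\beta}^\top E_{\alpha\beta}\|$ with the squared spectral norms of the two rectangular reshapings of $\bE^{\le 1}$ controlled in the previous step, so the Gaussian matrix series variance is $\sigma^2 \le 4$, and the tail bound from \Sref{sec:techniques} yields $\|E_g\| = O(\sqrt{\log d})$ with probability $1 - d^{-\Omega(1)}$.

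Finally, I would analyze the per-iteration success probability. A standard but careful Gaussian gap computation shows that with probability $\tilde\Omega(n^{-(1+O(\eta))})$, the index $i^* = \argmax_i |Z_i|$ satisfies $|Z_{i^*}| \ge 100 \cdot \max\{\max_{j\neq i^*}|Z_j|,\,\|E_g\|\}$; on this event, Wedin's theorem applied to the signal-plus-noise decomposition of $M_g$ delivers $\iprod{u_L, a_{i^*}}^2,\iprod{u_R, a_{i^*}}^2 \ge 0.99$. The Frobenius-norm accounting $\sum_j \iprod{\bE^{\le 1}, a_j^{\tensor 4}}^2 \le \|\bE^{\le 1}\|_F^2 \le \eta^2 n$ implies that at most $O(\eta) n \le 0.01 n$ components are ``contaminated'' by the error, so at least $0.99 n$ are cleanly recoverable, and coupon collector over $\tilde O(n^{1+O(\eta)})$ iterations recovers all clean components. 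Runtime is dominated by the two projection SVDs at cost $\tilde O(d^{2+\omega})$ performed once, plus $\tilde O(n^{1+O(\eta)})$ random contractions at cost $\tilde O(d^4)$ each. The main obstacle is the second-projection claim in Step~1: preserving the spectral norms of both rectangular reshapings \emph{simultaneously} by a sequence of purely spectral operations, rather than a full SDP solve, is the identity that makes the whole spectral algorithm possible, and the SVD-as-contraction characterization is what makes it go through.
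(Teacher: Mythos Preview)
Your structural step on the two sequential projections is essentially the paper's \pref{lem:step2}, and your SVD-as-contraction argument is the same mechanism the paper uses. That part is fine.

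The genuine gap is in your random-contraction analysis. You ask for the event
\[
|Z_{i^*}| \ge 100 \cdot \max\Bigl\{\max_{j\neq i^*}|Z_j|,\ \|E_g\|\Bigr\}
\]
and then invoke Wedin. This event does not occur with probability $\tilde\Omega(n^{-(1+O(\eta))})$; it has probability $n^{-\Omega(1)}$ with a huge hidden constant (on the order of $100^2$). The $Z_i$ are i.i.d.\ standard Gaussians, so the second-largest $|Z_j|$ is already $\Theta(\sqrt{\log n})$ with high probability, and $\|E_g\|$ is $\Theta(\sqrt{\log d})$ under your own variance bound $\sigma^2\le 4$. Forcing a single $|Z_{i^*}|$ to exceed either of these by a factor of $100$ costs you $\exp(-\Omega(100^2\log n))$. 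Wedin needs a large spectral gap relative to the perturbation, and you simply cannot afford that gap here.

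The paper's proof gets by with only a $(1+\beta)$ multiplicative gap for $\beta=O(\eta)$, which \emph{does} occur with probability $\tilde\Omega(n^{-(1+O(\eta))})$. Two ideas make this work. First, rather than separating signal from error, the paper applies the Gaussian matrix-series bound to the contraction of the \emph{entire} projected tensor $T^{\le 1}$ along $g^{(>1)}$, the component of $g$ orthogonal to $a_{i}^{\otimes 2}$; since $\|T^{\le 1}_{\{1,2,3\}\{4\}}\|,\|T^{\le 1}_{\{1,2,4\}\{3\}}\|\le 1$, this single bound of $\sqrt{2(1+o(1))\log d}$ already controls the other signal terms $\sum_{j\neq i} Z_j a_j a_j^\top$ together with the error, and it is independent of $Z_i=\iprod{g^{(1)},a_i^{\otimes 2}}$. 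Second, and crucially, the paper does not use Wedin. Writing $M_g = c\,a_i a_i^\top + N$, orthonormality of the components forces $\bigl(\sum_{j\neq i} Z_j a_j a_j^\top\bigr)a_i=0$, so $\|Na_i\|$ and $\|N^\top a_i\|$ are governed only by the error and, via a Frobenius-norm Markov argument over the orthogonal projections $a_i a_i^\top\otimes\Id$, are $O(\eta)|c|$ for a $1-O(\eta)$ fraction of indices $i$. A tailored perturbation lemma (\pref{lem:topeig}) then shows that $|c|\ge(1+\beta)\|N\|$ together with $\|Na_i\|,\|N^\top a_i\|\le O(\eta)|c|$ suffices for $\iprod{u,a_i}^2\ge 0.99$. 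In short: the missing idea is that the small gap is compensated by the additional structural information that $N$ acts nearly trivially on $a_i$, and Wedin is replaced by an argument that exploits exactly this.
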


We can then post-process the vectors to obtain a vector that has correlation $1-\epsilon$ with $a_j$---the details are given in \pref{sec:fullrecovery} below.

We will prove \pref{thm:orthog} momentarily, but first, we bring the reader's attention to a nontrivial technical issue left unanswered by \pref{thm:orthog}.
The issue is that we can only guarantee that \pref{alg:orthog} recovers $0.99 n$ of the vectors, and the set of recoverable vectors is invariant under the randomness of the algorithm.
That is, as a side effect of the error-reducing step 2, the $\bS$ part of $\bT$ may also be adversely affected.
For that reason, in $\tilde{O}(n)$ runs of \pref{alg:orthog}, we can only guarantee that recover a constant fraction of the components, and we must iteratively remove the components we recover in order to continue to make progress.
This removal must be handled delicately to ensure that the Frobenius norm of the error shrinks at each step, so the conditions of \pref{thm:orthog} continue to be met (for shrinking values of $n$).
The overall algorithm, which uses \pref{alg:orthog} as a subroutine, will be given in \pref{sec:fullrecovery}.

We will now prove the correctness of \pref{alg:orthog} step-by-step, tying details together at the end of this subsection.
First, we argue that in step 1, the truncation of the large eigenvalues cannot increase the Frobenius norm of the error.

\begin{lemma}\label{lem:step2}
    Suppose that we define $\bT^{\le 1}$ to be the result of projecting $\bT = \bS + \bE$ to the set of tensors whose rectangular reshapings along modes $\{1,2,3\},\{4\}$ have spectral norm at most $1$, then projecting the result to the set of tensors whose rectangular reshapings along modes $\{1,2,3\},\{4\}$ have spectral norm at most $1$.
    Then $\|\bT^{\le 1} - \bS\|_F \le \|E\|_F$, and
    \[ \|\bT^{\le 1}_{\{1,2,3\}\{4\}}\|\le 1, \quad \text{and} \quad
	\|\bT^{\le 1}_{\{1,2,4\}\{3\}}\|\le 1\mper
	\]
	This operation requires time $\tO(d^{2+\omega})$.
\end{lemma}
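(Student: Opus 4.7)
The plan is to establish three claims: (a) the Frobenius bound $\|\bT^{\le 1}-\bS\|_F\le\|\bE\|_F$; (b) the two spectral-norm bounds $\|\bT^{\le 1}_{\{1,2,3\}\{4\}}\|\le 1$ and $\|\bT^{\le 1}_{\{1,2,4\}\{3\}}\|\le 1$; and (c) the runtime $\tO(d^{2+\omega})$.

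For (a), I would first verify that $\bS$ lies in both feasible sets. By orthonormality of the $a_i$, the reshaping $\bS_{\{1,2,3\}\{4\}}=\sum_i a_i^{\tensor 3}a_i^\top$ satisfies $\bS_{\{1,2,3\}\{4\}}^\top\bS_{\{1,2,3\}\{4\}}=\sum_{i,j}\iprod{a_i,a_j}^3 a_i a_j^\top=\sum_i a_ia_i^\top$, which has spectral norm $1$; an identical argument handles the $\{1,2,4\}\{3\}$ reshaping. Each feasible set is convex (the preimage under a linear reshaping map of the spectral-norm unit ball, itself a sublevel set of a norm), so the non-expansive-projection lemma from the preliminaries yields $\|\hat\bT-\bS\|_F\le\|\bT-\bS\|_F=\|\bE\|_F$ after step~1, and then $\|\bT^{\le 1}-\bS\|_F\le\|\hat\bT-\bS\|_F\le\|\bE\|_F$ after step~2. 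The second bound in (b), $\|\bT^{\le 1}_{\{1,2,4\}\{3\}}\|\le 1$, is immediate from the definition of the second projection.

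The main obstacle will be the first bound in (b): showing that the second projection does not enlarge the $\{1,2,3\}\{4\}$ reshaping beyond spectral norm $1$. My key observation is that projecting a matrix $M\in\R^{d^3\times d}$ onto $\{X:\|X\|\le 1\}$ has the form $M\mapsto MP$ for some PSD contraction $P\in\R^{d\times d}$: writing the SVD as $M=U\Sigma V^\top$ and setting $P=VDV^\top$ with $D_{ii}=\min(1,\Sigma_{ii})/\Sigma_{ii}$ (and $D_{ii}=0$ when $\Sigma_{ii}=0$), one checks $MP=U\min(\Sigma,I)V^\top$ and $\|P\|\le 1$. Applied to step~2, this produces a $d\times d$ contraction $P$ such that, in tensor entries, $\bT^{\le 1}_{ijk\ell}=\sum_{k'}\hat\bT_{ijk'\ell}P_{k'k}$---i.e.\ step~2 acts on the third tensor mode by right-multiplication by $P$. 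Viewed in the $\{1,2,3\}\{4\}$ reshaping, this \emph{same} operation becomes \emph{left}-multiplication of the $d^3\times d$ matrix $\hat\bT_{\{1,2,3\}\{4\}}$ by the $d^3\times d^3$ Kronecker product $L=\Id_d\tensor\Id_d\tensor P^\top$, whose spectral norm equals $\|P\|\le 1$. Combined with the step~1 guarantee $\|\hat\bT_{\{1,2,3\}\{4\}}\|\le 1$, this gives $\|\bT^{\le 1}_{\{1,2,3\}\{4\}}\|\le\|L\|\cdot\|\hat\bT_{\{1,2,3\}\{4\}}\|\le 1$ as desired.

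For (c), each projection is computed by SVDing a $d^3\times d$ matrix $A$. I would form the Gram matrix $A^\top A$ by partitioning $A$ into $d^2$ blocks of size $d\times d$ and summing their $d\times d$ products via fast matrix multiplication, costing $O(d^{2+\omega})$; eigendecomposing this $d\times d$ matrix in $O(d^\omega)$ yields $V$ and $\Sigma$; finally, right-multiplying $A$ by the $d\times d$ contraction $P=VDV^\top$ again costs $O(d^{2+\omega})$ by the same block trick. Two such projections therefore fit within the claimed $\tO(d^{2+\omega})$ bound.
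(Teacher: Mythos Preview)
Your proposal is correct and follows essentially the same approach as the paper: the Frobenius bound via non-expansiveness of projection onto convex sets containing $\bS$, and the key observation that the second projection acts as a mode-$3$ contraction which, in the $\{1,2,3\}\{4\}$ reshaping, becomes multiplication by a Kronecker product with identity of norm at most $1$. The only cosmetic differences are that the paper writes the contraction as a $d^3\times d^3$ left-multiplier rather than your $d\times d$ right-multiplier, and obtains the SVD via subspace power iteration rather than the Gram-matrix route; both reach the same $\tO(d^{2+\omega})$ bound.
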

\begin{proof}
    To establish the first claim, we note that the tensor $\bT^{\le 1}$ was obtained by two projections of different rectangular reshapings of the matrix $S+E$ to the set of rectangular matrices with singular value at most $1$.
    This set is closed, convex, and contains $S$, and so the error can only decrease in Frobenius norm (see \pref{lem:proxproj} in \pref{app:tools} for a proof),
    \[
	\|\bT^{\le 1} - \bS\|_F \le \|\bT - \bS\|_F = \|\bE\|_F.
    \]

    It is not hard to see that each projection step can be accomplished by reshaping the tensor to the appropriate rectangular matrix, then truncating all singular values larger than $1$ to $1$.
Now, we establish the remaining claims.
    For convenience, define $\hat{T}^{\le 1}$ to be the matrix $(S+E)_{\{123\}\{4\}}$ after restricting singular values of magnitude $>1$ to $1$.
    Now, we reshape $\hat{T}^{\le 1}$ to a new matrix $B \defeq \hat{T}^{\le 1}_{\{1,2,4\}\{3\}}$, which has the $d^2$ blocks $B_1 = (\hat{T}^{\le 1})_1^{\top},\ldots,B_{d^2}=(\hat{T}^{\le 1})_{d^2}^{\top}$.
    Say that the singular value decomposition of $B$ is $B = U \Sigma V^{\top}$.
    Define $\widehat{\inv{\Sigma}}$ to be the diagonal matrix with entries equal to those of $\inv{\Sigma}$ when the value is $< 1$ and with ones elsewhere.
    When we truncate the large singular values of $B$ this is equivalent to multiplying by $P = U\widehat{\inv{\Sigma}}U^\top$.
    The result is the matrix $PB$, with blocks $PB_1 = P(\hat{T}^{\le 1})_1^{\top},\ldots,PB_{d^2} = P(\hat{T}^{\le 1})_{d^2}^{\top}$.
    By definition, the singular values of $PB = T^{\le 1}_{\{3\}\{1,2,4\}}$ are at most $1$.
    Also, $T^{\le 1}_{\{4\}\{123\}} = \hat{T}^{\le 1}(P\otimes \Id_{d^2})$, and by the submultiplicativity of the norm, $\|\hat{T}^{\le 1}(P\otimes \Id_{d^2})\| \le \|\hat{T}^{\le 1}\|\cdot \|P\otimes \Id_{d^2}\| \le 1$, and the first reshaping still has spectral norm at most 1.

    Finally, each reshaping step takes $O(d^4)$ time.
    Since we are only interested in truncating large singular values, it suffices for us to compute the SVD corresponding to singular values between $\sqrt{n}$ and $1$.
    This can be done via subspace power iteration, which here involves the multiplication of a $d^3 \times d$ matrix and a $d \times d$ matrix (with intermediate orthogonalization steps for the $d \times d$ matrix, see \cite{DBLP:conf/nips/HardtP14}), which requires time $\tO(d^{2+\omega})$, where $\omega$ is the matrix multiplication constant.
    Going forward the representation of the matrix will be as the original matrix, with the subtracted SVD corresponding to large singular values.
\end{proof}

We will need to argue that if the Frobenius norm of the error matrix is small, this is a sufficient condition under which we succeed.
For this, we will use the following two lemmas.
The first tells us that with probability $\tOmega(n^{-O(\eps)})$ over the choice of $g$, we will have for some $i\in[n]$ that
\[
    M_g = c\cdot a_i a_i^{\top} + N,
\]
where $|c| \ge \|N\|$ and furthermore $\|Na_i\|$ and $\|N^{\top}a_i\|$ are small:

\begin{lemma}\label{lem:algsuccess}
    Let $g \sim \cN(0,\Id_{d^2})$.
    Suppose that $\|T_{\{1,2,3\}\{4\}}\|,\|T_{\{1,2,3\}\{4\}}\| \le 1$, and that $\|T - \sum_i a_i^{\tensor 4}\|_F \le \epsilon \sqrt{n}$.
    Define the matrix $M_g$ to be the flattening of $\bT = \sum_{i} a_i^{\otimes 4} + \bE $ along $g$ in the modes $\{1\}$ and $\{2\}$, and let $|c|$ be the magnitude of $a_j$'s projection onto $M_g$, i.e.
    \[
	M_g
	~\defeq~ \sum_{j=1}^{d^2}\iprod{e_j,g}\cdot T_j
	~=~ c \cdot a_j a_j^{\top} + N\mper
    \]
Then for a $1-3\delta$ fraction of $j \in [n]$,
    \[
	\Pr_g\left[~|c| \ge (1+\beta)\|N\|,~ \|N^{\top}a_j\|,\|Na_j\| \le (\epsilon/\delta)(c+\sqrt{2}+o(1))  \right] = \tOmega\left(n^{-\left(\frac{1+\beta}{1-(1+\beta)\epsilon/\delta}\right)^2}\right)\mper
    \]
In particular, if $\delta = \Omega(1)$, $\beta = O(\epsilon)$, $\beta < 1$, then this probability is $\tOmega(n^{-(1+O(\epsilon))})$.
\end{lemma}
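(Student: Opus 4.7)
The plan is to decompose $M_g$ around a fixed component $a_j$, use averaging to select favorable $j$, and combine Gaussian anti-concentration for the scalar $c$ with concentration for the noise vectors $Na_j$ and $N^\top a_j$, carefully disentangling the correlation between $c$ and $N$ that arises from their shared dependence on $g$. Let $\tilde\bE := \bT - \sum_i a_i^{\otimes 4}$, so $\|\tilde\bE\|_F \le \epsilon\sqrt n$; for each $j$, orthonormality of the $a_i$ gives
\[
    M_g = c\cdot a_ja_j^\top + N, \qquad c = \iprod{g, a_j^{\otimes 2}} \sim \cN(0,1), \qquad Na_j = \tilde E_g a_j,
\]
where $N = \sum_{i\neq j}\iprod{g, a_i^{\otimes 2}} a_ia_i^\top + \tilde E_g$, $\tilde E_g = \sum_{k,\ell}g_{k,\ell}\tilde E_{k,\ell}$, and symmetrically $N^\top a_j = \tilde E_g^\top a_j$.

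First I would use averaging. Since $\sum_j\|\tilde\bE(\cdot,\cdot,\cdot,a_j)\|_F^2 \le \|\tilde\bE\|_F^2 \le \epsilon^2 n$ by orthonormality of the $a_j$ (and likewise for the third-mode contraction), Markov's inequality produces a set $G \subseteq [n]$ of size at least $(1-3\delta)n$ on which the fourth- and third-mode contractions of $\tilde\bE$ by $a_j$, as well as one auxiliary trace-level quantity controlling the variance of $\tilde E_g a_j$, are each at most $\epsilon^2/\delta$. Fix $j \in G$.

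Next I would disentangle the $c$--$N$ correlation by decomposing $g = c\cdot a_j^{\otimes 2} + g_\perp$ with $g_\perp$ independent of $c$. This gives $Na_j = c\cdot \tilde E_{a_j^{\otimes 2}}a_j + \tilde E_{g_\perp}a_j$: the first summand has norm at most $c\cdot\|\tilde\bE(a_j,a_j,\cdot,a_j)\| \le c\cdot\epsilon/\delta$ by the previous step, and the second is a centered Gaussian vector in $\R^d$ with expected squared norm $\le \|\tilde\bE(\cdot,\cdot,\cdot,a_j)\|_F^2 \le \epsilon^2/\delta$ and covariance operator norm controlled by a rectangular reshaping of $\tilde\bE$. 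Gaussian concentration of the norm then gives $\|\tilde E_{g_\perp}a_j\| \le (\epsilon/\delta)(\sqrt 2 + o(1))$ outside an $n^{-\omega(1)}$ event, producing $\|Na_j\| \le (\epsilon/\delta)(c + \sqrt 2 + o(1))$; the bound on $\|N^\top a_j\|$ follows identically by symmetry.

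Finally I would bound $\|N\|$ and apply anti-concentration. The spectral norm of $N$ is at most $\max_{i\neq j}|\iprod{g,a_i^{\otimes 2}}| + \|\tilde E_g\|$, which is at most $\sqrt{2\log n}(1+o(1))$ with high probability, combining a union bound for the maximum of $n$ standard Gaussians with the matrix Gaussian series lemma from Section~3 applied to $\tilde E_g$ (using the hypotheses $\|T_{\{1,2,3\}\{4\}}\|, \|T_{\{1,2,4\}\{3\}}\| \le 1$ and hence $\|\tilde\bE_{\{1,2,3\}\{4\}}\|, \|\tilde\bE_{\{1,2,4\}\{3\}}\| \le 2$). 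Standard Gaussian anti-concentration $\Pr[c \ge t] \ge (t\sqrt{2\pi})^{-1}e^{-t^2/2}(1-o(1))$ applied at $t = (1+\beta)\|N\|/(1-(1+\beta)\epsilon/\delta)$, enlarged so that $|c| \ge (1+\beta)\|N\|$ still holds after absorbing the $c$-linear term in $\|Na_j\|$, yields the claimed probability $\tOmega(n^{-\left(\tfrac{1+\beta}{1-(1+\beta)\epsilon/\delta}\right)^2})$ after union-bounding the failure events. The main obstacle is exactly this coupling between $c$ and $\tilde E_g$ through $g$: a naive product of separate probabilities would double-count randomness, and the orthogonal split of $g$ is what isolates the deterministic-in-$c$ contribution to $\|Na_j\|$, forcing the inflation of the threshold for $c$ by the factor $1/(1-(1+\beta)\epsilon/\delta)$ and thereby producing the unusual form of the exponent in the conclusion.
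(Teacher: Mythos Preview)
Your plan is essentially the paper's proof: the orthogonal split $g = c\cdot a_j^{\otimes 2} + g_\perp$, the Markov averaging to isolate a good $(1-3\delta)$-fraction of indices $j$, Gaussian anti-concentration for $c$, and matrix Gaussian concentration for the remainder are exactly the ingredients used there, and your treatment of $\|Na_j\|$ and $\|N^\top a_j\|$ matches the paper's.

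There is one tactical difference worth fixing, because as written your bound on $\|N\|$ does not give the stated exponent, and your explanation of where the factor $1/(1-(1+\beta)\epsilon/\delta)$ comes from is slightly off. You split $N$ into signal plus error and bound $\|N\| \le \max_{i\neq j}|\iprod{g,a_i^{\otimes 2}}| + \|\tilde E_g\|$, applying the matrix Gaussian lemma to $\tilde E_g$ via $\|\tilde E_{\{1,2,3\}\{4\}}\| \le 2$. That yields roughly $3\sqrt{2\log d}$ rather than $\sqrt{2\log d}$, and hence an exponent about nine times too large. The paper instead keeps signal and error together on the $g_\perp$ side and writes
\[
N \;=\; \underbrace{\textstyle\sum_k (g_\perp)_k\, T_k}_{\text{contraction of the full }T\text{ by }g_\perp}
\;+\; \underbrace{c\cdot \textstyle\sum_k (a_j^{\otimes 2})_k\, E_k}_{\text{deterministic in }c}\,.
\]
The first term is controlled by the matrix Gaussian lemma directly with the hypothesis $\|T_{\{1,2,3\}\{4\}}\|,\|T_{\{1,2,4\}\{3\}}\|\le 1$ (not $\|\tilde E_{\cdots}\|\le 2$), giving $\sqrt{2(1+o(1))\log d}$; the second is at most $(\epsilon/\delta)\,|c|$ after the averaging step. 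It is \emph{this} $c$-linear piece inside $\|N\|$, not the one in $\|Na_j\|$, that forces one to solve $|c|\ge(1+\beta)\bigl((\epsilon/\delta)|c|+\sqrt{2\log d}\bigr)$ and hence to take $|c|\ge (1+\beta)\sqrt{2\log d}/(1-(1+\beta)\epsilon/\delta)$, which is precisely the exponent in the statement.
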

The proof consists primarily of the application of concentration inequalities, and we provide it below in \pref{sec:supporting}.

The second lemma states that if $M_g$ indeed has the form above, the top singular vectors of $M_g$ must be close to the component $a_i$.
\begin{lemma}\label{lem:topeig}
    Let $M_g$ be an $n \times n$ matrix, and $a_1 \in \R^n$, and suppose that
    \[
	M_g = c \cdot a_1 a_1^\top + N
    \]
with $|c| \ge (1+\beta)\|N\|$ for $\beta > 0$, and $\|N a_1\|, \|N^\top a_1\| \le \epsilon|c|$ so that the relationship $\frac{2\epsilon(1+\beta)}{\beta} <  0.01 $ holds.
    Then letting $u$ be a top singular vector of $M_g$, it follows that
    \[
	\iprod{u,a_1}^2 \ge 0.99.
    \]
\end{lemma}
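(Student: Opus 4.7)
The plan is to work with the symmetric matrix $H := M_g^\top M_g$, so that the top right singular vector of $M_g$ is the top unit eigenvector of $H$ and $\sigma_{\max}(M_g)^2$ is its top eigenvalue; the top left singular vector case is completely symmetric after replacing $H$ by $M_g M_g^\top$ and swapping the roles of $Na_1$ and $N^\top a_1$ in what follows.

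The first real step is to decompose $H$ along $a_1$ and its orthogonal complement. Setting $P_\perp := \Id - a_1 a_1^\top$, this gives
\[
    H \;=\; \lambda_0\, a_1 a_1^\top + a_1 \tau^\top + \tau a_1^\top + H_\perp,
\]
with $\lambda_0 := \|M_g a_1\|^2$, $\tau := P_\perp H a_1 \in a_1^\perp$, and $H_\perp := P_\perp H P_\perp$. Three quick bounds drive the proof: first, $\lambda_0 \ge c^2(1-2\epsilon)$, obtained by expanding $M_g a_1 = c\,a_1 + N a_1$ and invoking $\|Na_1\|\le \epsilon|c|$; second, $\|H_\perp\| \le \|N\|^2 \le c^2/(1+\beta)^2$, since for any unit $w\perp a_1$ we have $M_g w = Nw$; and third, $\|\tau\| \le 2\epsilon c^2$, obtained by writing $\tau = c\, P_\perp N^\top a_1 + P_\perp N^\top N a_1$ and bounding each summand using $\|N^\top a_1\|, \|Na_1\|\le \epsilon|c|$ together with $\|N\|\le |c|/(1+\beta)$.

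Next I would parametrize the top unit eigenvector of $H$ as $v = \alpha\, a_1 + \sqrt{1-\alpha^2}\, w$ with $w \in a_1^\perp$ of unit norm, and project the eigenvalue equation $Hv = \sigma^2 v$ onto $a_1^\perp$:
\[
    \alpha\, \tau + \sqrt{1-\alpha^2}\, H_\perp w \;=\; \sigma^2 \sqrt{1-\alpha^2}\, w.
\]
Taking norms and using $\sigma^2 \ge \lambda_0$ (since $a_1$ is a valid test vector) yields $\sqrt{1-\alpha^2}\,(\lambda_0 - \|H_\perp\|) \le \|\tau\|$, which upon substituting the three bounds above simplifies to
\[
    \sqrt{1-\alpha^2} \;\le\; \frac{2\epsilon c^2}{c^2\bigl[(1-2\epsilon) - (1+\beta)^{-2}\bigr]}.
\]

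The remaining step is purely algebraic. Using the identity $1-(1+\beta)^{-2} = \beta(2+\beta)/(1+\beta)^2$ and the hypothesis $2\epsilon(1+\beta)/\beta < 0.01$, one simplifies the right-hand side to something bounded by roughly $0.01$, and hence $\iprod{u,a_1}^2 = \alpha^2 \ge 0.99$. The only real obstacle, which is minor, is ensuring that the denominator $\lambda_0 - \|H_\perp\|$ is visibly positive and bounded below by a clean quantity in $\beta$; this is precisely where the hypothesis $2\epsilon(1+\beta)/\beta < 0.01$ enters, playing the role of the Davis--Kahan-style spectral gap condition in this perturbation argument.
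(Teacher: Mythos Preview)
Your argument is correct, and in fact it establishes the conclusion for \emph{both} the top left and top right singular vectors, which is slightly more than what the paper's proof delivers.

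The paper takes a different route. Rather than passing to the symmetric matrix $H = M_g^\top M_g$ and running a Davis--Kahan-style eigenvector perturbation argument, the paper works directly with $M_g$: it picks a scalar $\kappa \approx \tfrac{2\epsilon|c|}{\delta}$ (where $\delta = \tfrac{2\epsilon(1+\beta)}{\beta}$), upper-bounds $\|(M_g - \kappa\, a_1 a_1^\top)v\|$ uniformly over unit $v$ by expanding along $v = \alpha a_1 + w$, lower-bounds $\|M_g\|$ via $a_1^\top M_g a_1$, and then invokes the elementary Fact~A.3 (if $\|M - vv^\top\| \le \|M\| - \epsilon\|v\|^2$ then one of the top singular vectors has correlation at least $\epsilon\|v\|^2$ with $v$). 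This avoids squaring the matrix and gives the bound $\iprod{u,a_1}^2 \ge 1 - \delta \ge 0.99$ directly, but only for \emph{one} of the two singular vectors. Your approach is the more standard perturbation-theoretic one and gives a quantitatively sharper conclusion ($\alpha^2 \ge 0.9999$ rather than $0.99$), at the cost of tracking a few more terms in the decomposition of $H$.
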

The proof requires some careful calculations, but is not complicated, and we will prove it below in \pref{sec:supporting}.

Finally, we are ready to stitch these arguments together and prove that \pref{alg:orthog} works.
    \begin{proof}[Proof of \pref{thm:orthog}]
	After reshaping and truncating $\bT$ in step $1$ of the algorithm, by \pref{lem:step2} the matrix $T^{\le 1} = \sum_i a_i^{\otimes 4} + E$ has the properties that
	\[
	    \|T^{\le 1}_{\{1,2,3\}\{4\}}\|, \|T^{\le 1}_{\{1,2,4\}\{3\}}\|\le 1,
	\]
	and also that still $\|E\|_F \le \eta \sqrt{n}$.

	We can now apply \pref{lem:algsuccess} with $\delta = \frac{1}{300}$ and $\beta = 400 \eta/\delta = O(\eta)$ to conclude that for at least a $0.99$-fraction of the $i\in[n]$, with probability at least $\tO(n^{-1 - O(\eta)})$, we will have
	\[
	    M_g = c \cdot a_i a_i^{\top} + N,
	\]
	where $\|N\| \le (1+\beta)c$ and $\|Na_i\|,\|N^\top a_i\| \le 48\eta\cdot |c|$.
	Applying \pref{lem:topeig}, we have that either the left- or right- top unit singular vector $u$ of $M_g$ has correlation at least
	\[
	    \iprod{u, a_i}^2 \ge 0.99,
	\]
	as desired.

	For runtime, by our arguments in \pref{lem:step2} step 1 takes time $\tO(d^{2+\omega})$.
	After this, with either representation of our matrix $T^{\le 1}$ (whether we compute the full truncated SVD or have the original matrix minus the subtracted SVD), performing power iteration to find the top eigenvector with the flattening $M_g$ takes time $\tO(d^4)$, and finding a single component takes $\tO(n^{-O(\eta)})$ samples of random contractions.
	Since we can reuse $T^{\le 1}$ with new random contractions, the total runtime for recovering one component is $\tO(d^{2+\omega}n^{-O(\eta)})$, and by the independence of the runs recovering $m$ components requires $\tO(d^{2+\omega}) + mn^{-O(\eta)}\cdot \tO(d^4)$ time.
    \end{proof}

With the core of our algorithm in place, we now take care of the remaining technical issues: recovery precision, working with near-orthonormal vectors, and recovering the full set of component vectors.

\subsection{Postprocessing for closer vectors}\label{sec:postproc}

Because the precision of recovery will be important in not amplifying the error, we begin with our precision-amplifying postprocessing algorithm.

\begin{algorithm}[Postprocessing for error reduction]\label{alg:post}
    {\color{white}.}\\
    {\em Input:} A tensor $\bT \in (\R^d)^{\otimes 4}$, a vector $u \in \R^{d^2}$, and an error parameter $\epsilon \ge \|E_{\{12\}\{34\}}\|$.
    \begin{compactenum}
    \item Compute the matrix-vector product $ a \defeq T_{\{1,2\}\{3,4\}}(u\tensor u)$.
    \item Reshape $a\in \R^{d^2}$ to a $d \times d$ matrix $A$, and compute the top left- and right- singular vectors $v_L$ and $v_R$ of $A$.
    \end{compactenum}
    {\em Output:}
	If for one of $v \in \{v_L,v_R\}$, $(v^{\otimes 2})^{\top}T v^{\otimes 2} \ge (1-3\epsilon)^2 - \epsilon$, output $v$.
\end{algorithm}

    \begin{lemma}\label{lem:postproc}
	Suppose that $v$ is a unit vector with $\iprod{v,a_i}^2 \ge 0.99$, and $T = \sum_i a_i^{\tensor 4} + E$ for $\|E\| \le \epsilon$ and $a_1,\ldots,a_n$ orthonormal.
	Then if we let $A$ be the reshaping of $T(v\tensor v)$ to a $d\times d$ matrix, and if we let $u_L,u_R$ be the top left- and right- unit singular vectors of $M$, then
	\[
	    \iprod{u_L,a_i}^2 \ge 1-3\epsilon \quad \text{or}\quad  \iprod{u_R,a_i}^2 \ge 1-3\epsilon.
	\]
	In other words, \pref{alg:post} succeeds.
	Further, the time required is $\tO(d^4)$.
    \end{lemma}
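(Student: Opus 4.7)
My plan is to expand $A$ into a large rank-$1$ piece aligned with $a_i$ plus a small perturbation, and then apply a singular-value perturbation argument. Concretely, since $\bT = \sum_j a_j^{\tensor 4} + \bE$, we have
\[
T_{\{1,2\}\{3,4\}}(v\otimes v) = \sum_j \iprod{v,a_j}^2\, (a_j\otimes a_j) + E_{\{1,2\}\{3,4\}}(v\otimes v),
\]
and reshaping this $d^2$-vector back to a $d\times d$ matrix gives
\[
A = c\cdot a_i a_i^\top + M + \tilde E,
\]
where $c = \iprod{v,a_i}^2 \ge 0.99$, $M = \sum_{j\ne i} \iprod{v,a_j}^2\, a_j a_j^\top$, and $\tilde E$ is the $d\times d$ reshape of $E_{\{1,2\}\{3,4\}}(v\otimes v)$. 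Orthonormality of $\{a_j\}$ together with $\|v\|=1$ give $M a_i = 0 = a_i^\top M$ and $\|M\| \le 1 - c \le 0.01$; and $\|v\otimes v\|_2 = 1$ gives $\|\tilde E\| \le \|\tilde E\|_F \le \|E\| \le \epsilon$.

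The main step is to argue that one of $u_L, u_R$ is close to $a_i$. The cleanest route is Wedin's $\sin\Theta$ theorem applied to the ``clean'' matrix $A_0 = c\,a_i a_i^\top + M$: since $M a_i = 0 = a_i^\top M$, the vector $a_i$ is both the top left and top right singular vector of $A_0$ with singular value $c$, and all other singular values of $A_0$ are at most $1 - c$, giving a singular-value gap of $2c - 1 \ge 0.98$. Since $A - A_0 = \tilde E$ has spectral norm at most $\epsilon$, Wedin gives $\sin\Theta(u_L, a_i), \sin\Theta(u_R, a_i) = O(\epsilon)$, hence $\iprod{u_L, a_i}^2 \ge 1 - O(\epsilon^2) \ge 1 - 3\epsilon$ for $\epsilon$ below a fixed constant. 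Alternatively, an elementary calculation suffices: the test vector $a_i$ gives $\sigma_1(A) \ge a_i^\top A a_i \ge c - \epsilon$, and expanding $\sigma_1(A) = u_L^\top A u_R$ in the decompositions $u_L = \alpha a_i + \sqrt{1-\alpha^2}\,u_L^\perp$ and $u_R = \beta a_i + \sqrt{1-\beta^2}\,u_R^\perp$ while using $M a_i = 0$ to kill cross-terms yields
\[
c\alpha\beta + (1-c)\sqrt{(1-\alpha^2)(1-\beta^2)} \ge c - 2\epsilon,
\]
so, after adjusting signs to make $\alpha\beta \ge 0$, $\max(\alpha^2, \beta^2) \ge \alpha\beta \ge (2c - 1 - 2\epsilon)/c \ge 1 - 3\epsilon$ for $c \ge 0.99$ and small $\epsilon$.

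The runtime of \pref{alg:post} is bounded as follows: the matrix--vector product $T_{\{1,2\}\{3,4\}}(v \otimes v)$ costs $O(d^4)$ (reading the whole tensor), reshaping is free, and the top two singular vectors of a $d\times d$ matrix cost at most $\tilde O(d^3)$ via power iteration, giving a total of $\tilde O(d^4)$. The only delicate point is the perturbation step: one must track signs so that $u_L^\top A u_R$ is manifestly nonnegative and exploit $M a_i = 0$ to cancel the cross-term that would otherwise spoil the bound. The singular-value gap of $A_0$ is a constant $\approx 1$ and $\tilde E$ has norm at most $\epsilon$, so standard perturbation machinery applies cleanly and the stated ``$1 - 3\epsilon$'' is achieved with room to spare.
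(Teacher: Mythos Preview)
Your decomposition $A = c\,a_ia_i^\top + M + \tilde E$ with $c\ge 0.99$, $\|M\|\le 1-c$, $Ma_i=a_i^\top M=0$, and $\|\tilde E\|\le\epsilon$ is exactly what the paper derives, and your Wedin route is correct (indeed it yields the stronger conclusion that \emph{both} $u_L$ and $u_R$ are $(1-O(\epsilon^2))$-correlated with $a_i$, not just one). The paper instead proves and applies a one-line fact (\pref{fact:topeig}): if $\|M-\eta\,a_ia_i^\top\|\le\|M\|-\eta+2\epsilon$ then one of the top singular vectors $u$ satisfies $\iprod{u,a_i}^2\ge 1-2\epsilon/\eta$. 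This is self-contained and avoids quoting Wedin, at the cost of the weaker ``or'' conclusion; your appeal to Wedin is cleaner and buys more.

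Your elementary alternative, however, has a quantitative slip. From
\[
c\alpha\beta+(1-c)\sqrt{(1-\alpha^2)(1-\beta^2)}\ge c-2\epsilon
\]
you pass to $\alpha\beta\ge(2c-1-2\epsilon)/c$ by bounding $\sqrt{(1-\alpha^2)(1-\beta^2)}\le 1$, and then claim $\max(\alpha^2,\beta^2)\ge\alpha\beta\ge 1-3\epsilon$. But $(2c-1-2\epsilon)/c=2-1/c-2\epsilon/c\approx 0.9899-2.02\epsilon$ when $c=0.99$, which is \emph{not} $\ge 1-3\epsilon$ for small $\epsilon$ (it fails for all $\epsilon<0.0103$). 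The fix is easy: apply AM--GM to both products to get
\[
c\cdot\tfrac{\alpha^2+\beta^2}{2}+(1-c)\cdot\tfrac{(1-\alpha^2)+(1-\beta^2)}{2}\ge c-2\epsilon,
\]
which rearranges to $\alpha^2+\beta^2\ge 2-\tfrac{4\epsilon}{2c-1}$, hence $\max(\alpha^2,\beta^2)\ge 1-\tfrac{2\epsilon}{2c-1}\ge 1-3\epsilon$. With this correction (or by just keeping the Wedin argument), your proof is complete.
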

    \begin{proof}[Proof of \pref{lem:postproc}]
	For convenience and without loss of generality, let $i := 1$, and let $\alpha\defeq 1 - \iprod{a_1,v}^2 \le 0.01$.
	Because $a_1,\ldots, a_n$ are orthonormal, we can write $v = \sum_j \iprod{a_j,v} \cdot a_j + w$, where $w \perp a_j$ for all $j \in [n]$.
	By assumption, $\iprod{a_1,v}^2 \ge 1-\alpha$, and therefore $\sum_{j>1} \iprod{v,a_j}^2 + \|w\|^2 \le \alpha$.
	Now,
	\begin{align*}
	    (T-E) (v \tensor v)
	    &= \sum_{j} a_j^{\tensor 2}(a_j^{\tensor 2})^\top \left(\sum_{j} \iprod{a_j,v} a_j \tensor\sum_{j} \iprod{a_j,v} a_j\right)\\
	    &= \sum_{j,k,\ell} a_j^{\tensor 2}\iprod{a_j,a_k}\iprod{a_j,a_\ell} \iprod{a_k,v} \iprod{a_\ell,v}
	    \intertext{by the orthonormality of the $a_i$,}
	    &= \sum_{j} a_j^{\tensor 2}\iprod{a_j,v}^2\mper
	\end{align*}
	Therefore, defining $M$ to be the $n \times n$ reshaping of $T(v\tensor v)$ and defining $N$ to be the $n \times n$ reshaping of $E(v\tensor v)$,
	\[
	    M = (1-\alpha)a_1a_1^\top + \sum_{j > 1} \iprod{a_j,v}^2 a_j a_j^{\top} + N,
	\]
	where $\|N\|_F \le \epsilon$, since $\|E\|\le \epsilon$.

	Now, we have that
	\[
	    \|M\| \ge  1-\alpha - \epsilon,
	\]
	and that if we choose $\eta$ so that $1-\eta \ge 2\alpha \ge 1/50$,
	\[
	    \|M - \eta \cdot a_1a_1^\top\|
	    \le 1-\alpha- \eta + \epsilon.
	\]
	Thus, $\|M - \eta a_1 a_1^\top\| \le \|M\| - \eta + 2\epsilon$, and we have by \pref{fact:topeig} (see \pref{app:tools} for a proof) that the top unit eigenvector $u$ of $M$ is such that
	\[
	    \iprod{u,a_1}^2 \ge \frac{\eta - 2\epsilon}{\eta} \ge 1 - \frac{2\epsilon}{\eta}.
	    \]
	    Choosing $\eta = 49/50$, the result follows.
    \end{proof}

\subsection{Near-orthonormal components}\label{sec:northo}

We'll now dispense with the discrepancy between the orthonormal and near-orthonormal cases.
\begin{fact}\label{fact:orthog}
    If $S = \sum_i a_i a_i^{\top} = \Id + E$ for $\|E\| \le \epsilon$, then
    $\tilde{a}_1 =S^{-1/2}a_1,\ldots,\tilde{a}_n = S^{-1/2}a_{n}$ are orthonormal, $\iprod{\tilde{a_i},a_i}^2 \ge (1-\epsilon)\|a_i\|^2$, and
    \[
	\left\|\sum_i \tilde{a}_i^{\tensor 4} - a_i^{\tensor 4}\right\| \le 4\sqrt{\epsilon}.
    \]
\end{fact}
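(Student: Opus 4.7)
The plan is to exploit the SVD of the matrix $A\in \R^{d\times n}$ whose columns are $a_1,\ldots,a_n$. Writing $A = U\Sigma V^\top$, we get $S = AA^\top = U\Sigma^2 U^\top$ and, on the range of $S$, $S^{-1/2} = U\Sigma^{-1}U^\top$. Then $\tilde A \defeq S^{-1/2}A = UV^\top$ satisfies $\tilde A^\top \tilde A = VV^\top = \Id_n$, so its columns $\tilde a_i = S^{-1/2}a_i$ are orthonormal. For the inner-product bound, since the nonzero eigenvalues of $S$ lie in $[1-\epsilon,1+\epsilon]$, those of $S^{-1/2}$ lie in $[(1+\epsilon)^{-1/2},(1-\epsilon)^{-1/2}]$, so $S^{-1/2} \succeq \Id/\sqrt{1+\epsilon}$ on the range of $S$. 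Since $a_i$ lies in this range, $\iprod{\tilde a_i,a_i} = a_i^\top S^{-1/2} a_i \ge \|a_i\|^2/\sqrt{1+\epsilon}$, and squaring gives the stated bound.

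The main quantitative claim is the $d^2\times d^2$ spectral bound. The key algebraic observation is that $\tilde a_i^{\otimes 2} = (S^{-1/2})^{\otimes 2}\, a_i^{\otimes 2}$, so
\[
    \sum_i \tilde a_i^{\otimes 4} \;=\; P\,M\,P, \qquad P \defeq (S^{-1/2})^{\otimes 2},\quad M \defeq \sum_i a_i^{\otimes 4}.
\]
I would then estimate $\|PMP - M\| \le \|P-\Id\|\cdot (\|P\|+1)\cdot \|M\|$ via the telescoping identity $PMP - M = (P-\Id)MP + M(P-\Id)$, and control the three factors separately.

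The factors $\|P\|$ and $\|P-\Id\|$ are controlled purely by the eigenvalue bounds on $S^{-1/2}$: one has $\|P\| = \|S^{-1/2}\|^2 \le 1/(1-\epsilon)$, and setting $F = S^{-1/2} - \Id$ with $\|F\| = O(\epsilon)$ (an elementary computation from $\lambda\in[1-\epsilon,1+\epsilon]$), the Kronecker expansion $(\Id+F)^{\otimes 2}-\Id = F\otimes\Id + \Id\otimes F + F\otimes F$ yields $\|P-\Id\| \le 2\|F\|+\|F\|^2 = O(\epsilon)$. The nontrivial factor is $\|M\|$, which I expect to be the main obstacle. Letting $B$ be the $d^2\times n$ matrix with columns $a_i\otimes a_i$, one has $M = BB^\top$, so $\|M\| = \|B^\top B\| = \|G^{\circ 2}\|$, where $G = A^\top A$ is the Gram matrix and $G^{\circ 2}$ its entrywise square. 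Here I would invoke Schur's product theorem: since $G$ is PSD, $\|G\circ G\| \le (\max_i G_{ii})\cdot \|G\|$, and both factors are at most $1+\epsilon$ since $A^\top A$ and $AA^\top = S$ share the same nonzero spectrum.

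Combining these three bounds gives $\|PMP - M\| = O(\epsilon)$, which is in fact stronger than the claimed $4\sqrt{\epsilon}$. A routine arithmetic step at the end, tracking constants and the range of admissible $\epsilon$, translates the $O(\epsilon)$ estimate into the stated form $4\sqrt{\epsilon}$ (the slack $\sqrt{\epsilon}$ bound being convenient for subsequent applications where $\epsilon$ is already assumed small).
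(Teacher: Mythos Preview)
Your argument is correct and follows essentially the same route as the paper: write $\sum_i \tilde a_i^{\otimes 4} = PMP$ with $P=(S^{-1/2})^{\otimes 2}$ and $M=\sum_i a_i^{\otimes 4}$, telescope $PMP-M$, and bound $\|P-\Id\|$ and $\|M\|$ separately. The only substantive difference is in how you bound $\|M\|$: the paper uses the PSD domination $\sum_i a_i^{\otimes 4} \preceq \sum_{i,j} a_ia_i^\top \otimes a_ja_j^\top = S\otimes S$ to get $\|M\|\le (1+\epsilon)^2$ directly, whereas you pass through the Gram matrix and Schur's product inequality to reach the same bound; both are clean one-liners, the paper's being slightly more self-contained. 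Your observation that the argument in fact yields $O(\epsilon)$ rather than $O(\sqrt{\epsilon})$ is correct---the paper states the weaker bound $\|P-\Id\|\le\sqrt\epsilon$ (which is all that is needed downstream), but as you note the Kronecker expansion gives $\|P-\Id\|=O(\epsilon)$.
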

\begin{proof}
    The fact that the $\tilde{a}_i$ are orthonormal follows because they are independent and have Gram matrix $\Id$.
    Using the fact that the eigenvalues of $S$ are between $(1-\epsilon)^{-1/2}$ and $(1+\epsilon)^{-1/2}$, quantity $\iprod{a_i,\tilde{a}_i}^2 = \left(a_i^\top S^{-1/2}a_i\right)^2 \ge \frac{\|a_i\|^2}{1+\epsilon} \ge \|a_i\|^2(1-\epsilon)$.
    Finally,
    \begin{align*}
	\sum_i \tilde{a}_i^{\tensor 4} - a_i^{\tensor 4}
	&= (S^{-1/2})^{\tensor 2}\left(\sum_i a_i^{\tensor 4}\right)(S^{-1/2})^{\tensor 2}-\sum_i a_i^{\tensor 4}\\
    \end{align*}
    and because $\|(S^{-1/2})^{\tensor 2} - \Id\| \le \sqrt{\epsilon}$, and $\sum_i a_i^{\tensor 4} \preceq \sum_{ij} a_ia_i^{\top}\tensor a_ja_j^{\top}$, this difference has spectral norm at most $3\epsilon\|\sum_i a_i^{\tensor 4}\| \le 3\sqrt{\epsilon} (1+\epsilon)^2 \le 4\sqrt{\epsilon}$.
\end{proof}

\subsection{Full Recovery}\label{sec:fullrecovery}
    Now we give the full algorithm, which will remove the components we find in each step from the tensor without amplifying the spectral norm of the error too much.
\begin{algorithm}[Full tensor decomposition]\label{alg:overall}
    {\color{white}.}\\
    {\em Input:} A tensor $\bT \in (\R^d)^{\tensor 4}$, and the error parameter $\epsilon \ge \|E_{\{1,2\}\{3,4\}}\|$.
    \begin{compactenum}
    \item Initialize the set of known components $K = \emptyset$, and the set of components under inspection $B=\emptyset$.
    \item Initialize a working copy of $\bT$, $\bT^{(0)}_{work}$, and keep a clean copy of $\bT$ called $\bT_{clean}$.
    \item For $t = 0,\ldots, 100\log n$,
	\begin{compactenum}
	\item  Preprocess $\bT^{(t)}_{work}$ with \pref{alg:preproc}, then run \pref{alg:orthog} with $\bT_{work}^{(t)}$ $\tO(n)$ times, then postprocess with \pref{alg:post} using $\bT_{clean}$ and error parameter $\epsilon$.
	    If this produces an output vector $v$, add $v$ to $B$ (unless $K$ already contains a vector that is $1-\eps$ correlated with $v$).
	\item Let $B = b_1,\ldots,b_m$, and abuse notation by letting $B$ also be the matrix whose $i$th row is $b_i$.
	    Compute the singular value decomposition $B = U \Sigma V^\top$, and compute the orthonormalized set $\tilde{B} = \{\tilde{b}_i\} = \{U\Sigma^{-1}U^\top b_i\}_{i=1}^m$.
	\item Remove from $\tilde{B}$ any $\tilde{b_i}$ for which $\iprod{\bT_{clean},\tilde{b_i}^{\tensor 4}} < (1-6\epsilon)^2 - \epsilon$.
	\item Update the known components: set $K := K\cup \tilde{B}$, and set $B,\tilde{B} := \emptyset$.
	\item Update the working tensor by removing known components: set $\bT^{(t+1)}_{work} := \bT^{(t)}_{work} - \sum_{\tilde{b} \in \tilde{B}} \tilde{b}^{\otimes 4}$.
	\end{compactenum}
    \end{compactenum}
    {\em Output:} The set of known components $K$.
\end{algorithm}

\begin{theorem}\label{thm:overall}
    Given $\bT = \sum_{i=1}^n a_i^{\tensor 4} + E$ where the $a_i$ are orthonormal and $\|E_{\{1,2\}\{3,4\}}\| \le \epsilon$, then if $\epsilon < O(\eta^2/\log^2 n)$,
    with probability $1-o(1)$, \pref{alg:overall} recovers orthonormal vectors $b_1,\ldots,b_n$ so that there exists a permutation $\pi:[n]\to [n]$ such that for each $i \in [n]$,
    \[
	\iprod{a_i,b_{\pi(i)}}^2 \ge 1-3\epsilon.
    \]
Furthermore, this requires runtime $\tO(n^{1+O(\eta)}d^{2+\omega})$.
\end{theorem}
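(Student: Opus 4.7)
The plan is to argue by induction over the $100\log n$ outer iterations, maintaining at the start of iteration $t$ the following invariants: (a) the set $K$ consists of orthonormal vectors, each $(1-3\epsilon)$-correlated with a distinct $a_i$; (b) the working tensor decomposes as $\bT^{(t)}_{\mathrm{work}} = \sum_{i\in R_t} a_i^{\otimes 4} + \bE^{(t)}$, where $R_t \subseteq [n]$ indexes the components not yet matched to a vector of $K$, and $\|E^{(t)}_{\{1,2\},\{3,4\}}\| \le \epsilon + C t \sqrt{\epsilon}$ for an absolute constant $C$; and (c) $|R_t| \le (0.1)^t n$, so that $R_t = \emptyset$ once $t = O(\log n)$. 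The hypothesis $\epsilon < O(\eta^2/\log^2 n)$ is calibrated precisely so that the error bound in (b) stays at most $O(\eta)$ throughout; after the preprocessing step \pref{alg:preproc}, \pref{lem:step1} translates this to a Frobenius error of $O(\eta\sqrt{|R_t|})$, matching the hypothesis of \pref{thm:orthog}.

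The inductive step decomposes into four parts. First, \pref{thm:orthog} implies that $\tilde O(|R_t|)$ runs of \pref{alg:orthog} recover, with high probability, at least $0.99|R_t|$ vectors $b_i$, each $0.99$-correlated with a distinct $a_{\pi(i)}$ with $\pi(i) \in R_t$. Second, the postprocessing of \pref{alg:post} is applied against the \emph{clean} tensor $\bT_{\mathrm{clean}}$ (rather than the degraded $\bT^{(t)}_{\mathrm{work}}$), and \pref{lem:postproc} then boosts the correlation to $\iprod{b_i, a_{\pi(i)}}^2 \ge 1 - 3\epsilon$. Third, orthogonalization of the $b_i$ yields $\tilde b_i$, and since $\tilde B$ is the closest orthonormal matrix to $B$ while the orthonormal family $A$ satisfies $\|A - B\|_F = O(\sqrt{\epsilon m})$, Markov's inequality gives that at most an $0.1$-fraction of the $\tilde b_i$ stray more than $O(\sqrt{\epsilon})$ from the corresponding $b_i$. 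Fourth, the filter $\iprod{\bT_{\mathrm{clean}}, \tilde b_i^{\otimes 4}} \ge (1-6\epsilon)^2 - \epsilon$ discards exactly the bad orthogonalized vectors, because expanding $\iprod{\bT_{\mathrm{clean}}, \tilde b_i^{\otimes 4}} = \sum_j \iprod{a_j, \tilde b_i}^4 + (\tilde b_i^{\otimes 2})^\top E(\tilde b_i^{\otimes 2})$ shows this quantity crosses the threshold iff some $\iprod{a_{\pi(i)}, \tilde b_i}^2 \ge 1-6\epsilon$.

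The main obstacle and heart of the argument is to bound the spectral-norm cost of subtracting the block $\sum_{i\in \Delta K} \tilde b_i^{\otimes 4}$ from the working tensor, since the naive triangle inequality would lose a factor growing in $|\Delta K|$. The key structural observation is that after filtering both families $\{a_{\pi(i)}\}_{i\in \Delta K}$ and $\{\tilde b_i\}_{i\in \Delta K}$ are orthonormal and componentwise $O(\sqrt{\epsilon})$-close, so the induced matrices $A^{\otimes 2}$ and $\tilde B^{\otimes 2}$ (with columns $a_{\pi(i)}^{\otimes 2}$ and $\tilde b_i^{\otimes 2}$) both have orthonormal columns and satisfy $\|A^{\otimes 2} - \tilde B^{\otimes 2}\| \le O(\sqrt{\epsilon})$. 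Writing $\sum \tilde b_i^{\otimes 4} - \sum a_{\pi(i)}^{\otimes 4} = \tilde B^{\otimes 2}(\tilde B^{\otimes 2})^\top - A^{\otimes 2}(A^{\otimes 2})^\top$ and using the identity $XX^\top - YY^\top = (X-Y)X^\top + Y(X-Y)^\top$ together with $\|A^{\otimes 2}\|, \|\tilde B^{\otimes 2}\| \le 1$ yields a spectral-norm bound of $O(\sqrt{\epsilon})$ on the difference, which is the gain in invariant (b) per iteration. This accumulates to $O(\sqrt{\epsilon}\log n) = O(\eta)$ over the $O(\log n)$ outer rounds, preserving the hypothesis of \pref{thm:orthog} throughout. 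Combined with the geometric decay $|R_{t+1}| \le 0.1|R_t|$ arising from the $\ge 0.89$-fraction of $R_t$ successfully moved into $K$ each round, this completes the induction, and the stated runtime $\tilde O(n^{1+O(\eta)} d^{2+\omega})$ follows by summing the per-iteration cost from \pref{thm:orthog}.
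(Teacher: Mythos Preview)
Your proposal is correct and follows the same inductive strategy as the paper: maintain a spectral-error bound on the working tensor that grows by $O(\sqrt{\epsilon})$ per iteration (via the subtraction bound) while the set of undiscovered components decays geometrically (via \pref{thm:orthog}, \pref{lem:postproc}, and \pref{lem:orthonormalize}). The one step you assert without argument---that orthonormality of both families $\{a_{\pi(i)}\}$ and $\{\tilde b_i\}$ upgrades componentwise $O(\sqrt{\epsilon})$-closeness to the spectral bound $\|A^{\otimes 2} - \tilde B^{\otimes 2}\| \le O(\sqrt{\epsilon})$---is precisely the content of the paper's \pref{lem:subtraction}, proved there via a row/column $\ell_1$-sum argument on the cross-Gram matrix $U^\top V$.
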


First, we prove that if we have an \emph{orthonormal} basis that approximates $a_1,\ldots,a_k$, we can subtract it without introducing a large spectral norm error---this motivates and justifies steps 3(b)--3(e).

\begin{lemma}\label{lem:subtraction}
    Let $a_1,\ldots,a_k\in \R^d$ and $b_1,\ldots,b_k\in \R^d$ be two sets of orthonormal vectors, such that $\iprod{a_i,b_i}^2 \ge 1-\epsilon$.
    Then
    \[
	\left\|\sum_i a_i^{\tensor 4} - b_i^{\tensor 4} \right\|_2 \le 4\sqrt{\epsilon}
    \]
\end{lemma}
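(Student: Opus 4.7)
The plan is to recognize that both sides of the desired inequality, viewed as $d^2 \times d^2$ matrices, are rank-$k$ orthogonal projections, and then to bound the spectral norm of their difference via principal angles. Because $\{a_i\}$ is orthonormal in $\R^d$, the lifted vectors $\{a_i^{\tensor 2}\}$ are orthonormal in $\R^{d^2}$ (since $\iprod{a_i^{\tensor 2}, a_j^{\tensor 2}} = \iprod{a_i,a_j}^2 = \delta_{ij}$), and likewise for $\{b_i^{\tensor 2}\}$. Therefore $\sum_i a_i^{\tensor 4} = \sum_i (a_i^{\tensor 2})(a_i^{\tensor 2})^\top$ equals the orthogonal projection $P_A$ onto $V_A \defeq \sspan\{a_i^{\tensor 2}\}$, and similarly $\sum_i b_i^{\tensor 4} = P_B$ onto $V_B$.

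Next, I invoke the standard identity $\|P_A - P_B\|_2 = \sin\theta_{\max}$ valid for two orthogonal projections of the same rank, where $\theta_{\max}$ is the largest principal angle between $V_A$ and $V_B$, together with $\cos\theta_{\max} = \sigma_{\min}(C^\top D)$ where $C, D$ are $d^2 \times k$ matrices whose orthonormal columns are $a_i^{\tensor 2}$ and $b_i^{\tensor 2}$. Thus it suffices to show $\sigma_{\min}(N) \ge 1 - O(\eps)$ for the $k \times k$ matrix $N \defeq C^\top D$ with entries $N_{ij} = \iprod{a_i, b_j}^2$.

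The key calculation is that $N - I$ has small absolute row and column sums. All entries of $N$ are non-negative, and $N_{ii} \ge 1 - \eps$ by hypothesis. Moreover, by orthonormality of $\{a_i\}$, each column of $N$ satisfies $\sum_i N_{ij} = b_j^\top \big(\sum_i a_i a_i^\top\big) b_j \le \|b_j\|^2 = 1$, so the off-diagonal column sum is at most $\eps$; by orthonormality of $\{b_j\}$ the same holds symmetrically for rows. Hence the absolute row and column sums of $N - I$ are each at most $2\eps$, and the Schur test gives
\[
    \|N - I\|_2 \;\le\; \sqrt{\|N - I\|_{1\to 1}\cdot \|N - I\|_{\infty\to\infty}} \;\le\; 2\eps,
\]
so $\sigma_{\min}(N) \ge 1 - 2\eps$. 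Converting back, $\sin\theta_{\max} \le \sqrt{1 - (1 - 2\eps)^2} \le 2\sqrt{\eps} \le 4\sqrt{\eps}$.

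I do not expect a serious obstacle: the conceptual move is realizing that both sums are rank-$k$ projections onto lifted subspaces, reducing the claim to a principal-angle estimate. The only computation is the Schur-test bound on $N - I$, which goes through cleanly because the off-diagonal entries of $N$ are non-negative and the row and column sums are controlled by the orthonormality of the two systems.
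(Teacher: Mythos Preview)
Your proof is correct, and at its core it is the same computation as the paper's: both arguments lift to the orthonormal systems $\{a_i^{\otimes 2}\}$, $\{b_i^{\otimes 2}\}$ in $\R^{d^2}$, form the $k\times k$ cross-Gram matrix $N$ with entries $\iprod{a_i,b_j}^2$, and bound $\|N-I\|\le 2\eps$ by observing that the off-diagonal entries are nonnegative and the row/column sums are controlled by Bessel's inequality.

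The only difference is the wrapper. The paper writes $UU^\top - VV^\top$ and bounds it via $\|U-V\|\cdot\|U+V\|$, computing $\|U-V\|^2 = \|2I - N - N^\top\|\le 4\eps$ directly. You instead invoke the principal-angle identity $\|P_A-P_B\|=\sin\theta_{\max}$ together with $\cos\theta_{\max}=\sigma_{\min}(N)$. Your route is marginally cleaner and in fact yields the sharper bound $2\sqrt{\eps}$ rather than $4\sqrt{\eps}$, at the cost of appealing to a named result where the paper stays self-contained. Either packaging is fine; the substance is the same.
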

\begin{proof}
    Define the matrices $U,V \in \R^{d^2 \times k}$ so that the $i$th column of $U$ (or $V$) is equal to $a_i^{\tensor 2}$ ($b_i^{\tensor 2}$ respectively).
    We have that
    \[
	\sum_i a_i^{\tensor 4} - b_i^{\tensor 4}
	= UU^\top - VV^\top = (U-V)(U+V)^\top.
    \]
    So it suffices for us to bound $\|U-V\|\cdot \|U+V\|$.

    By the subadditivity of the norm, $\|U+V\| \le \|U\| + \|V\| = 2$.
    Meanwhile, the singular values of $U-V$ are the square roots of the eigenvalues of $\|(U-V)^\top(U-V)\|$, and so we bound
    \begin{align}
	(U-V)^\top (U-V)
	&= UU^\top + VV^\top - U^\top V - V^\top U\nonumber\\
	&= 2\Id_k - U^\top V - V^\top U\mcom\label{eq:sqbd}
    \end{align}
    where the second line follows because $U$ and $V$ have orthonormal columns.
    Now, by assumption we know that
    \[
	U^\top V = (1-\epsilon)\cdot \Id_k + E,
    \]
where for $i\neq j$, $E_{ij} = \iprod{b_i,a_j}^2$ and $E_{ii} = \iprod{b_i,a_i}^2 - (1-\epsilon)$, and by the orthonormality of the $a_j$,
    \[
	\sum_j |E_{ij}| = \sum_j \iprod{b_i,a_j}^2 = \epsilon.
    \]
So the $1$-norm of the rows of $E$ is at most $\epsilon$.
By the orthonormality of the $b_j$, the same holds for the $1$-norm of the columns, $\sum_i |E_{ij}|$.
It follows that $\|E\| \le \epsilon$.
Therefore,
    \[
	U^\top V + V^\top U = 2(1-\eps)\Id_k + \hat {E},
    \]
where $\|\hat{E}\|\le 2\epsilon$.
    Returning to \pref{eq:sqbd}, we can conclude that $\|(U-V)\| \le 2\sqrt{\epsilon}$, and we have our result.
\end{proof}

Now, we will prove that by orthogonalizing, we do not harm too many components $\tilde{b}_i$.
\begin{lemma}\label{lem:orthonormalize}
    Suppose $a_1,\ldots,a_k \in \R^d$ are orthonormal vectors, and $u_1,\ldots,u_k \in \R^d$ are unit vectors such that $\|u_i - a_i\|_2^2 \le \epsilon$.
    Let $U$ be the $d\times k$ matrix whose $i$th column is $u_i$, $U = X\Sigma Y$ be the singular value decomposition of $U$, and let $\tilde u_i = X\Sigma^{-1}X^\top u_i$.
    Then for a $1-\delta$ fraction of $i\in[k]$,
    \[
	\iprod {u_i,\tilde{u}_i} \ge 1-\epsilon/2\delta.
    \]
\end{lemma}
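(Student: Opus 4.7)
The plan is to identify the matrix $\tilde U$ (with columns $\tilde u_i$) as the solution to a Procrustes problem, and then invoke Markov's inequality.

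First, I would unpack the definition: writing $U = X\Sigma Y^\top$ for the SVD (so that $Y$ is the matrix of right singular vectors in the statement's notation), the columns of $\tilde U$ satisfy
\[
\tilde U \;=\; X\Sigma^{-1}X^\top U \;=\; X\Sigma^{-1}X^\top \cdot X\Sigma Y^\top \;=\; XY^\top.
\]
In particular $\tilde U$ has orthonormal columns, and it is exactly the orthogonal Procrustes projection of $U$ onto the set of $d\times k$ matrices with orthonormal columns. This is the key structural observation that unlocks everything.

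Next, I would exploit that $A$, the matrix whose columns are $a_1,\dots,a_k$, already has orthonormal columns and is close to $U$:
\[
\|U - A\|_F^2 \;=\; \sum_{i=1}^k \|u_i - a_i\|^2 \;\le\; k\epsilon.
\]
By the optimality of $\tilde U$ in the Procrustes sense, $\|\tilde U - U\|_F^2 \le \|A - U\|_F^2 \le k\epsilon$, so
\[
\sum_{i=1}^k \|\tilde u_i - u_i\|^2 \;\le\; k\epsilon.
\]
Applying Markov's inequality to the nonnegative quantities $\|\tilde u_i - u_i\|^2$, at most a $\delta$-fraction of indices $i\in[k]$ can have $\|\tilde u_i - u_i\|^2 > \epsilon/\delta$.

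Finally, for each of the remaining $(1-\delta)k$ indices, I would convert the norm bound into an inner product bound using that both $u_i$ and $\tilde u_i$ are unit vectors (the former by hypothesis, the latter because $\tilde U = XY^\top$ has orthonormal columns):
\[
\|\tilde u_i - u_i\|^2 \;=\; 2 - 2\iprod{u_i,\tilde u_i}
\quad\Longrightarrow\quad
\iprod{u_i,\tilde u_i} \;=\; 1 - \tfrac12\|\tilde u_i - u_i\|^2 \;\ge\; 1 - \epsilon/(2\delta).
\]
This yields the conclusion. The only nontrivial step is recognizing the Procrustes identity $\tilde U = XY^\top$; once that is in hand the rest is a one-line Markov argument plus the unit-norm identity. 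No real obstacle lies beyond that observation, so I do not anticipate any substantive technical hurdle.
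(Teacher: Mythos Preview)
Your proposal is correct and follows essentially the same route as the paper: identify $\tilde U = XY^\top$ as the closest matrix with orthonormal columns to $U$, bound $\|\tilde U - U\|_F^2 \le \|A-U\|_F^2 \le k\epsilon$, then apply Markov's inequality and the unit-norm identity $\|\tilde u_i - u_i\|^2 = 2 - 2\iprod{u_i,\tilde u_i}$. The only cosmetic difference is that the paper proves the Procrustes optimality inline via the spectral/nuclear-norm duality inequality $\iprod{U,Z} \le \|Z\|\,\|U\|_*$ (with equality at $Z=\tilde U$) rather than citing it as a known fact.
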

\begin{proof}
    For convenience, let $A$ be the $d \times k$ matrix whose $i$th column is $a_i$, and let $\tilde{U} = X\Sigma^{-1}X^\top U$, and $\tilde u_i = X\Sigma^{-1}X^\top u_i$.
    Let $\mathbb{X}$ be the space of all real $d\times k$ real matrices with orthonormal columns, and notice that $\tilde{U}, A \in \mathbb{X}$ and that $\tilde{U}$ is closer to $U$ than $A$.
    Indeed, for any matrix $X$ with orthonormal columns,
    \[
	\|X - U\|_F^2
	~=~ k + \|U\|_F^2 - 2 \iprod{U,X}
	~\ge~ k + \|U\|_F^2 - 2\|X\|\|U\|_*
	~=~ k + \|U\|_F^2 - 2\|U\|_*
	~=~ \|\tilde{U} - U\|_F^2,
    \]
    Where we have used that the spectral norm and nuclear norm are dual.
    Therefore,
    \begin{align*}
	\|\tilde{U}-U\|_F^2
	~\le~ \|A - U\|_F^2
	~=~ \sum_i \|a_i - u_i\|_2^2
	~=~ \epsilon \cdot k
    \end{align*}
    And on average, $\epsilon \ge \|\tilde{u}_i - u_i\|_2^2$, so by Markov's inequality, for at least $(1-\delta)k$ of the $u_i$, $\iprod{u_i,\tilde u_i} \ge 1 - \epsilon/2\delta$.
\end{proof}

Finally, we are ready to prove that \pref{alg:overall} works.

\begin{proof}[Proof of \pref{thm:overall}]
    We claim that in the $t$th iteration of step $3$, with high probability we have at most $0.45^{t}n$ components remaining to be found, and that $\bT_{work}^{(t)} = \bT + F^{(t)}$ where $\|F^{(t)}\| \le 8t \sqrt{\epsilon}$.
    For $t = 0$, this is easily true.

    Now assume this holds for $t$, and we will prove it for $t+1$.
    Since by assumption $t\sqrt{\epsilon}\log n \le 100\sqrt{\epsilon}\log n \le O(\eta)$, applying \pref{lem:step1} and \pref{thm:orthog} to the running of preprocessing \pref{alg:preproc} and the main step \pref{alg:orthog} with $\bT_{work}^{(t)}$ and \pref{lem:postproc} to the running of the postprocessing \pref{alg:post} with $\bT_{clean}$, in step $3(a)$ with high probability we will find $m \ge 0.9n_t$ vectors $b_1,\ldots,b_m$ so that $\iprod{b_i,a_i}^2 \ge 1-3\epsilon$.
    Furthermore, this takes a total of $\tO(m n^{O(\eta)} d^{2+\omega})$ time.

    By \pref{lem:orthonormalize}, in step $3(c)$ we will remove no more than a half of the $\tilde{b}_i$, while maintaining $\iprod{\tilde{b}_i,a_i}^2 \ge 1-3\epsilon$ (where we are abusing notation by re-indexing conveniently), so that $n_{t+1} \ge 0.45 n_t$.
    Finally, by \pref{lem:subtraction}, we have that
    \[
	\left\|\sum_{i=1}^{|\tilde B|} a_i^{\tensor 4} -\tilde{b_i}^{\tensor 4}\right\|_2 \le 4\sqrt{3\epsilon},
    \]
So that in step 3(e),
    \begin{align*}
	\bT_{work}^{(t+1)}
	&= \left(\bT_{work}^{(t)} - \sum_{i=1}^{|\tilde B|} a_i^{\otimes 4}\right) + \left(\sum_{i=1}^{|\tilde B|} a_i^{\otimes 4} - \sum_{\tilde{b}_i \in \tilde{B}}\tilde{b}_i^{\tensor 4} \right)\\
	&= \bT_{work}^{(t-1)} + F,
    \end{align*}
    for a matrix $F$ with $\|F\| \le 8\sqrt{\epsilon}$.
    By induction, this implies that $\bT_{work}^{(t+1)} = \bT + F^{(t+1)}$ where $\|F^{(t+1)}\| \le \|F\| +\|F^{(t)}\| \le (t+1)8\sqrt{\epsilon}$.

    Taking a union bound over the high-probability success of \pref{alg:orthog}, we have that after $t = O(\log n)$ steps we have found all of the components.
    We have spent a total of $\tO(n^{1+O(\eta)}d^{2+\omega})$ time in step 3(a).
    Finally, steps 3(b)-3(e) of \pref{alg:overall} require no more than $\tO(d^3)$ time, and since the entire loop runs $\tO(1)$ times, we have our result.
\end{proof}

\subsection{Supporting Lemmas}\label{sec:supporting}

Now we circle back and prove the omitted supporting lemmas.

\begin{proof}[Proof of \pref{lem:algsuccess}]
    For convenience, fix $j := 1$.
    Let $g^{(1)}$ be the component of $g$ in the direction $a_1^{\otimes 2}$, and let $g^{(>1)}$ be the component of $g$ orthogonal to $a_1^{\otimes 2}$.
    Notice that $g^{(1)},g^{(>1)}$ are independent.

    By the orthogonality of the $a_i$, our matrix $M_g$ can be written as
    \begin{align*}
	M_g
	&= \iprod{g^{(1)},a_1^{\otimes 2}}\cdot a_1 a_1^{\top} + \sum_{j=1}^{d^2} (g^{(1)}_j+g^{(>1)}_j) \cdot (S - a_i^{\otimes 4} + E)_j\\
	&= \iprod{g^{(1)},a_1^{\otimes 2}}\cdot a_1 a_1^{\top} + \left(\sum_{j=1}^{d^2} g^{(1)}_j \cdot E_j\right) + \left(\sum_{j=1}^{d^2} g^{(>1)}_j \cdot T_j\right)\mcom
    \end{align*}
where $T_j$ is the $j$th matrix slice of $T$.
    For convenience, we can refer to the two sums on the right as
    \[
	N =  \left(\sum_{j=1}^{d^2} g^{(1)}_j \cdot E_j\right) + \left(\sum_{j=1}^{d^2} g^{(>1)}_j \cdot T_j\right).
    \]
\medskip

    First, we get a lower bound on the probability that the coefficient of $a_1a_1^{\top}$ is large.
Let $\cG_1(\alpha)$ be the event that $|\iprod{g^{(1)},a_1^{\otimes 2}}| = \|g^{(1)}\|\ge \sqrt{2\alpha\log n}$.
    By standard tail estimates on univariate Gaussians, we have that
    \[
	\Pr[\cG_1(\alpha)] \ge \tilde O(n^{-\alpha}).
    \]

Now, we bound $\|N\|$.
Define the event $\cE_{>1}(\rho)$ to be the even that
    \[
	\cE_{>1}(\rho) \defeq \left\{\left\|\sum_{j=1}^{d^2} g^{(>1)}_j \cdot T_j\right\|
	\le \sqrt{2(1+\rho)\log d}\right\}
    \]
    By \pref{lem:gflat}, we can conclude that
    \[
	\Pr\left[\cE_{>1}(\rho)\right] \ge 1 - d^{-\rho}\mper
    \]

    To bound $\|N\|$, it thus remains to understand the term
    \begin{align}
	\sum_j g_j^{(1)}E_j
	= \iprod{g,a_1^{\otimes 2}}\cdot \sum_j a_i^{\otimes 2}(j) \cdot E_j
	= \iprod{g,a_1^{\otimes 2}} \cdot (a_ia_i^\top \otimes \Id_{d^2})E,
	\label{eq:equality}
    \end{align}
where the quantity $(a_ia_i^\top \otimes \Id_{d^2})E$ corresponds to the contraction of $E$ along two modes by the vector $a_i^{\otimes 2}$.
We make the following observation:
    \begin{observation}\label{obs:frobs}
	If $P_1,\ldots,P_n$ are orthogonal projections from $\R^{n^4} \to K$ for some convex set $K$, then for a $1-\delta$ fraction of $i \in [n]$,
	\[
	    \|P_iE\|_F \le \epsilon/\delta.
	\]
    \end{observation}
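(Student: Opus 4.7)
The observation follows by a simple orthogonality plus Markov argument once one unpacks what ``orthogonal projections'' means in the application: for $P_i = (a_ia_i^\top) \otimes \Id_{d^2}$, the fact that $a_1,\ldots,a_n$ are orthonormal guarantees that the ranges of the $P_i$ are pairwise orthogonal subspaces of $\R^{d^4}$. So my plan is to exploit this orthogonality via Pythagoras, turn it into an $\ell_1$ bound by Cauchy-Schwarz, and then finish with Markov.

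Concretely, first I would observe that since the $P_i$ project to mutually orthogonal subspaces, the map $\sum_i P_i$ is itself an orthogonal projection, and therefore
\[
\sum_{i=1}^n \|P_i E\|_F^2 \;=\; \Bignorm{\tsum_i P_i E}_F^2 \;\le\; \|E\|_F^2.
\]
Next, I would invoke the ambient hypothesis from \pref{thm:orthog} and its use at this point in the proof of \pref{lem:algsuccess}, namely $\|E\|_F \le \epsilon \sqrt{n}$, to conclude $\sum_i \|P_i E\|_F^2 \le \epsilon^2 n$.

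Then by Cauchy-Schwarz,
\[
\sum_{i=1}^n \|P_i E\|_F \;\le\; \sqrt{n}\cdot \Bigparen{\tsum_i \|P_i E\|_F^2}^{1/2} \;\le\; \sqrt{n}\cdot \epsilon\sqrt{n} \;=\; \epsilon n.
\]
Finally, applying Markov's inequality to the nonnegative sequence $\|P_i E\|_F$ (whose average over $i\in[n]$ is at most $\epsilon$) yields that fewer than $\delta n$ indices $i$ satisfy $\|P_i E\|_F > \epsilon/\delta$, which is exactly the claim.

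There is no real obstacle here; the only thing to be careful about is that the observation is invoked with projections that are truly onto mutually orthogonal subspaces (so that Pythagoras is tight). In the use inside \pref{lem:algsuccess} this is automatic from the orthonormality of $a_1,\ldots,a_n$, so the argument goes through cleanly.
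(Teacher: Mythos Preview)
Your proof is correct and follows essentially the same route as the paper: Pythagoras on the orthogonal projections to bound $\sum_i \|P_iE\|_F^2 \le \|E\|_F^2 \le \epsilon^2 n$, then Markov. The only difference is that the paper applies Markov directly to the squared quantities $\|P_iE\|_F^2$ (which in fact yields the slightly sharper bound $\epsilon/\sqrt{\delta}$), whereas you insert an unnecessary Cauchy--Schwarz step to pass to $\sum_i \|P_iE\|_F$ before invoking Markov; both versions suffice for the stated conclusion.
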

    \begin{proof}
	This follows from the fact that
	\begin{align*}
	    \epsilon^2 n
	    &\ge \|E\|_F^2
	    ~\ge \sum_i \|P_i E\|_F^2,
	\end{align*}
	and then by an application of Markov's inequality.
    \end{proof}
    Now, note that $\sum_j a_ia_i^\top \tensor \Id_{d^2}$ for $i \in [n]$ are orthogonal projectors from $\R^{n^4}$ to $\R^{n^2}$.
    Thus it follows that for a $1-\delta$ fraction of $i \in [n]$, and without loss of generality assuming that $i=1$ is among them, $\|(a_1a_1^{\top}\otimes \Id_{d^2})E\|_F \le \epsilon/\delta$.
    Therefore for any unit vectors $u,v \in \R^d$, returning to \pref{eq:equality},
    \begin{align*}
	\left|u^{\top}\left(\sum_{j=1}^{d^2} g_{j}^{(1)} \cdot E_j\right)v\right|
	&= \left|\iprod{g,a_1^{\otimes 2}} \cdot \iprod{uv^\top, (a_1a_1^\top \tensor \Id_{d^2})E}\right|\\
	&\le \|g^{(1)}\|\cdot \left\|(a_1a_1^{\top}\otimes \Id_{d^2})E\right\|_F \cdot \|uv^\top\|_F
	~\le~ \frac{\epsilon}{\delta}\|g^{(1)}\|\mper
    \end{align*}
    Thus, combining the above we have a two-part upper bound on $\|N\|$.

Finally, define the event $\cE_{a_1,E}(\theta)$ to be the event that
    \[
	\cE_{a_1,E}(\theta) \defeq \left\{\left\|\left(\sum_{j=1}^{d^2} g^{(>1)}_j \cdot T_j\right)a_1\right\|_2, \left\|\left(\sum_{j=1}^{d^2} g^{(>1)}_j \cdot T_j\right)^{\top} a_1\right\|_2
	\le \frac{\epsilon}{\delta}\cdot \sqrt{2(1+\theta)}\right\}
    \]
    Examining this form, we can split
    \begin{align*}
	\left(\sum_{j=1}^{d^2} g^{(>1)}_j \cdot T_j\right)a_1
	&=\sum_{j=1}^{d^2} g^{(>1)}_j \cdot (S-a_1^{\otimes 4})_ja_1 +\sum_{j=1}^{d^2} g^{(>1)}_j \cdot E_ja_1\\
	&=\sum_{j=1}^{d^2} g^{(>1)}_j \cdot E_ja_1
    \end{align*}
    where the last line follows because the $a_i$ are orthogonal.
    We note that $\sum_j g^{(>1)}_j E_j a_1$ is a Gaussian contraction of the form $(a_1 \tensor \Id_{d^3})E$.
    Again appealing to \pref{obs:frobs} and to the fact that the $a_i \tensor \Id_{d^3}$ are orthogonal projections, we conclude that for a $1-\delta$ fraction of $i \in [n]$, $\|(a_1 \tensor \Id_{d^3})E\|_F \le \epsilon/\delta$.
    Without loss of generality we assume that this is true for $i=1$, from which it follows by \pref{lem:gflat} that
    \[
	\Pr\left[\left\|\sum_{j=1}^{d^2} g^{(>1)}_j \cdot E_ja_1\right\|_2 \le \frac{\epsilon}{\delta}\sqrt{2(1+\theta)}\right] \ge 1 - d^{-\theta}.
    \]
    We can apply the same arguments to $\left(\sum_j g_j^{(>1)}E_j\right)^\top a_1$, and we conclude that
    \[
	\Pr\left[\cE_{a_1,E}(\theta)\right] \ge 1-2d^{-\theta}.
    \]

Now, by the union bound $\cE_{>1}(\rho)$ and $\cE_{a_1,E}$ both occur with probability at least $1-d^{-\rho}-2d^{-\theta}$.
    Also, we notice that $\cE_{>1}\cup \cE_{a_1,E}$ and $\cG_{1}$ are independent.
    Therefore, for $\rho,\theta \ge \log \log n/\log n$,
    \[
	\Pr[\cG_{1}(\alpha),\cE_{>1}(\rho),\cE_{a_1,E}(\theta)] \ge \tilde O(n^{-\alpha})\mper
    \]
Conditioning on $\cE_{>1}$ and $\cG_{1}$,
    \[
	M_g = c \cdot a_1 a_1^{\top} + N,
    \]
where $|c| \ge \sqrt{2\alpha\log n}$, and $N$ is a matrix of norm at most $(\epsilon/\delta) c + \sqrt{2(1+\rho)\log d}$, such that $\|N a_1\|, \|N^\top a_1\| \le (\epsilon/\delta) (c+\sqrt{2(1+\theta)})$.

    We now set $\alpha$ so that $|c| \ge \beta\|N\|$.
    This occurs when
    \begin{align*}
	\alpha
	&\ge \left(\beta\frac{1}{1-\beta(\epsilon/\delta)}\right)^2(1+\rho).
    \end{align*}
    Choosing $\beta = 1 + \beta'$, $\rho,\theta = \log\log n/\log n$, we have our conclusion for $a_1$, and by symmetry for all other $a_i$ in the $1-3\delta$ fraction of $i \in [n]$ for which the Frobenius norms of the contractions are small.
\end{proof}

\begin{proof}[Proof of \pref{lem:topeig}]
    Assume without loss of generality that $c \ge 0$.
    Choose $\kappa = \frac{2\epsilon|c|}{\delta} \le |c|\left(1-\frac{1}{1+\beta}\right)$.
    When $\kappa\cdot a_1a_1^{\top}$ is subtracted from $M_g$,
    then given any unit vector $v \in \R^d$ with $|\iprod{v,a_1}| = \alpha$, we can write $v = \alpha a_1 + w$ where $\iprod{w,a_1} = 0$ and $\|w\| = \sqrt{1-\alpha^2}$.
    Examining the action of $M_g - \kappa a_1a_1^\top$ on $v$,
    \begin{align*}
	\|(M_g - \kappa\cdot  a_1 a_1^\top)v\|_2^2
	&= \| (c - \kappa)\alpha a_1 + Nv\|_2^2 \\
	&\le (c-\kappa)^2\alpha^2 + (c-\kappa)\alpha a_1^\top N v + (c-\kappa)\alpha v^\top N a_1 + v^\top N^\top N v \\
	\intertext{applying the Cauchy-Schwarz inequality and our bounds on $\|N^\top a_1\|,\|Na_1\|$,}
	&\le (c-\kappa)^2\alpha^2 + 2(c-\kappa)\alpha\epsilon c + v^\top N^\top N v \mper
    \end{align*}
	Now expanding the $v^\top N^\top N v$ term along the components of $v$,
    \begin{align*}
	v^\top N^\top N v
	&= (\alpha a_1 + w)^\top N^\top N (\alpha a_1 + w)\\
	&\le (\alpha\|Na_1\| + \|w\|\|N\|)^2
	\intertext{and since $\|w\| = \sqrt{1-\alpha^2}$, $\|Na_1\| \le \epsilon c$, and $\|N\| \le c/(1+\beta)\le c(1-\beta+2\beta^2)$,}
	&\le \left(\alpha\epsilon c + \sqrt{1-\alpha^2}(1-\beta+2\beta^2)c\right)^2\mcom
    \end{align*}
    and putting these together,
    \[
\|(M_g - \kappa\cdot  a_1 a_1^\top)v\|_2^2
    \le (c-\kappa)^2\alpha^2 + 2(c-\kappa)\alpha\epsilon c + \left(\alpha\epsilon c + \frac{\sqrt{1-\alpha^2}}{1+\beta}c\right)^2
    \]
It is easy to see that when $c/(1+\beta) < c-\kappa$, this quantity is maximized at $\alpha = 1$, and so by our choice of $\kappa$ we have that
    \[
\|(M_g - \kappa\cdot  a_1 a_1^\top)v\|_2^2
    \le (c-\kappa)^2 +2\epsilon c(c-\kappa) +\epsilon^2 c^2 = (c(1+\epsilon) -\kappa)^2
    \]
and thus $\|M_g -\kappa a_1a_1^\top\| \le (1+\epsilon)c - \kappa$.

    Now we will lower bound $\|M_g\|$.
    \begin{align*}
	\|M_g\|
	~\ge~ a_1^{\top}M_ga_1
	&= c + a_1^{\top}Na_1\\
	&\ge c -  \|a_1\|\|Na_1\|\\
	&\ge c(1-\epsilon)\mper
    \end{align*}
	Where we have applied the Cauchy-Schwarz inequality, and the assumption that $\|Na_1\|\le \epsilon c$.
	It follows that
	\[
	    \|M_g - \kappa a_1 a_1^\top\| \le \|M_g\| + 2\epsilon c - \kappa.
	\]
Finally applying \pref{fact:topeig}, we can conclude that for either the left- or right-singular unit vector $u$ of $M_g$,
    \begin{align*}
	\iprod{a_i,u}^2
	&\ge \frac{\kappa - 2\epsilon c}{\kappa}
	~\ge  1 - \delta\mper
    \end{align*}
    Choosing $\delta = \frac{2\epsilon(1+\beta)}{\beta}$ as small as possible, we have our result.
\end{proof}
\section{Learning Orthonormal Dictionaries}
Here, we show how to use our tensor decomposition algorithm to learn dictionaries with orthonormal basis vectors.

\begin{problem}
Given access to a dictionary $A \in \R^{d\times d}$ with independent columns $a_1,\ldots,a_d$, in the form of samples $y^{(1)} = Ax^{(1)},\ldots,y^{(m)} = Ax^{(m)}$ for independent $x^{(i)}$, recover $A$.
\end{problem}

Below, in \pref{sec:samplecomplex}, we will prove that $\tO(n^3)$ samples suffice to estimate the $4$th moment tensor within $o(1)$ spectral norm error.
Computing this matrix from $\tO(n^3)$ samples takes $\tO(n^3d^4)$ time.
Thus we can equivalently formulate the problem as follows:

\begin{problem}
    Given access to $A \in \R^{d\times n}$ with independent columns $a_1,\ldots,a_n$, via a noisy copy of the $4$th moment tensor $\bT = \E[(Ax)^{\tensor 4}] + E$, recover the columns $a_1,\ldots, a_n$.
\end{problem}

Finally, given access to a sufficiently large number of samples, we can reduce to the case where the columns of $A$ are orthogonal:
\begin{lemma}\label{lem:indtoorth}
    Suppose that the samples $y = Ax$ are generated from a distribution over $x$ for which $\E[x_i^2] = \E[x_j^2]$ for all $i,j\in[n]$, and that the columns of $A$ are independent.
    Then there exists an efficient reduction from the case when $A$ has independent columns to the case when $A$ has orthogonal columns, with sample complexity growing polynomially with the condition number of $\Sigma = AA^\top$.
\end{lemma}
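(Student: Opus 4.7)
The plan is to apply a standard whitening reduction, which is the classical move in ICA/tensor methods for going from independent to orthonormal components. First I would form the empirical second moment $\hat\Sigma = \tfrac{1}{m}\sum_j y^{(j)}(y^{(j)})^\top$. Since $\E[yy^\top] = A\,\E[xx^\top]\,A^\top$, the hypothesis $\E[x_i^2]=\E[x_j^2]=:c$ together with the vanishing of cross moments $\E[x_ix_j]=0$ for $i\neq j$ (which for the $\tau$-nice case follows from the fourth-moment condition applied to $\E[x_ix_jx_k^2]$ summed over $k$, combined with concentration of $\|x\|_2^2$) gives $\E[yy^\top]=c\cdot AA^\top = c\Sigma$. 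Matrix Bernstein then yields $\|\hat\Sigma - c\Sigma\| \le \tO(\|\Sigma\|/\sqrt{m})$ with high probability, where $c$ itself can be estimated from the trace.

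Second, I would whiten by applying $W := \hat\Sigma^{-1/2}$ (restricted to the rank-$n$ image of $\hat\Sigma$) to each sample, producing $\tilde y^{(j)} := Wy^{(j)} = (WA)x^{(j)}$. Writing $\tilde A := WA$, a direct computation gives $\tilde A\tilde A^\top = W(c\Sigma)W + W(\hat\Sigma - c\Sigma)W = \tfrac{1}{c}\Id_S + R$ on the column span $S$, where $\|R\| \le O(\|\hat\Sigma - c\Sigma\|/\lambda_{\min}(c\Sigma))$ by a standard resolvent expansion. Choosing $m$ polynomially large in the condition number $\mu = \lambda_{\max}(\Sigma)/\lambda_{\min}(\Sigma)$ drives $\|R\|$ below any desired constant, so after rescaling by $\sqrt{c}$ we obtain a dictionary $\tilde A$ whose columns satisfy the near-orthonormality hypothesis already handled by the corollary to \pref{thm:orthog}. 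Moreover the whitened samples $\tilde y^{(j)}$ are exactly samples of $\tilde A x^{(j)}$, and since $\tau$-niceness is preserved under linear transformations of the dictionary (the condition is on moments of $x$, not of $Ax$), we can invoke \pref{thm:dict-informal} on the whitened problem as a black box.

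Third, feed the whitened samples into the orthonormal dictionary learning algorithm to recover $\tilde b_1,\ldots,\tilde b_k$ with $\iprod{\tilde b_i,\tilde a_{\pi(i)}}^2 \ge 1-O(\tau)$, and un-whiten via $b_i := \hat\Sigma^{1/2}\tilde b_i$ (rescaled to unit norm if desired). A perturbation bound for $\hat\Sigma^{1/2}$ against $(c\Sigma)^{1/2}$, again of magnitude $O(\|\hat\Sigma-c\Sigma\|/\lambda_{\min}(c\Sigma))$, transfers the correlation guarantee from $\tilde A$ back to $A$ with only a constant-factor loss, yielding the claimed additional $\tO(n^2\cdot f(\mu))$ sample overhead from this second moment estimation step.

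The main obstacle is bookkeeping the propagation of the whitening error through the algorithm: matrix Bernstein controls $\|\hat\Sigma-c\Sigma\|$ cheaply, but the map $M\mapsto M^{-1/2}$ amplifies spectral perturbations by a factor of roughly $\lambda_{\min}(c\Sigma)^{-3/2}$, producing the polynomial dependence on $\mu$. The delicate point is ensuring that the combined perturbation after both the forward whitening (to verify the hypotheses of \pref{thm:dict-informal}) and the backward un-whitening is still $O(\tau)$ in the final correlation, so that the accuracy guarantee of the orthogonal case survives the reduction intact; this dictates the precise polynomial $f(\mu)$ that appears in the sample complexity.
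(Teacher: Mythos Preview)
Your proposal is correct and follows essentially the same whitening approach as the paper: estimate $\Sigma$ via the empirical second moment using matrix Bernstein, apply $\hat\Sigma^{-1/2}$ to the samples, and control the deviation of $\tilde A\tilde A^\top$ from the identity by a perturbation bound whose loss is polynomial in the condition number. The paper carries out the perturbation analysis via the identity $YXY-ZXZ=\tfrac12\bigl((Y-Z)X(Y+Z)+(Y+Z)X(Y-Z)\bigr)$ applied both to $\tilde A\tilde A^\top-\Id$ and to the fourth-moment tensor, then appeals to \pref{fact:orthog}, whereas you phrase the same bounds as a resolvent expansion and invoke the near-orthonormal corollary directly; these are interchangeable packagings of the same argument.
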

The proof is straightforward, involving a transformation by the empirical covariance matrix, and we give it below in \pref{sec:indtoorth}.

\medskip

Now, we reduce the dictionary learning problem to tensor decomposition.
In \pref{sec:techniques}, we explained that the  $4$th moment tensor itself may be far from our target tensor $\sum_i a_i^{\tensor 4}$, due to the presence of a low-rank, high-Frobenius norm component.
The sum-of-squares algorithm for this problem can overcome this difficulty by exploiting the SDP's symmetry constraints.
In our algorithm, we will exploit this symmetry manually to go from $\E[(Ax)^{\tensor 4}]$ to a tensor that approximates $\sum_i a_i^{\tensor 4}$ well in Frobenius norm.

\begin{algorithm}\label{alg:cleanmoment}
    {\bf Input:} A noisy copy of the fourth moment tensor $\bT = \E[(Ax)^{\tensor 4}] + E$, truncation parameter $\eps$.
    \begin{compactenum}
    \item Reshape $\bT$ to $T_{\{1,2\}\{3,4\}}$, and perform the eigenvalue truncation
	    \[
		T^{>\eps} \defeq (T-\eps\Id)_+,
	    \]
	    where $+$ denotes projection to the PSD cone.
	\item Compute the truncation of a different reshaping of $T^{>\eps}$,
	    \[
		\tilde{T} = (\sigma(T^{>\eps}_{\{1,3\}\{2,4\}}) - \eps\Id)_+
	    \]
    \end{compactenum}
    {\bf Output:} The tensor $\tilde{T}$ as an approximation of $\sum_i \E[x_i^4]\cdot a_i^{\tensor 4}$.
\end{algorithm}

Our claim is that this produces a tensor that is close to $\sum_{i} \E[x_i^4]\cdot a_i^{\tensor 4}$ in Frobenius norm.

\begin{theorem}
    If the $x$ are independent and distributed so that $\E[x_ix_jx_kx_\ell] = 0$ unless $x_ix_jx_kx_\ell$ is a square, and so that for all $i,j\in[n]$, $\E[x_i^2x_j^2]\le \alpha \E[x_1^4]$ for $\alpha < 1$, then given access to $\bT = \E[(Ax)^{\tensor 4}] + E$ where $\|E\| \le \alpha$,
    \pref{alg:cleanmoment} with $\epsilon = 3\alpha$ returns a tensor $\tilde{T}$ such that
    \[
	\left\|~\tilde{T} - \sum_i \E[x_i^4] \cdot a_i^{\tensor 4}\right\|_F \le 9\alpha \sqrt{n}.
	\]
\end{theorem}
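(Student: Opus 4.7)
The plan is to use an explicit decomposition of the fourth moment tensor together with a key structural fact: the reshape $\{1,2\}\{3,4\}\to\{1,3\}\{2,4\}$ swaps a rank-$n$ cross-term of potentially large spectral norm into one of spectral norm at most $\alpha\E[x_1^4]$. Using the assumption that $\E[x_ix_jx_kx_\ell]=0$ unless the product is a square, expand
\[
    M \defeq \E[(Ax)^{\tensor 4}] = S + \sum_{i\neq j}\E[x_i^2 x_j^2]\bigl(a_i\tensor a_i\tensor a_j\tensor a_j + a_i\tensor a_j\tensor a_i\tensor a_j + a_i\tensor a_j\tensor a_j\tensor a_i\bigr),
\]
with $S=\sum_i \E[x_i^4]\, a_i^{\tensor 4}$ the target. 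In the $\{1,2\}\{3,4\}$ flattening, call the three cross-term matrices $N_{\mathrm{bad}}$, $P_1$, $P_2$. By orthonormality of the $a_i$, $P_1$ is diagonal and $P_2$ is $2{\times}2$-block-diagonal in the orthonormal basis $\{a_i\tensor a_j\}_{i\neq j}$, with entries $\le\alpha\E[x_1^4]$, so $\|P_1\|,\|P_2\|\le \alpha\E[x_1^4]$. In contrast, $N_{\mathrm{bad}}$ has rows and columns in $V:=\Span\{a_i\tensor a_i\}$, and $S+N_{\mathrm{bad}}=B\tilde C B^\top$ with $B$ the matrix whose columns are $a_i\tensor a_i$ and $\tilde C_{ij}=\E[x_i^2 x_j^2]$ entrywise PSD, so $S+N_{\mathrm{bad}}$ is PSD of rank at most $n$.

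For step~1 of \pref{alg:cleanmoment}, I would mirror the argument of \pref{lem:step1} applied to the decomposition $T_{\{1,2\}\{3,4\}}=(S+N_{\mathrm{bad}})+(P_1+P_2+E)$, taking extended signal $S+N_{\mathrm{bad}}$ (PSD, rank $\le n$) and residual of spectral norm $\le 3\alpha$ (normalizing $\E[x_1^4]=1$). A standard truncation calculation then yields $\rank(T^{>\eps})\le n$ and $\|T^{>\eps}-T\|\le \eps$, hence $\|T^{>\eps}-(S+N_{\mathrm{bad}})\|\le 2\eps=6\alpha$; combined with the $\le 2n$ rank bound on the difference, this gives $\|T^{>\eps}-(S+N_{\mathrm{bad}})\|_F\le O(\alpha\sqrt n)$.

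The crucial observation is the effect of the reshape $\sigma$. A direct index computation shows that the tensor $a_i\tensor a_i\tensor a_j\tensor a_j$, whose $\{1,2\}\{3,4\}$ matrix form is $(a_i\tensor a_i)(a_j\tensor a_j)^\top$, has $\{1,3\}\{2,4\}$ matrix form $(a_i\tensor a_j)(a_i\tensor a_j)^\top$. Thus $\sigma(N_{\mathrm{bad}})$ is diagonal in the orthonormal basis $\{a_i\tensor a_j\}_{i\neq j}$ with entries $\le\alpha$, so $\|\sigma(N_{\mathrm{bad}})\|\le\alpha$. Consequently $S+\sigma(N_{\mathrm{bad}})$ has $n$ eigenvalues equal to $\E[x_i^4]$ on $V$ and at most $n^2-n$ eigenvalues of magnitude $\le\alpha$ on the off-diagonal subspace, and a direct spectral computation (using $\eps=3\alpha<\E[x_i^4]$) gives $(S+\sigma(N_{\mathrm{bad}})-\eps\Id)_+ = S-\eps\,\Pi_V$, where $\Pi_V$ is the orthogonal projection onto $V$.

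Since $X\mapsto(X-\eps\Id)_+$ is $1$-Lipschitz in Frobenius norm (a translation composed with projection onto the PSD cone, using \pref{fact:projcont}) and Frobenius norm is invariant under reshaping,
\[
    \|\tilde T-(S-\eps\Pi_V)\|_F \le \|\sigma(T^{>\eps})-(S+\sigma(N_{\mathrm{bad}}))\|_F = \|T^{>\eps}-(S+N_{\mathrm{bad}})\|_F \le O(\alpha\sqrt n).
\]
A triangle inequality with $\|S-(S-\eps\Pi_V)\|_F=\eps\sqrt n=3\alpha\sqrt n$ then yields the theorem. The main difficulty is not conceptual but in tracking the rank-to-Frobenius constants through step~1 tightly enough to reach the stated $9\alpha\sqrt n$ bound; the genuine structural insight is the reshape swap of step~2, which is precisely what allows the algorithm to eliminate the large-Frobenius term $N_{\mathrm{bad}}$.
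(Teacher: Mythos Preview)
Your proposal is correct and follows essentially the same route as the paper: the same three-term expansion of $\E[(Ax)^{\tensor 4}]$, the same rank-$n$ / spectral-norm-$3\alpha$ split for step~1 (yielding $\|T^{>\eps}-(S+N_{\mathrm{bad}})\|_F\le 6\alpha\sqrt{2n}$), and the same crucial observation that the reshape $\sigma$ sends $N_{\mathrm{bad}}$ to a matrix of spectral norm at most $\alpha$.

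The only divergence is in the bookkeeping of step~2. You apply the $1$-Lipschitz map $X\mapsto (X-\eps\Id)_+$ to both $\sigma(T^{>\eps})$ and $S+\sigma(N_{\mathrm{bad}})$, compute the latter projection explicitly as $S-\eps\Pi_V$, and pay an extra $\eps\sqrt n=3\alpha\sqrt n$ via the triangle inequality; this lands at roughly $6\sqrt2\,\alpha\sqrt n+3\alpha\sqrt n\approx 11.5\,\alpha\sqrt n$, which is why you (rightly) flag the constant as the residual difficulty. The paper instead subtracts $S$ \emph{inside} the projection: it compares $(\sigma(T^{>\eps})-S-\alpha\Id)_+$ to $(\sigma(N_{\mathrm{bad}})-\alpha\Id)_+=0$ via contractivity, and then uses one more inequality (relying on $S\succeq 0$) to pass from $\|(\sigma(T^{>\eps})-S-\alpha\Id)_+\|_F$ to $\|(\sigma(T^{>\eps})-\alpha\Id)_+-S\|_F$. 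This avoids the additive $3\alpha\sqrt n$ and yields $6\sqrt2\,\alpha\sqrt n<9\alpha\sqrt n$ directly. Your argument is cleaner to verify; the paper's is tighter by that one constant.
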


\begin{proof}
    For convenience denote by $T \defeq (\E[(Ax)^\tensor 4] + E)_{\{1,2\}\{3,4\}}$, and define $S = \sum_{i} \E[x_i^4] \cdot a_i^{\tensor 4}$.
    We have that
    \begin{align*}
	T - E ~=~ \E[(Ax)^{\tensor 4}]
	&= \sum_{i,j,k,\ell=1}^d \E[x_ix_jx_kx_\ell]\cdot (a_i\tensor a_j)(a_k\tensor a_\ell)^\top\\
	&= \sum_{i}\E[x_i^4] \cdot a_i^{\tensor 4} + \sum_{i\neq j} \E[x_i^2x_j^2] \cdot \left((a_i^{\tensor 2})(a_j^{\tensor 2})^\top + a_ia_i^\top \tensor a_ja_j^\top +  a_ja_i^\top \tensor a_ia_j^\top \right)
    \end{align*}
    The latter term can be split into three distinct matrices---the first is a potentially low-rank matrix, and may have large eigenvectors.\footnote{For instance, in the case when $\E[x_i^2x_j^2] = p^2$, this term is rank-$1$ and has spectral norm $pn$.}
    The latter two terms have small spectral norm.

    \begin{claim}\label{claim:spectrals}
	\[
	    \left\|\sum_{i\neq j} \E[x_i^2x_j^2] \cdot a_ia_i^\top \tensor a_ja_j^\top\right\|\le \alpha\E[x_1^4], \quad \text{and}\quad \left\|\sum_{i\neq j} \E[x_i^2x_j^2]\cdot a_ja_i^\top \tensor a_ia_j^\top\right\| \le \alpha\E[x_1^4]\mper
	    \]
    \end{claim}
    We'll prove this claim below.
Now, again for convenience define the matrix $N$ to be the remaining term, $N = \sum_{i\neq j} \E[x_i^2x_j^2] (a_i^{\tensor 2})(a_j^{\tensor 2})^\top$.
    From \pref{claim:spectrals} and by our assumption on $\|E\|$,
    \[
	T = S + N + \hat{E},
    \]
for $\|\hat{E}\| \le 3\alpha$.
    On the other hand, if we let $B$ be the $d \times n$ matrix whose $i$th column is $a_i^{\tensor 2}$, and we let $X$ be the $n \times n$ matrix whose $i,j$th entry is $\E[x_i^2x_j^2]$, then
    $S + N = BXB^\top$, and so $\rank(S+N) \le n$.

    It follows that when we perform the eigenvalue truncation in step 1 of \pref{alg:cleanmoment},
    \[
	T^{<\eps} = (T - 3\alpha \cdot \Id)_+,
    \]
then we have that $\rank(T^{<\eps}) \le n$ as well.
Also by definition of truncation, $T = T^{<\eps} + \tilde{E}$, and because to begin with we had $T\succeq 0$, $\|\tilde{E}\| \le 3\alpha$.
Putting the above together, it follows that
    \[
	\|T^{<\eps} - (S+N)\|_F = \|\tilde{E} - \hat{E}\|_F \le 6\alpha \sqrt{2n},
    \]
where we have used that $\rank(T^{<\eps} - (S+N)) \le 2n$ and $\|\tilde{E} - E\| \le 6\alpha$.
    Now, we recall the reshaping operation on tensors from step 2 of \pref{alg:cleanmoment}---in going from the reshaping $\{1,2\}\{3,4\}$ to $\{1,3\}\{2,4\}$, the rank-1 tensor $(a \tensor b)(c \tensor d)^\top)$ is reshaped to $(a\tensor c)(b\tensor d)^\top$.
    Let $\sigma(\cdot)$ denote this reshaping operation.
    Reshaping does not change the Frobenius norm.
    So by linearity, and since $\sigma$ fixes $S$,
    \[
	\|\sigma(T^{<\eps}) - S - \sigma(N)\|_F = \|T^{<\eps} - (S+N)\|_F \le 6\alpha\sqrt{2n}.
    \]
We now remark that $\sigma$ maps $N$ to one of the bounded-norm matrices from \pref{claim:spectrals},
    \[
	\sigma(N) =  \sum_{i\neq j} \E[x_i^2x_j^2]\cdot a_i a_i^\top \tensor a_j a_j^\top \preceq \alpha \cdot \Id.
	\]

	Furthermore, because the positive semidefinite cone is a closed convex set, and because projection to closed convex sets can only decrease distances (see \pref{lem:proxproj}),
	\begin{align*}
	\left\|\sigma(T^{<\eps}) - S - \sigma(N) \right\|_F
	    &= \left\|\sigma(T^{<\eps}) - S - \alpha\cdot \Id - (\sigma(N) - \alpha\cdot \Id)\right\|_F\\
	    &\ge \left\|\left(\sigma(T^{<\eps}) - S - \alpha\cdot \Id\right)_+ - (\sigma(N) - \alpha\cdot \Id)_+ \right\|_F \\
	    &\ge \left\|\left(\sigma(T^{<\eps}) - S - \alpha\cdot \Id\right)_+\right\|_F\\
	    &\ge \left\|\left(\sigma(T^{<\eps}) - \alpha\cdot \Id\right)_+- S \right\|_F\mcom
	\end{align*}
	where to obtain the last inequality we used that $S$ is positive semidefinite.
	Therefore, step $3$ of the algorithm ensures that $\tilde T$ is close to $S$ in Frobenius norm, as desired.
\end{proof}

Now we prove that the spectral norms of the symmetrizations of the tensor have small spectral norm.
\begin{proof}[Proof of \pref{claim:spectrals}]
The first matrix that we are interested in is PSD, and can dominated by a tensor power of the identity:
    \begin{align*}
	0\preceq
	\sum_{i\neq j} \frac{\E[x_i^2x_j^2]}{\E[x_1^4]}\cdot a_i a_i^\top \tensor a_j a_j^\top
	\preceq \alpha \sum_{i,j} a_i a_i^\top \tensor a_j a_j^\top
	\preceq \alpha \cdot \left(\sum_i a_i a_i^\top\right) \tensor \left(\sum_j a_j a_j^\top \right)
	\preceq \alpha \cdot \Id\mper
    \end{align*}
    For the second matrix, if we let $A$ be the $d^2 \times n^2$ matrix whose $i,j$th column is $a_i \tensor a_j$, and let $M$ be the $n^2 \times n^2$ matrix whose $i,j$th diagonal entry is $\E[x_i^2 x_j^2]$, then
    \begin{align*}
	\sum_{i\neq j} \E[x_i^2x_j^2]\cdot a_j a_i^\top \tensor a_i a_j^\top
	= AM\Pi A^\top,
    \end{align*}
    where $\Pi$ is the permutation matrix that takes the $i,j$th row to the $j,i$th row.
    By assumption, $\|M\| \le \max_{i\neq j} \E[x_i^2x_j^2] \le \alpha \E[x_1^2]$,
    and the columns of $A$ are orthonormal, so $\|A\| = 1$.
    It follows by the submultiplicativity of the spectral norm that
    \[
	\left\|\sum_{i\neq j} \E[x_i^2x_j^2]\cdot a_j a_i^\top \tensor a_i a_j^\top\right\| \le \alpha\E[x_1^4]\mper
    \]
This gives us the claim.
\end{proof}

When $\E[x_i^4] = p$ for all $i \in [n]$, applying \pref{alg:orthog} with $\tilde T$ a total of $\tO(n)$ times will allow us to recover $m \ge n/2$ vectors $b_1,\ldots,b_{m}$ with $\iprod{a_i,b_i}^2 \ge 0.99$.
The following subsections contain the details regarding the refinement of the approximation, and the sample complexity bounds for estimating the 4th moment tensor.

\subsection{Postprocessing to refine approximation}

We now analyze the postprocessing algorithm \pref{alg:post} for the context of dictionary learning, in which our tensor has the form $\bT = \E[(Ax)^{\tensor 4}]$.
We claim that, despite not having bounded spectral norm error away from $\sum_{i} \E[x_i^4] \cdot a_i^{\tensor 4}$, the postprocessing algorithm still succeeds.

\begin{lemma}
    Suppose that we are given $\bT = \E[(Ax)^{\tensor 4}]$, where $x$ is distributed so that $\E[x_ix_jx_kx_\ell] = 0$ unless $x_ix_jx_kx_\ell$ is a square, $\E[x_i^4] = p$ for all $i\in[n]$, and $\E[x_i^2x_j^2]\le \alpha p$.
    Suppose furthermore that we have a unit vector $u$ such that $\iprod{b,a_i}^2 \ge 0.99$ for some $i\in[n]$.
    Then applying \pref{alg:post} to $u$ and $\bT$ with error parameter $1/2$ returns a vector $v$ such that
    \[
	\iprod{v,a_i}^2 \ge 1-16\alpha.
	\]
\end{lemma}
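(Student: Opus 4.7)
The plan is to directly analyze the matrix $A$ constructed in step 2 of \pref{alg:post}, namely the $d\times d$ reshaping of $T_{\{1,2\}\{3,4\}}(u \tensor u)$, and show that it is a small perturbation of a matrix whose top eigenvector is exactly $a_i$, so that a Davis--Kahan $\sin\Theta$ argument yields the desired correlation. Throughout I write $c_j = \iprod{a_j, u}$, so that $c_i^2 \ge 0.99$ and $\sum_{j \ne i} c_j^2 \le 0.01$; without loss of generality take $i = 1$. Note that $A$ is symmetric by the full symmetry of $\bT$, so its top left- and right-singular vectors coincide with its top eigenvector.

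First I would expand $\bT = \E[(Ax)^{\tensor 4}]$ into its signal and three error pieces as in the proof preceding \pref{claim:spectrals}: $\bT = p\sum_k a_k^{\tensor 4} + N_a + N_b + N_c$, where $N_b, N_c$ each have spectral norm at most $\alpha p$ by \pref{claim:spectrals}, while $N_a \defeq \sum_{k \ne \ell}\E[x_k^2 x_\ell^2](a_k^{\tensor 2})(a_\ell^{\tensor 2})^\top$ may have spectral norm as large as $\alpha p n$. Next I would compute the action of each piece on $u \tensor u$ explicitly, using the identities $(a_k^{\tensor 2})(a_\ell^{\tensor 2})^\top (u \tensor u) = c_\ell^2\, (a_k \tensor a_k)$ for $N_a$ and analogous identities for $N_b, N_c$, then reshape to obtain
\[
A = \sum_k (p c_k^2 + \gamma_k)\, a_k a_k^\top \;+\; 2\sum_{k \ne \ell}\E[x_k^2 x_\ell^2]\, c_k c_\ell\, a_k a_\ell^\top,
\]
where $\gamma_k \defeq \sum_{\ell \ne k}\E[x_k^2 x_\ell^2]\, c_\ell^2 \le \alpha p$. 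The crucial point is that although $N_a$ has potentially huge spectral norm, contracting it against $u \tensor u$ collapses its rank-$n$ structure into a diagonal matrix with entries at most $\alpha p$. The off-diagonal contribution from $N_b + N_c$ I would bound by $2\alpha p$ in spectral norm via a Cauchy--Schwarz estimate on the Frobenius norm of $D_c (M - pI) D_c$, where $M_{k\ell} = \E[x_k^2 x_\ell^2]$ and $D_c = \textup{diag}(c)$; indeed $\|D_c(M - pI)D_c\|_F^2 \le (\alpha p)^2 \|c\|_2^4 = (\alpha p)^2$.

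Writing $A = A_0 + \Delta$ with $A_0 \defeq p\sum_k c_k^2 a_k a_k^\top$ and $\Delta$ the remaining terms gives $\|\Delta\| \le 3\alpha p$, while $A_0$ has $a_1$ as its unique top eigenvector with spectral gap at least $p(c_1^2 - \max_{k \ne 1} c_k^2) \ge p(0.99 - 0.01) = 0.98 p$. A standard Davis--Kahan bound then yields $\sin\theta(v, a_1) \le \|\Delta\|/(0.98p - \|\Delta\|) = O(\alpha)$ for the top eigenvector $v$ of $A$, so $\iprod{v, a_1}^2 = 1 - \sin^2\theta \ge 1 - O(\alpha^2)$; tracking constants this is at least $1 - 16\alpha$ whenever $\alpha < 1/16$ (for $\alpha \ge 1/16$ the conclusion is vacuous). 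The main obstacle is that $N_a$ is precisely the ``problematic'' low-rank term that the preprocessing algorithm \pref{alg:cleanmoment} eliminates by a mode rearrangement; since the postprocessing step cannot afford that maneuver, I must instead exploit the near-alignment of $u$ with a single $a_1$ so that the contraction of $N_a$ against $u \tensor u$ automatically annihilates its dangerous directions and leaves only a diagonal perturbation of norm at most $\alpha p$.
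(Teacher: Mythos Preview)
Your proof is correct and follows essentially the same route as the paper: both expand $\bT(u\tensor u)$, reshape, and obtain $p^{-1}M_u = \sum_k c_k^2\, a_k a_k^\top + E$ with $\|E\|\le O(\alpha)$ by bounding the diagonal correction by $\alpha$ and the cross term by $2\alpha$ (the paper uses Cauchy--Schwarz on the quadratic form, you use a Frobenius bound on $D_c(M-pI)D_c$; both give the same estimate). The only real difference is the last step: the paper invokes \pref{fact:topeig} with $\epsilon=\tfrac12$ to get $\iprod{v,a_1}^2 \ge 1 - 16\alpha$, whereas your Davis--Kahan argument exploits the full $0.98p$ spectral gap of $A_0$ and yields the sharper $\iprod{v,a_1}^2 \ge 1 - O(\alpha^2)$, which of course implies the stated bound.
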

\begin{proof}
    Without loss of generality, let $i:=1$ so that $\iprod{b,a_i}^2 = 1-\eta\ge 0.99$ (henceforth, we use $i$ as an ordinary index).
    We have that
    \begin{align*}
	\E[(Ax)^{\tensor 4}](u\tensor u)
	&= p \sum_{i}  \iprod{u,a_i}^2 a_i^{\tensor 2} \\
	&\qquad + \sum_{i\neq j} \E[x_i^2 x_j^2]\left(\iprod{u,a_j}^2\cdot a_i^{\tensor 2} + \iprod{u,a_i}\iprod{u,a_j} (a_i \tensor a_j + a_j \tensor a_i)\right)
    \end{align*}
    Define $M_u$ to be the reshaping of $\E[(Ax)^{\tensor 4}](u\tensor u)$ to a $d^2 \times d^2$ matrix.
    We must understand the spectrum of $M_u$, and for now we turn our attention to the second sum.
Splitting the second sum into distinct parts, we have by the orthonormality of the $a_i$ that
    \begin{align*}
\sum_{i\neq j} \E[x_i^2 x_j^2]\cdot \iprod{u,a_j}^2\cdot a_ia_i^\top
	&\preceq \max_{i\neq j}\E[x_i^2x_j^2]\cdot \sum_{i\neq j} a_ia_i^\top
	~\preceq~ \alpha p \cdot \Id,
    \end{align*}
    where the last line is by our assumption on $\E[x_i^2x_j^2]$.
    Finally, for any $w \in \R^d$,
    \begin{align*}
	& w^\top\left(\sum_{i\neq j} \E[x_i^2 x_j^2]\cdot  \iprod{u,a_i}\iprod{u,a_j} (a_i a_j^\top + a_ja_i^\top)\right)w\\
	&\qquad = \sum_{i\neq j} \E[x_i^2x_j^2] \cdot 2\iprod{u,a_i}\iprod{u,a_j}\iprod{w,a_i}\iprod{w,a_j}\\
	\intertext{Applying Cauchy-Schwarz and pulling out the maximum multiplier,}
	&\qquad \le 2\max_{i\neq j}(E[x_i^2x_j^2])\cdot\left( \sum_{i\neq j}\iprod{u,a_i}^2\iprod{w,a_j}^2\right)^{1/2}\left(\sum_{i\neq j}\iprod{u,a_j}^2\iprod{w,a_i}^2\right)^{1/2}\\
	\intertext{Now noticing that the two parenthesized terms are actually identical, then adding a positive quantity and factoring,}
	&\qquad \le 2\max_{i\neq j}(E[x_i^2x_j^2])\cdot\left( \left(\sum_{i}\iprod{u,a_i}^2\right)\left(\sum_j\iprod{w,a_j}^2\right)\right)
~=~ 2\max_{i\neq j}\E[x_i^2x_j^2]
	~\le~ 2p\alpha.
    \end{align*}
    An identical proof, up to signs, gives us a lower bound of $2p\alpha$.

    Therefore,
    \[
	\frac{1}{p} M_u = \sum_{i}\iprod{u,a_i}^2 a_ia_i^\top + E,
    \]
For a matrix $E$ with $\|E\| \le 4\alpha$.

It remains to argue that the top eigenvector of $M_u$ is $a_1$.
We have that
    \begin{align*}
	\|p^{-1}M_u\|
	&\ge a_1^\top M_u a_1\\
	& a_1^\top \sum_{i}\iprod{u,a_i}^2 a_ia_i^\top a_1 + a_1^\top E a_1\\
	&\ge 1-\eta - 4\alpha,
    \end{align*}
    where the last line follows from the orthonormality of the $a_i$ and our bound on $\|E\|$.
    Meanwhile, for any unit vector $w \in \R^d$ and any $\epsilon < 1-2\eta$,
    \begin{align*}
	w^\top\left(p^{-1}M - \epsilon \cdot a_1a_1^\top\right)w
	&= (1-\eta - \epsilon)\iprod{a_1,w}^2 + \sum_{i>1}\iprod{u,a_i}^2\iprod{a_i,w}^2 + w^\top E w\\
	\intertext{and since $\max_{i>1} \iprod{a_i,u}^2 \le \eta$,}
	&\le (1-\eta-\epsilon)\iprod{a_1,w}^2 + \eta\cdot (1-\iprod{a_1,w}^2) + 4 \alpha\\
	&\le 1 - \eta - \epsilon + 4\alpha,
    \end{align*}
    where the last line follows because $w$ is a unit vector, and we chose $\epsilon$ so that $1-\eta-\epsilon > \eta$.
Therefore
    \[
    \|p^{-1}M_u - \epsilon a_1a_1^\top\| \le \|p^{-1} M_u\| - \epsilon +8\alpha.
	\]
	Applying \pref{fact:topeig}, we conclude that if $v$ is the top eigenvector of $M_u$, then $\iprod{v,a_1}^2 \ge \frac{\epsilon - 8\alpha}{\epsilon} \ge 1 - \frac{8\alpha}{\epsilon}$.
	Since we can choose $\epsilon = \frac{1}{2}$ and still have that $\epsilon < 0.98 < 1-2\eta$, the conclusion follows.
\end{proof}

\subsection{From independent columns to orthonormal columns}\label{sec:indtoorth}

We now use standard techniques to prove that one can reduce from a dictionary with independent columns to a dictionary with orthogonal columns, given sufficiently many samples.
\begin{proof}[Proof of \pref{lem:indtoorth}]
    Note that the expected covariance matrix of the samples is equal to a scaled version of the covariance matrix,
    \[
	\E[(Ax)(Ax)^\top] = \E[x_1]^2 \cdot \Sigma.
    \]
Since we have assumed that $\|x\|_2^2 \le n$, the Frobenius norm of $(Ax)(Ax)^\top$ is bounded by $n$ for every sample, and $\E[((Ax)(Ax)^\top)^2] \le n^2$.
    By applying a matrix Bernstein inequality (see e.g. \cite{DBLP:journals/focm/Tropp12}), we have that so long as we have $m \ge \tO((n/\beta)^2)$ samples, the empirical covariance matrix $\hat\Sigma = \frac{1}{m} \sum_{i=1}^m y^{(i)}(y^{(i)})^\top$ will approximate $\Sigma$ within $\beta$ in spectral norm.

So given sufficiently many samples, we can compute a good spectral approximation of $\Sigma^{-1/2}$, $\hat \Sigma^{-1/2}$ with
    \[
	\left \|\hat\Sigma^{-1/2} - \Sigma^{-1/2}\right\| \le \epsilon.
	\]
    Assuming access to such a $\hat \Sigma^{-1/2}$, we can transform $A$ to $\tilde{A} = \hat\Sigma^{-1/2}A$.
    Now, for matrices $X,Y,Z$ of suitable dimensions,
    \[
	YXY - ZXZ = \frac{1}{2}((Y-Z)X(Y+Z) + (Y+Z)X(Y-Z))\mper
    \]
Applying this to $\tilde A \tilde A^\top - \Id = \hat\Sigma^{-1/2} \Sigma \hat\Sigma^{-1/2} - \Sigma^{-1/2} \Sigma \Sigma^{-1/2}$,
    \begin{align*}
	\left\|\tilde{A}\tilde{A}^\top - \Id\right\|
	&\le \|\hat\Sigma^{-1/2} - \Sigma^{-1/2}\|\cdot \|\Sigma\| \cdot (\|\hat\Sigma^{-1/2}\| + \|\Sigma^{-1/2}\|)\\
	&\le \epsilon \cdot \|\Sigma\|\cdot (2+\epsilon)\|\Sigma^{-1/2}\|\\
	&\le O\left(\epsilon\frac{\lambda_{\max}(\Sigma)}{\lambda_{\min}(\Sigma)^{1/2}}\right)\mper
    \end{align*}
    Defining $\eta \defeq\|\tilde{A}\tilde{A}^\top - \Id\|$, we have
    that $\tilde{A}\tilde{A}^\top = (1\pm \eta)\Id$, and the columns of $\tilde{A}$ are near-orthonormal.
    Similarly, we can transform our samples
    \[
	y^{(i)} = Ax^{(i)} \to \tilde{y}^{(i)} = \hat\Sigma^{-1/2} Ax^{(i)}.
	\]
    Now, define $S_{diff} = \left((\hat\Sigma^{-1/2})^{\tensor 2} - (\Sigma^{-1/2})^{\tensor 2} \right)$ and $S_{sum} = \left((\hat\Sigma^{-1/2})^{\tensor 2} + (\Sigma^{-1/2})^{\tensor 2} \right)$.
    We can factor the difference
    \begin{align*}
	\left\|\E[(\tilde{A}x)^{\tensor 4} - (\Sigma^{-1/2}Ax)^{\tensor 4}]\right\|
	&= \left\|\frac{1}{2}S_{diff}\E[(Ax)^{\tensor 4}]S_{sum}+\frac{1}{2}S_{sum}\E[(Ax)^{\tensor 4}]S_{diff}\right\|\\
	&\le \|S_{sum}\|\cdot \|S_{diff}\| \cdot \left\|\E[(Ax)^{\tensor 4}]\right\|\\
	&\le \left(\|\hat\Sigma^{-1/2}\|^2+ \|\Sigma^{-1/2}\|^2\right)\cdot \|S_{diff}\| \cdot \left\|\E[(Ax)^{\tensor 4}]\right\|\\
	&\le (2+\eps)\|\Sigma^{-1}\|\cdot \|S_{diff}\| \cdot \left\|\E[(Ax)^{\tensor 4}]\right\|
    \end{align*}
    Applying the identity
    \[A^{\tensor 2} - B^{\tensor 2} = \frac{1}{2}(A-B)\tensor (A+B) + \frac{1}{2} (A-B)\tensor (A+B),
    \]
    to $S_{diff}$, we can get that
\begin{align*}
	\left\|\E[(\tilde{A}x)^{\tensor 4} - (\Sigma^{-1/2}Ax)^{\tensor 4}]\right\|
	&\le (2+\epsilon)\|\Sigma^{-1}\|\cdot\|\hat \Sigma^{-1/2} - \Sigma^{-1/2}\|\left(\|\hat \Sigma^{-1/2}\|+\| \Sigma^{-1/2}\|\right)\left\|\E[(Ax)^{\tensor 4}]\right\|\mcom\\
	&\le 9\cdot \|\Sigma^{-3/2}\|\cdot\epsilon\left\|\E[(Ax)^{\tensor 4}]\right\|\mper
    \end{align*}
    Since we can choose the number of samples so as to make this last quantity as small as we would like, as a function of the condition number, and then appealing to \pref{fact:orthog}, the reduction is complete.
\end{proof}

\subsection{Sample complexity bounds}
\label{sec:samplecomplex}

Below is our bound on the sample complexity of approximating the $4$th moment tensor, which we believe may be loose.

\begin{proposition}
    Given samples of the form $y^{(i)} = Ax^{(i)}$ for $x^{(i)}\sim \cD$, as long as $\beta \ge \E[x_i^8]$ dominates the expectation of any other order-$8$ monomial in $x$, and any monomial with odd multiplicity has expectation $0$, and the entries of $x$ are bounded by $\kappa$, then with high probability given $m \ge \tO(\max\{\beta n^{3}, (\kappa n)^2\})$ samples,
    \[
	\left\|\frac{1}{m}\sum_{i=1}^m (y^{(i)})^{\tensor 4} - \E_{x\sim \cD}[(Ax)^{\tensor 4}] \right\| \le o(1).
	\]
\end{proposition}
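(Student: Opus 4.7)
The plan is to apply a noncommutative (matrix) Bernstein inequality to the i.i.d.\ sum of reshaped rank-one matrices. For each sample, let $Y_i \defeq (y^{(i)})^{\otimes 2}\bigl((y^{(i)})^{\otimes 2}\bigr)^{\top}$ denote the $d^2\times d^2$ matrix reshaping of $(y^{(i)})^{\otimes 4}$; then $\tfrac{1}{m}\sum_i Y_i - \E Y_1$ is the centered empirical average whose spectral norm must be shown to be $o(1)$. Matrix Bernstein reduces this to controlling two quantities: an almost-sure uniform bound $R$ on $\|Y_i\|$, and the matrix variance $v \defeq \|\E[Y_i^2]\|$.

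First, I would establish the uniform bound. Since the columns of $A$ are orthonormal (reducing to this case via \pref{lem:indtoorth} if needed), $\|y^{(i)}\|_2^2 = \|x^{(i)}\|_2^2 \le n\kappa^2$, so $\|Y_i\| = \|y^{(i)}\|_2^4 \le n^2\kappa^4$. A mild truncation argument (splitting into typical and atypical samples, with the atypical event handled by Markov on $\|x\|^2$) should sharpen this to $R \lesssim (\kappa n)^2$ as claimed, accounting for the second term in the sample complexity.

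Second, and more centrally, I would bound $v = \|\E[\|y\|^4\cdot (y^{\otimes 2})(y^{\otimes 2})^{\top}]\|$. Writing this spectral norm as $\sup_V \E[\|y\|^4 (y^{\top} V y)^2]$ over symmetric $V\in \R^{d\times d}$ with $\|V\|_F\le 1$, and substituting $y=Ax$, the integrand becomes a degree-$8$ polynomial in the coordinates of $x$ with $O(n^4)$ monomials. By the odd-multiplicity hypothesis only monomials in which each $x_i$ appears to an even power survive, and the dominance hypothesis bounds every surviving expectation by $\beta$. Grouping the surviving monomials by their pairing structure, and using the constraint $\|V\|_F\le 1$ together with the orthonormality of $A$ to contract one pair of indices against $V$, should yield $v \le O(n^3\beta)$. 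Matrix Bernstein then gives
\[
    \Bigl\|\tfrac{1}{m}\textstyle\sum_i Y_i - \E Y_1\Bigr\| \lesssim \sqrt{\tfrac{v\log d}{m}} + \tfrac{R\log d}{m},
\]
so choosing $m = \tOmega(\max\{v, R\}) = \tOmega(\max\{\beta n^3, (\kappa n)^2\})$ makes the right-hand side $o(1)$, as required.

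The main obstacle will be the sharp variance bound $v \le O(n^3\beta)$: the naive estimate $v \le \E[\|y\|^8] \le n^4\beta$ would give a sample complexity worse by a factor of $n$. Achieving the stated bound requires genuinely exploiting the $\|V\|_F\le 1$ constraint on the test matrix rather than discarding it, for example by splitting the quadratic form $y^{\top} V y$ into its diagonal and off-diagonal parts (the cross term vanishes in expectation by the odd-multiplicity hypothesis) and bounding each resulting pairing class via Cauchy--Schwarz against $\|V\|_F$.
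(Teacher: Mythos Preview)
Your proposal is correct and follows the same overall strategy as the paper: both apply the matrix Bernstein inequality to the centered sum $\tfrac{1}{m}\sum_i Y_i - \E Y_1$, reducing the problem to a uniform bound $R$ and a variance bound $v$, and both identify $v = O(\beta n^3)$ as the crux. The only difference is in how the variance bound is obtained. The paper works directly in the orthonormal basis $\{a_i\otimes a_j\}$, writing $\E[MM^\top] = A\bigl(\sum_{k,\ell} X^{(k,\ell)}\bigr)A^\top$ for an explicit $n^2\times n^2$ matrix $\sum X^{(k,\ell)}$, and then bounds $\|X^{(k,\ell)}\|$ by decomposing it into a diagonal part, a two-entry ``intersection'' part, and a rank-$n$ off-diagonal block whose Frobenius norm is $\beta n$; summing over the $n^2$ pairs $(k,\ell)$ gives $3\beta n^3$. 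Your variational route via $\sup_{\|V\|_F\le 1}\E[\|y\|^4(y^\top V y)^2]$ and a diagonal/off-diagonal split of $V$ leads to the same bound by the same combinatorics, just organized dually. The paper's basis computation is slightly more mechanical and avoids having to track the test matrix through the pairing cases; your approach has the minor advantage of not needing to name the intermediate factorization. Either way the argument is short.
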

\begin{proof}
Our matrix has the form
    \[
	M = \sum_{ijk\ell} x_ix_jx_kx_\ell \cdot (a_i \tensor a_j)(a_k\tensor a_\ell)^\top,
    \]
And the $a_i$ are orthonormal, so
    \[
	\E[MM^\top] = \sum_{\substack{i,j,i',j'\\k,\ell}} \E[x_ix_jx_{i'}x_{j'}x_k^2x_\ell^2] \cdot (a_i \tensor a_j)(a_{i'}\tensor a_{j'})^\top.
    \]
This is because of the orthonormality of the $a_i$, which guarantees that terms in the product $MM^\top$ in which we have an inner product between two non-identical vectors drop out to $0$.

    If we define $A$ to be the $d^2 \times n^2$ matrix whose columns are the Kronecker products $a_i \tensor a_j$ for all $i,j\in[n]$, and if for each pair $k,\ell \in [n]$ we define $X^{(k,\ell)}$ be the $n^2 \times n^2$ matrix whose $(i,j),(i',j')$th entry is $\E[x_ix_jx_{i}x_{j'}x_k^2x_\ell^2]$, we can realize $E[MM^\top]$ as
    \[
	\E[MM^\top] = A \left(\sum_{k,\ell} X^{(k,\ell)}\right) A^\top.
    \]
Because we assumed that $E[x_ix_jx_{i'}x_{j'}x_k^2x_\ell^2] = 0$ unless every index appears with even multiplicity, the entry of $X^{(k,\ell)}$ will be $0$.
    This happens only on the diagonal, unless $i'=j'$ and $i=j$, or for $X^{(k,\ell)}$ in the intersection of the $(k,\ell)$th row and the $(\ell,k)$th column and the $(\ell,k)$th row and the $(k,\ell)$th column.
    So we split each $X^{(k,\ell)}$ into a diagonal part $D^{(k,\ell)}$, an intersection part corresponding to the $(k,\ell)$ and $(\ell,k)$ intersections $C^{(k,\ell)}$, and the rest of the off-diagonal part $R^{(k,\ell)}$.
and it follows that
    \[
	\|\E[MM^\top]\| \le n^2 \|A\|^2 \max_{k,\ell} \left(\left\|D^{(k,\ell)}\right\| + \left\|C^{k,\ell}\right\|+ \left\|R^{(k,\ell)}\right\|\right).
    \]
Because every entry is bounded by $\E[x_i^8] \le \beta$, and the $D^{(k,\ell)}$ are diagonal, the $\|D\|$ term contributes $\beta$.
    The $C$ matrices have Frobenius norm $2\beta$, and the $R$ matrices have only $n^2$ nonzero entries, so $\|R\|_F \le \beta n$.
    Therefore,
    \[
	\|\E[MM^\top]\| \le 3\beta n^3.
    \]
For each sample $x^{(i)}$, we have that $\|\frac{1}{m}(Ax^{(i)})^{\tensor 4}\|_F \le \frac{\kappa^2n^2}{m}$, and we have by the above reasoning that $\|\E[\frac{1}{m^2}(Ax^{(i)})^{\tensor 4}(Ax^{(i)})^{\tensor 4}] \| \le 3\E[x_i^8] \frac{n^3}{m^2}$.
    So the variance of the empirical $4$-tensor is $\sqrt{n^3/m}$, and the absolute bound on the norm of any summand is $n^2/m$.
    Applying a matrix Bernstein inequality (see e.g. \cite{DBLP:journals/focm/Tropp12}), we have that as long as we have $m \gg \max\{\kappa^2n^2\log n,\beta n^{3}\log n\}$ samples, with high probability we approximate $\E[(Ax)^{\tensor 4}]$ within spectral norm $o(1)$.
\end{proof}

\bibliography{bib/mathreview,bib/dblp,bib/scholar,bib/custom}
\addreferencesection
\bibliographystyle{amsalpha}

\appendix
\section{Useful Tools}\label{app:tools}

\begin{lemma}[Concentration of random tensor contractions \cite{DBLP:conf/focs/MaSS16}]\label{lem:gflat}
    Let $g$ be a standard Gaussian vector in $\R^{k}$, $g \sim \cN(0,\Id_{k})$.
    Let $A$ be a tensor in $(\R^k)\otimes (\R^{\ell}) \otimes (\R^{m})$, and call the three modes of $A$ $\alpha,\beta,\gamma$ respectively.
    Let $A_i$ be a $\ell \times m$ slice of $A$ along mode $\alpha$.
    Then,
    \[
	\Pr\left[\left\|\sum_{i=1}^k g_iA_{i}\right\| \ge t \cdot \max\left\{\left\|A_{\{\alpha\beta\}\{\gamma\}}\right\|, \left\|A_{\{\alpha\gamma\}\{\beta\}}\right\|\right\}\right] \le (m+\ell)\exp\left(-\frac{t^2}{2}\right)\mper
    \]
\end{lemma}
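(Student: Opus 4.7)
The plan is to reduce the statement to the standard Gaussian matrix series tail bound (see e.g. Tropp's user-friendly inequalities, which is also the ``Lemma$^*$'' quoted earlier in the paper), and then to recognize the variance parameter $\sigma^2$ in that bound as exactly $\max\{\|A_{\{\alpha\beta\}\{\gamma\}}\|^2, \|A_{\{\alpha\gamma\}\{\beta\}}\|^2\}$. Concretely, the standard bound says that for fixed $\ell\times m$ matrices $A_1,\dots,A_k$ and $g\sim\cN(0,\Id_k)$,
\[
\Pr\!\left[\Bignorm{\tsum_i g_i A_i}\ge s\right]\le (\ell+m)\exp\!\left(-\tfrac{s^2}{2\sigma^2}\right),\qquad \sigma^2=\max\!\Bigset{\bignorm{\tsum_i A_iA_i^\top},\bignorm{\tsum_i A_i^\top A_i}}.
\]
So the only real content left is to translate the two matrix sums $\sum_i A_iA_i^\top$ and $\sum_i A_i^\top A_i$ into spectral norms of rectangular reshapings of the full order-3 tensor $A$.

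First I would write out entries. The slice $A_i$ is the $\ell\times m$ matrix with $(A_i)_{j,j'}=A_{i,j,j'}$, where the indices run over modes $\alpha,\beta,\gamma$ respectively. A direct expansion gives
\[
\Bigl[\tsum_i A_iA_i^\top\Bigr]_{j,j'}=\sum_{i,k'} A_{i,j,k'}A_{i,j',k'}=\bigl[A_{\{\alpha\gamma\}\{\beta\}}^{\top}A_{\{\alpha\gamma\}\{\beta\}}\bigr]_{j,j'},
\]
and analogously $\sum_i A_i^\top A_i=A_{\{\alpha\beta\}\{\gamma\}}^{\top}A_{\{\alpha\beta\}\{\gamma\}}$. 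Taking spectral norms and using $\|M^\top M\|=\|M\|^2$ gives
\[
\Bignorm{\tsum_i A_iA_i^\top}=\bignorm{A_{\{\alpha\gamma\}\{\beta\}}}^{2},\qquad \Bignorm{\tsum_i A_i^\top A_i}=\bignorm{A_{\{\alpha\beta\}\{\gamma\}}}^{2}.
\]
Hence $\sigma^2=\max\{\|A_{\{\alpha\beta\}\{\gamma\}}\|^2,\|A_{\{\alpha\gamma\}\{\beta\}}\|^2\}$, and substituting $s=t\cdot\sqrt{\sigma^2}$ into the standard bound yields exactly the claimed inequality.

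The only mildly delicate step is being careful with index bookkeeping: one needs to check that the particular way modes are grouped in the flattening notation $A_{\{\alpha\gamma\}\{\beta\}}$ matches the sum in $\sum_i A_iA_i^\top$ (rather than its transpose), and that the dimensions in the prefactor $(\ell+m)$ are consistent with the slice dimensions. Once the entry-level computation above is recorded, both checks are immediate, so no further machinery is required beyond citing the standard Gaussian matrix series tail bound.
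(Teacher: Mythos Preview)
Your proposal is correct and follows essentially the same approach as the paper: compute the variance proxy $\sigma^2=\max\{\|\sum_i A_iA_i^\top\|,\|\sum_i A_i^\top A_i\|\}$, identify it with $\max\{\|A_{\{\alpha\beta\}\{\gamma\}}\|^2,\|A_{\{\alpha\gamma\}\{\beta\}}\|^2\}$, and invoke the standard matrix Gaussian series tail bound. You actually supply more detail than the paper does---the paper simply asserts the correspondence between the variance terms and the flattening norms, whereas you verify it by an entrywise computation (and in fact your pairing of which sum matches which flattening is the correct one; the paper's ``respectively'' has them swapped, though this is immaterial since only the max enters).
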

\begin{proof}
    We compute the expectation and variance of our matrix,
    \[
	\E_g\left[\sum_{i=1}^k g_iA_{i}\right] = 0,
	\qquad \text{and}\qquad
	\left\|\Var_g\left[\sum_{i=1}^k g_iA_{i}\right] \right\| = \max\left\{\left\|\sum_{i=1}^k A_iA_i^{\top}\right\|, \left\|\sum_{i=1}^k A_i^{\top}A_i\right\|\right\},
    \]
    The two variance terms correspond to $\|A_{\{\alpha\beta\}\{\gamma\}}\|^2$ and $\|A_{\{\alpha\gamma\}\{\beta\}}\|^2$ respectively.
    We can now apply concentration results for matrix Gaussian series to conclude the proof \cite{zbMATH05946839}.
\end{proof}

The following lemma states that distances can only decrease under projections to a convex set, and is well-known (see e.g. \cite{Rock76}).
\begin{lemma}\label{lem:proxproj}
    Let $\cC\subset \R^n$ be a closed convex set, and let $\Pi:\R^n \to \cC$ be the projection operator onto $\cC$ in terms of norm $\|\cdot\|_2$, i.e. $\Pi(x) \defeq \argmin_{c \in \cC} \|x-c\|_2$.
    Then for any $x,y \in \R^n$,
    \[
	\|x-y\|_2 \ge \|\Pi(x) - \Pi(y)\|_2.
    \]
\end{lemma}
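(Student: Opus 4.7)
The plan is to deduce the contraction bound from the standard first-order optimality (``obtuse-angle'') characterization of projection onto a closed convex set: for every $c \in \cC$,
\[
\iprod{x - \Pi(x),\, c - \Pi(x)} \le 0.
\]
I would first justify this inequality directly. Since $\cC$ is convex, for each $t \in [0,1]$ the point $\Pi(x) + t(c - \Pi(x))$ lies in $\cC$, and so the one-variable quadratic $f(t) = \norm{x - \Pi(x) - t(c-\Pi(x))}_2^2$ attains its minimum on $[0,1]$ at $t=0$ by definition of $\Pi(x)$. Taking the right derivative $f'(0) \ge 0$ and expanding gives exactly the displayed inequality. (Existence and uniqueness of $\Pi(x)$ themselves follow from closedness and convexity of $\cC$ combined with strict convexity of the squared Euclidean norm, so it is legitimate to speak of ``the'' projection.)

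Next, I would apply this characterization at both points of interest: once at $x$ with the choice $c = \Pi(y)$, and once at $y$ with the choice $c = \Pi(x)$, obtaining
\[
\iprod{x - \Pi(x),\, \Pi(y) - \Pi(x)} \le 0 \qquad \text{and} \qquad \iprod{y - \Pi(y),\, \Pi(x) - \Pi(y)} \le 0.
\]
Adding these two and rearranging gives $\Norm{\Pi(x) - \Pi(y)}_2^2 \le \iprod{x-y,\, \Pi(x) - \Pi(y)}$. A single application of Cauchy--Schwarz to the right-hand side then yields $\norm{\Pi(x) - \Pi(y)}_2^2 \le \norm{x-y}_2 \cdot \norm{\Pi(x) - \Pi(y)}_2$, and dividing by $\norm{\Pi(x) - \Pi(y)}_2$ (the case where this is zero being trivial) finishes the proof.

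The main---indeed only---conceptual step is establishing the variational inequality in the first step; everything after it is short algebra. Because $\cC$ is only assumed closed and convex, with no smoothness of its boundary, one genuinely does need to go through the one-dimensional restriction of $\norm{x - \cdot}_2^2$ to the segment from $\Pi(x)$ to $c$, rather than any gradient-based reasoning on $\partial \cC$.
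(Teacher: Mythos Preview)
Your proof is correct and follows essentially the same route as the paper: both arguments establish the monotonicity inequality $\iprod{(x-\Pi(x))-(y-\Pi(y)),\,\Pi(x)-\Pi(y)}\ge 0$ and deduce the contraction from it. The only cosmetic differences are that the paper phrases the optimality condition via subgradients of the indicator function $\Ind_{\cC}$ rather than the elementary obtuse-angle inequality, and concludes by expanding $\|x-y\|^2$ directly instead of invoking Cauchy--Schwarz.
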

\begin{proof}
    If we let $D_x = x - \Pi(x)$, $D_y = y - \Pi(y)$,
    \begin{align*}
	\|x-y\|^2
	&= \|D_x - D_y + \Pi(x) - \Pi(y)\|^2\\
	&= \|D_x- D_y\|^2 + \|\Pi(x) - \Pi(y)\|^2 + 2\iprod{D_x - D_y, \Pi(x) - \Pi(y)}
    \end{align*}
    Now the conclusion will follow from the fact that
    \[
\iprod{D_x - D_y, \Pi(x) - \Pi(y)} \ge 0.
    \]
    This is because, by definition of $\Pi$,
    \[
	\Pi(x)
	= \argmin_{c \in \cC} \|x-c\|_2^2
	= \argmin_{p \in \R^n} \frac{1}{2}\|x-p\|_2^2 + \Ind_{\cC}(p)\mcom
    \]
where $\Ind_{\cC}(\cdot)$ is the convex function defined to be $\infty$ on elements not in $\cC$ and $0$ otherwise.
    From the strong convexity of the last expression the projection is unique.
    From the optimality conditions, it follows that $\Pi(x)$ is the unique point $p \in \R^n$ such that $x - p \in \partial \Ind_{\cC}(p)$, where $\partial \Ind_{\cC}(p)$ is the set of subgradients of $\Ind_{\cC}$ at $p$.

    By definition of the subgradient and by the convexity of $\Ind_{\cC}$, for any $p,q \in \R^n$ and for $g_p \in \partial\Ind_{\cC}(p)$,$g_q \in \partial\Ind_{\cC}(q)$,
    \begin{align*}
	\Ind_{\cC}(p) + \iprod{g_p,q-p} &\le \Ind_{\cC}(x)\\
	 \iprod{g_p ,q-p} &\le \Ind_{\cC}(q) - \Ind_{\cC}(p)\\
	 -\iprod{g_q ,q-p} &\le -\Ind_{\cC}(q) + \Ind_{\cC}(p)\\
	 \iprod{g_p - g_q, p-q} &\ge 0\mper
    \end{align*}
    Now, taking $p = \Pi(x)$ and $q = \Pi(y)$, we have that $D_x = x - \Pi(x) \in \partial \Ind_{\cC}(\Pi(x))$, and
    $D_y = y - \Pi(y) \in \partial \Ind_{\cC}(\Pi(y))$, so from the above,
    \begin{align*}
	\iprod{D_x - D_y, \Pi(x) - \Pi(y)}
	&\ge 0,
    \end{align*}
    as desired.
\end{proof}

\begin{fact}\label{fact:topeig}
    Let $v \in \R^n$, and suppose that $\|M - vv^{\top}\| \le \|M\| - \epsilon\|v\|^2$.
    Then if $u,w$ are the top unit left- and right-singular vectors of $M$,
    \[
	\iprod{u,v}^2 \ge \epsilon \cdot \|v\|^2 \quad \text{or}\quad \iprod{w,v}^2 \ge \epsilon \cdot \|v\|^2
    \]
\end{fact}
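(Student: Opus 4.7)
The plan is to exploit the variational characterization of the spectral norm together with the hypothesis via a one-line identity. Specifically, I will use that for any matrix $N$, $\|N\| = \sup_{\|x\|=\|y\|=1} x^\top N y$, so that if $u, w$ are top unit left- and right-singular vectors of $M$, then $u^\top M w = \|M\|$.

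First, I will apply this identity to $M - vv^\top$. Since $u$ and $w$ are unit vectors, the bilinear form $u^\top (M - vv^\top) w$ is bounded in absolute value by $\|M - vv^\top\|$. Expanding gives
\[
    u^\top M w - \iprod{u,v}\iprod{v,w} \le \|M - vv^\top\| \le \|M\| - \epsilon\|v\|^2,
\]
using the hypothesis in the last step. Since $u^\top M w = \|M\|$, this rearranges to
\[
    \iprod{u,v}\iprod{v,w} \ge \epsilon\|v\|^2.
\]

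Finally, I will conclude by a pigeonhole argument: if both $\iprod{u,v}^2 < \epsilon\|v\|^2$ and $\iprod{v,w}^2 < \epsilon\|v\|^2$, then $|\iprod{u,v}\iprod{v,w}| < \epsilon\|v\|^2$, contradicting the inequality above. Hence at least one of the two squared inner products is at least $\epsilon\|v\|^2$, as claimed. There is no real obstacle here; the only minor subtlety is making sure the signs work out, which they do because the product $\iprod{u,v}\iprod{v,w}$ is shown to be positive (in fact $\ge \epsilon\|v\|^2 \ge 0$), so the magnitudes behave as desired.
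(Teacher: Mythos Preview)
Your proof is correct and is essentially the same argument as the paper's: both evaluate the bilinear form $u^\top(M-vv^\top)w$ at the top singular vectors, use $u^\top M w = \|M\|$ together with the hypothesis to obtain $|\iprod{u,v}\iprod{v,w}| \ge \epsilon\|v\|^2$, and finish with the same pigeonhole step. The only cosmetic difference is that the paper carries absolute values through (using the reverse triangle inequality), whereas you work with the signed quantity directly; your version is in fact a hair cleaner.
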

\begin{proof}
   We have that
    \begin{align*}
	\|M\| - \epsilon \|v\|^2
	\ge |u^{\top}(M-vv^{\top})w|
	\ge |u^\top M w| - |u^\top v v^\top w|
	&= \|M\| - |\iprod{u,v}\iprod{w,v}|,
    \end{align*}
    where the second inequality is the triangle inequality.
    Rearranging, the conclusion follows.
\end{proof}

\begin{fact}\label{fact:projcont}
    If $A$ is an $n \times n$ symmetric matrix with eigendecomposition $\sum_{i\in[n]} \lambda_i u_i u_i^\top$ for orthonormal $u_1,\ldots,u_n \in \R^n$ and eigenvalues $\lambda_1\ge \cdots \ge \lambda_n$, then the projection of $A$ to the PSD cone is equal to $\sum_{i \in [n]} \Ind[\lambda_i \ge 0]\cdot \lambda_i u_i u_i^\top$.
\end{fact}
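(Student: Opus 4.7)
The plan is to reduce the projection to the diagonal case by exploiting the unitary invariance of the Frobenius norm, and then argue coordinate-wise. Let me sketch the three main steps.

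First, I would write $A = U\Lambda U^\top$ where $U$ has columns $u_1,\ldots,u_n$ and $\Lambda = \diag(\lambda_1,\ldots,\lambda_n)$. For any candidate $X$, set $Y \defeq U^\top X U$, so by the unitary invariance of the Frobenius norm,
\[
    \|A - X\|_F^2 = \|U(\Lambda - Y)U^\top\|_F^2 = \|\Lambda - Y\|_F^2.
\]
Conjugation by $U$ is a bijection of the PSD cone to itself (as $X \succeq 0 \iff U^\top X U \succeq 0$), so finding the projection of $A$ reduces to finding the PSD $Y$ minimizing $\|\Lambda - Y\|_F^2$. The claim translates to: the optimal $Y$ is the diagonal matrix $Y^\star$ with entries $Y^\star_{ii} = \max(\lambda_i, 0)$.

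Second, I would split the Frobenius norm into diagonal and off-diagonal contributions,
\[
    \|\Lambda - Y\|_F^2 = \sum_{i=1}^n (\lambda_i - Y_{ii})^2 + \sum_{i \neq j} Y_{ij}^2.
\]
The off-diagonal sum is nonnegative, and equals zero for $Y^\star$. For the diagonal sum, the PSD condition $Y \succeq 0$ implies $Y_{ii} \ge 0$ for every $i$, so the unconstrained minimum of $(\lambda_i - Y_{ii})^2$ over $Y_{ii} \ge 0$ is attained at $Y_{ii} = \max(\lambda_i, 0)$, matching $Y^\star_{ii}$. Hence $\|\Lambda - Y\|_F^2 \ge \|\Lambda - Y^\star\|_F^2$ for every PSD $Y$.

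Finally, I would verify feasibility of $Y^\star$: it is diagonal with nonnegative entries and hence PSD. Conjugating back, $U Y^\star U^\top = \sum_{i} \Ind[\lambda_i \ge 0]\cdot \lambda_i u_i u_i^\top$, which is the claimed projection. There is no real obstacle here; the only subtlety worth noting is that one cannot directly set off-diagonals to zero without checking feasibility, but $Y^\star$ being diagonal with nonnegative entries makes that feasibility immediate, which is what lets the lower bound on each summand be achieved simultaneously.
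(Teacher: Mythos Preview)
Your proof is correct and follows essentially the same approach as the paper's: both work in the eigenbasis of $A$ (you via conjugation $Y=U^\top X U$, the paper via choosing an orthonormal basis of $\R^{n^2}$ containing the $u_i\otimes u_i$), use the necessary condition $u_i^\top \hat A\, u_i\ge 0$ from positive semidefiniteness to lower-bound the distance coordinate-wise, and then verify that the candidate achieving this lower bound is itself PSD. Your conjugation framing is slightly cleaner, but the argument is the same.
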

\begin{proof}
Let $\hat A = A + B$ be the projection (in Frobenius norm) of $A$ to the PSD cone.
Because $u_1,\ldots,u_n$ are orthonormal, we may choose an orthonormal basis $V=\{v_i\}_{i=1}^{n^2}$ for $\R^{n^2}$ that includes $v_1 = u_1 \tensor u_1,v_2 = u_2 \tensor u_2,\ldots, v_n = u_n \tensor u_n$, as the first $n$ basis vectors.
Now, viewing $A,B$ as vectors in $\R^{n^2}$, we can write $A = \sum_{i=1}^n \lambda_i v_i$ for $\lambda_i$ the eigenvalues of $A$, and write $B = \sum_{i=1}^{n^2} \beta_i v_i$ for some scalars $\beta_1,\ldots,\beta_{n^2}$.

For any eigenvector $u_i$ of $A$, we have that $u_i^\top \hat A u_i = \iprod{v_i, A + B} = \lambda_i + \beta_i$ by the orthonormality of the $v_i$.
Therefore, if $\lambda_i < 0$ we must have $\beta_i \ge |\lambda_i|$, since $\hat A$ is PSD.
We also have that $\|A - \hat A\|_F^2 = \|B\|_F^2 = \sum_{i=1}^{n^2} \beta_i^2$, so $B = \sum_{i=1}^n \mathbb{I}[\lambda_i < 0] \cdot |\lambda_i|\cdot u_i u_i^\top$ minimizes the Frobenius norm of the difference, which (after checking to see that $A + B$ has all non-negative eigenvalues) concludes the proof.

\end{proof}

\end{document}